\DeclareMathOperator*{\argmax}{arg\,max}
\DeclareMathOperator*{\argmin}{arg\,min}
\theoremstyle{definition}
\renewcommand{\cite}{\citep}
\newcommand{\kl}{\textnormal{kl}}
\newtheorem{theorem}{Theorem}
\newtheorem{lemma}{Lemma}
\newtheorem{proposition}{Proposition}
\newtheorem{corollary}{Corollary}
\newcommand{\set}[1]{\mathcal{#1}}
\newcommand{\ep}{\hfill $\Box$}
\newcommand{\indi}{\mathbbm{1}}
\newcommand{\skl}{\textnormal{kl}}
\title{On the Sample Complexity of Representation Learning in Multi-task Bandits with Global and Local structure}
\author{%
Alessio Russo, Alexandre Proutiere
}
\begin{document}

\maketitle
\begin{abstract}
  % !TeX spellcheck = en_GB
% !TeX encoding = UTF-8
We investigate the sample complexity of learning the optimal arm for multi-task bandit problems. Arms consist of two components: one that is shared across tasks (that we call representation) and one that is task-specific (that we call predictor). The objective is to learn the optimal (representation, predictor)-pair for each task, under the assumption that the optimal representation is common to all tasks. Within this framework, efficient learning algorithms should transfer knowledge across tasks. We consider the best-arm identification problem for a fixed confidence, where, in each round, the learner actively selects both a task, and an arm, and observes the corresponding reward. We derive instance-specific sample complexity lower bounds satisfied by any $(\delta_G,\delta_H)$-PAC algorithm (such an algorithm identifies the best representation with probability at least $1-\delta_G$, and the best predictor for a task with probability at least $1-\delta_H$). We devise an algorithm \textsc{OSRL-SC} whose sample complexity approaches the lower bound, and scales at most as $H(G\log(1/\delta_G)+ X\log(1/\delta_H))$,   with $X,G,H$ being, respectively, the number of tasks, representations and predictors. By comparison, this scaling is significantly better than the classical  best-arm identification algorithm that scales as $HGX\log(1/\delta)$. The code can be found here \url{https://github.com/rssalessio/OSRL-SC}.

\end{abstract}

%\tableofcontents

% !TeX spellcheck = en_GB
% !TeX encoding = UTF-8
\section{Introduction}\label{sec1}

Learning from previous tasks and transferring this knowledge may significantly improve the process of learning new tasks. This idea, at the core of transfer learning \cite{pan2009survey,skinner1965science,woodworth1901influence}, lifelong learning \cite{thrun1996learning} and multi-task learning \cite{baxter2000model,caruana1995learning,caruana1997multitask}, has recently triggered considerable research efforts with applications in both supervised and reinforcement learning. Previous work on transfer and multi-task learning has mostly focused on batch learning problems \cite{lazaric2012transfer,pan2009survey}, where when a task needs to be solved, a training dataset is directly provided. Online learning problems, where samples for a given task are presented to the learner sequentially, have been much less studied \cite{taylor2009transfer, zhan2015online}. 
In this paper, we consider a multi-task Multi-Armed Bandit (MAB) problem, where the objective is to find the optimal arm for each task using the fewest number of samples, while allowing to transfer knowledge across tasks. The problem is modelled as follows: in each round, the learner actively selects a task, and then selects an arm from a finite set of arms. Upon selecting an arm, the learner observes a random reward from an unknown distribution that represents the performance of her action in that particular task.
To allow the transfer of knowledge across the various tasks, we study the problem for a simple, albeit useful model. We assume that the arms available to the learner consist of two components: one that is shared across tasks (that we call representation) and one that is task-specific (that we call predictor). Importantly, the optimal arms for the various tasks share the same representation. The benefit of using this model is that we can study the sample complexity of learning the best shared representation across tasks while learning the task-specific best action.
Contribution-wise, in this work we derive instance-specific sample complexity lower bounds satisfied by any $(\delta_G,\delta_H)$-PAC algorithm (such an algorithm identifies the best representation with probability at least $1-\delta_G$, and the best predictors with probability at least $1-\delta_H$). Again, our lower bounds can be decomposed into two components, one for learning the representation, and one for learning the predictors. We devise an algorithm, \textsc{OSRL-SC}, whose sample complexity approaches the lower bound, and scales at most as $H(G\log(1/\delta_G)+ X\log(1/\delta_H))$. Finally, we present numerical experiments to illustrate the gains in terms of  sample complexity one may achieve by transferring knowledge across tasks.

\paragraph{Related work.} Multi-task learning has been investigated under different assumptions on the way the learner interacts with tasks. One setting concerns batch learning (often referred to as learning-to-learn), where the training datasets for all tasks are available at the beginning \cite{baxter2000model,maurer2005algorithmic,maurer2009transfer,maurer2013sparse}. The so-called batch-within-online setting considers that tasks arrive sequentially, but as soon as a task arrives, all its training samples are available \cite{balcan2015efficient,pentina2015multi,pentina2016lifelong,alquier2016regret}. Next, in the online multi-task learning  \cite{agarwal2008matrix,abernethy2007multitask,dekel2007online,cavallanti2010linear,saha2011online, lugosi2009online,murugesan2016adaptive,yang2020impact}, in each round, the learner observes a new sample for each task, which, in some cases, this may not be possible. Our framework is different as in each round the learner  can only select a single task. This framework has also been considered in \cite{lazaric2013sequential,soare2014multi,soare2015sequential,alquier2016regret,wu2019lifelong}, but typically there, the learner faces the same task for numerous consecutive rounds, and she cannot select the sequence of tasks. Also, note that the structure tying the reward functions of the various tasks together is different from ours. The structure tying the rewards of actions for various contexts is typically linear, and it is commonly assumed that there exists a common low-dimensional representation, or latent structure, to be exploited \cite{soare2014multi,soare2015sequential,deshmukh2017multi,kveton2017stochastic,hao2019adaptive,lale2019stochastic,yang2020impact,lu2021low}, or that the reward is smooth across tasks and/or arms \cite{magureanu2014lipschitz,slivkins2014contextual}. The aforementioned papers address scenarios where the context sequence is not controlled, and investigate regret.  Meta-learning is also closely connected to meta-learning \cite{cella2020meta,kveton2021meta,azizi2022meta}. In \cite{azizi2022meta} the authors investigate  the problem of simple regret minimization in a fixed-horizon when tasks are sampled i.i.d. from some unknown distribution. To our knowledge, this paper is the first to study how tasks should be scheduled towards a sample-optimal instance-specific best-arm identification algorithm.

% !TeX spellcheck = en_GB
% !TeX encoding = UTF-8
\section{Model and assumptions}\label{sec3}
In this section, we describe the analytical model of the multi-task MAB problem considered, and describe the framework of best-arm identification for this class of multi-task MAB models.
\begin{figure}[t]
	\centering
	\includegraphics[width=0.5\linewidth]{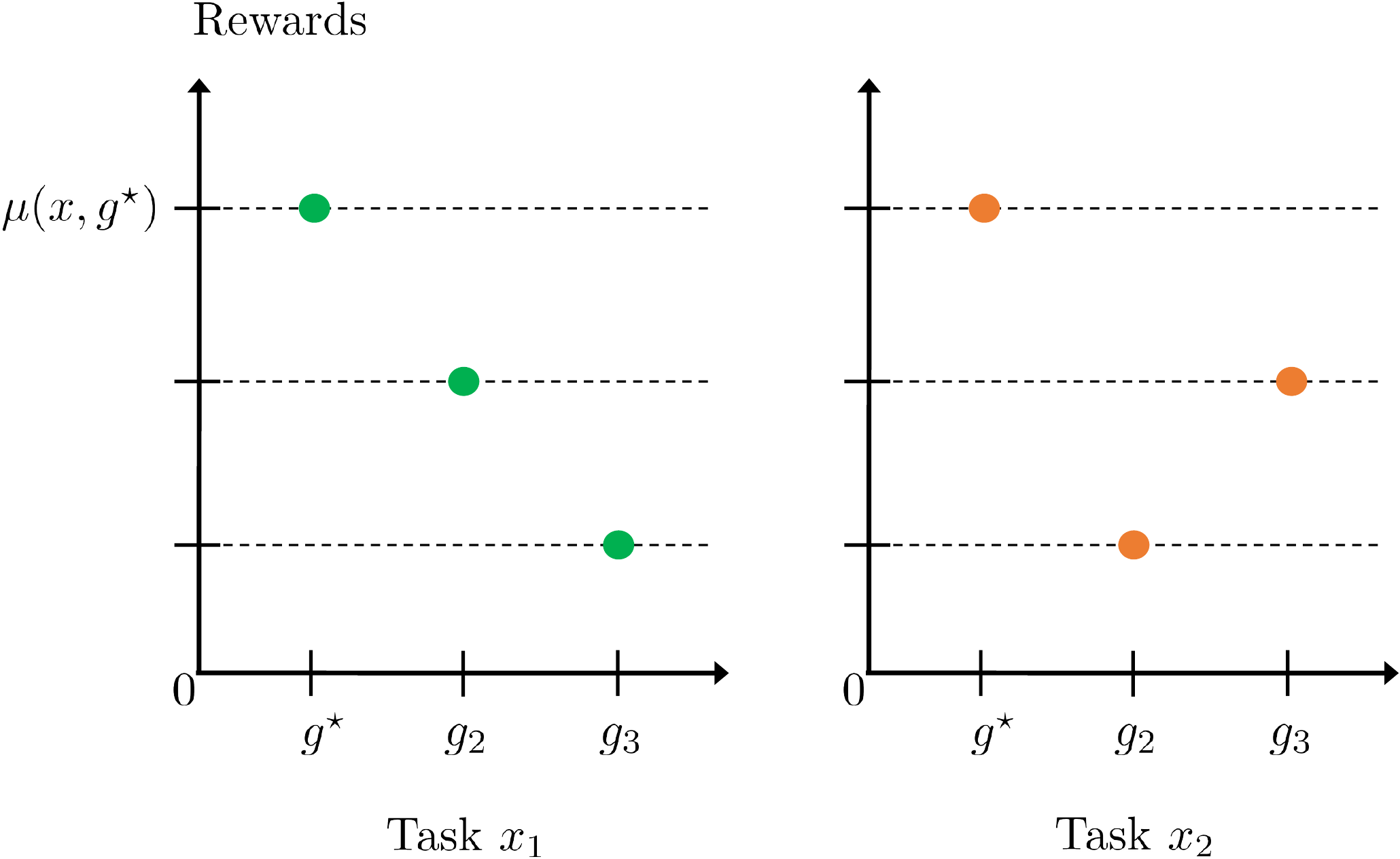}
	\caption{Example of two symmetric tasks $x_1$ and $x_2$, where learning the optimal representation $g^\star$ can be accelerated by considering both tasks, instead of focusing only on a single task. Task $x_1$ can be used to learn that $g_3$ is suboptimal, while task $x_2$ can be used to learn that $g_2$ is suboptimal.}
	\label{fig:symmetric_tasks}
\end{figure}
\paragraph{Model.} We consider multi-task MAB problems with  a finite set ${\cal X}$ of $X$ tasks. In each round $t\ge 1$, the learner chooses a  task $x\in {\cal X}$ and an arm $(g,h)\in {\cal G}\times {\cal H}$. The components $g$ and $h$ are, respectively, referred to as the {\it representation} and the {\it predictor}. When in round $t$, $x(t)=x$ and the learner selects $(g,h)$, she collects a binary reward $Z_{t}(x,g,h)$ of mean $\mu(x,g,h)$ (for the sake of the analysis we only analyse the binary case, although it can be easily extended to the Gaussian case as in \cite{garivier2016optimal}).  The rewards are $i.i.d.$ across rounds, tasks, and arms. Consequently, the system is characterized by $\mu=(\mu(x,g,h))_{x,g,h}$, which is  unknown to the learner.
 The main assumption made throughout the paper is that tasks share a common optimal representation $g^\star$: for any task $x\in {\cal X}$, there is a predictor $h_x^\star$ such that $(g^\star,h_x^\star)$ yields an optimal reward. Formally,  
\begin{equation}\label{eq:dominate}
\forall x\in {\cal X},\quad \mu(x,g^\star,h^\star_x)> \max_{(g,h)\neq(g^\star,h_x^\star)}\mu(x,g,h).
\end{equation}
 Moreover, note that there is no assumption on the smoothness of $\mu$ with respect to $(x,g,h)$.
This type of model represents the case where a learner can actively choose the task to execute (as if a generative model is available to the learner), and in this way maximize the learning efficiency by accurately picking tasks to reduce the sample complexity. Since the model is quite generic, it can be applied to a variety of problems where a collection of tasks have a local component, and a shared global component: (i) 
influence mechanisms with global/local groups; (ii) hyperparameters learning across multiple tasks; (iii) for advanced clinical trials, where, depending on the group of patients (tasks, that vary according to factors such as age, severity of the disease, etc.), different drugs and dosages can be used for inoculation ($g$ and $h$).

\paragraph{Sample complexity.} The objective of the learner is to devise a policy  that learns the best arms $(g^\star,h_1^\star,\ldots,h_X^\star)$ with the least number of samples.  Here, a policy $\pi$ is defined as follows. Let ${\cal F}_t^\pi$ denote the $\sigma$-algebra generated by the observations made under $\pi$ up to and including round $t$. Then $\pi$ consists of (i) a sampling rule: in each round $t$, $\pi$ selects a ${\cal F}_{t-1}$-measurable task $x^\pi(t)$ and an arm $(g^\pi(t),h^\pi(t))$; (ii) a stopping rule defined by $\tau$, which is a stopping time w.r.t. the filtration $({\cal F}_t)_{t\ge 1}$; (iii) a decision rule returning a ${\cal F}_{\tau}$-measurable estimated best arm for each task $(\hat{g}, \hat{h}_1,\ldots,\hat{h}_X)$.  Then, the performance of a policy $\pi$ is assessed through its PAC guarantees and its expected sample complexity $\mathbb{E}[\tau]$. PAC guarantees concern both learning $g^\star$ and $({h}_1^\star,\ldots,{h}_X^\star)$. Denote by ${\cal M}=\{\mu : \exists (g^\star,h_1^\star,\ldots,h_X^\star) :  \hbox{\cref{eq:dominate} holds} \}$ the set of systems where tasks share a common optimal  representation. Then, we say that $\pi$ is $(\delta_G,\delta_H)$-PAC if for all $\mu\in {\cal M}$, 
\begin{align}
    &\mathbb{P}_{\mu}(\tau<\infty)=1, \ \ \mathbb{P}_{\mu}\left(\hat{g} \neq g^\star\right) \leq \delta_G,\hbox{ and }\\
    &\mathbb{P}_{\mu}\left(\hat{h}_{x}\neq h_{x}^\star, \hat{g}=g^\star \right) \leq \delta_H, \quad \forall x \in \mathcal{X}.
\end{align}

%In the above definition,  $\mathbb{P}_{\mu}$ indicates that the rewards are generated under the parameters $\mu$. 
%We wish to devise a $(\delta_G,\delta_H)$-PAC policy with minimal expected sample complexity.

% Finally, we 
% $  In the following, we will denote by $\skl(a,b)$ the KL divergence between two Bernoulli distributions of respective means $a$ and $b$.

%In this section, we first discuss the sample complexity lower bound, and then provide the proofs of Theorem \cref{thm:full_lower_bound_offline}, and of Propositions \cref{pro:1}-\cref{pro:2}. 

% !TeX spellcheck = en_GB
% !TeX encoding = UTF-8
\section{Sample complexity lower bound and the \textsc{OSRL-SC} algorithm}\label{sec5}
This section is devoted to the best-arm identification problem for the model considered in this work. We first derive a lower bound for the expected sample complexity of any  $(\delta_G,\delta_H)$-PAC algorithm, and then present an algorithm approaching this limit. In what follows, we let $\boldsymbol{\delta}=(\delta_G,\delta_H)$.
\subsection{Sample complexity lower bound}
We begin by illustrating the intuition behind the sample complexity lower bound, and then state the lower bound theorem. 
To identify the optimal representation $g^\star$ in a  task as quickly as possible, an algorithm should be able to perform \textit{information refactoring}, i.e., to use all the available information across tasks to estimate $g^\star$. To illustrate this concept, we use the model illustrated in  \cref{fig:symmetric_tasks}. In this model, there are only $2$ tasks $x_1,x_2$,  and $3$ arms in $\set{G}$ (and only $1$ arm in $\set{H}$, thus it can be ignored). For this model, to learn that $g_3$ is sub-optimal, we should mainly sample task $x_1$, since the gap between the rewards of $g^\star$ and $g_3$ is the largest. Similarly, learning that $g_2$ is sub-optimal should be mainly obtained by sampling task $x_2$. Using the same task, to infer that $g_2$ and $g_3$ are sub-optimal, would be much less efficient. This observation also motivates why it is inefficient to consider tasks separately, even in the case where $\mu$ is highly non-smooth with respect to $(x,g,h)$, and also motivates the expression of the sample complexity lower bound that we now present.

\paragraph{Sample complexity lower bound.} 
Computing the sample complexity lower bound amounts to finding the lower bound of a statistical hypothesis testing problem, which is usually done by finding what is the most confusing model. In this case the lower bound is given by the solution of the following optimization problem.
\begin{theorem}\label{thm:full_lower_bound_offline}
The sample complexity $\tau_{\boldsymbol{\delta}}$ of any $\boldsymbol{\delta}$-PAC algorithm satisfies:  $\mathbb{E}_\mu[\tau_{\boldsymbol{\delta}}] \geq K^{\star}(\mu, \boldsymbol{\delta})$ for any $\mu\in \set{M}$, where $K^{\star}(\mu, \boldsymbol{\delta})$ is the value of the optimization problem\footnote{Refer to the appendix for all the proofs.}:
\begin{align}
\min_{\eta}\quad & \sum_{x,g,h}\eta (x,g,h)\\
\textrm{s.t.} \quad &  \min_{h\neq h_x^{\star}} f_\mu(\eta,x,h)\geq \skl(\delta_H, 1-\delta_H)\quad \forall x,\label{eq:cons1}\\
 &\min_{\bar g\neq g^{\star}} \sum_{x}  \min_{\bar h_x}\ell_\mu(\eta,\bar g, \bar h_x) \geq \skl(\delta_G, 1-\delta_G).\label{eq:cons2}
\end{align}
\end{theorem}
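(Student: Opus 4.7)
The plan is to follow the now-standard change-of-measure strategy of Garivier and Kaufmann, adapted to the two-layer structure of our PAC guarantee. Let $N_{x,g,h}(\tau)$ denote the number of pulls of arm $(g,h)$ under task $x$ before stopping. The key tool is the transportation inequality: for any alternative instance $\mu'\in\set{M}$, for any $\set{F}_\tau$-measurable event $\set{E}$,
\begin{equation*}
\sum_{x,g,h} \Ex_\mu[N_{x,g,h}(\tau)]\,\skl\bigl(\mu(x,g,h),\mu'(x,g,h)\bigr) \;\geq\; \skl\bigl(\Pr_\mu(\set{E}),\Pr_{\mu'}(\set{E})\bigr),
\end{equation*}
which is obtained from the log-likelihood-ratio decomposition together with Wald's identity and the data-processing inequality. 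After dividing by $\Ex_\mu[\tau]$ and setting $\eta(x,g,h)=\Ex_\mu[N_{x,g,h}(\tau)]/\Ex_\mu[\tau]$, the bound $\Ex_\mu[\tau]\geq K^\star(\mu,\boldsymbol\delta)$ will follow as soon as I show that any $(\delta_G,\delta_H)$-PAC allocation $\eta$ must satisfy constraints \eqref{eq:cons1} and \eqref{eq:cons2}.

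Next I would split the set of confusing alternatives into two families corresponding to the two PAC guarantees. Family $B_x$, for each task $x$: take $\mu'\in\set{M}$ that agrees with $\mu$ except at arms $(x,g^\star,h_x^\star)$ and $(x,g^\star,h)$ for some $h\neq h_x^\star$, chosen so that $(g^\star,h)$ becomes the unique best arm for task $x$ while keeping $g^\star$ globally optimal for the other tasks. Applied to the event $\set{E}=\{\hat g=g^\star,\hat h_x=h_x^\star\}$, the PAC guarantees give $\Pr_\mu(\set{E})\geq 1-\delta_H$ (up to the $\delta_G$ correction, which one absorbs by also requiring $\hat g=g^\star$ on the relevant side), and $\Pr_{\mu'}(\set{E})\leq \delta_H$. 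Optimizing the left-hand side over the continuous parameters of $\mu'$ (which is a scalar optimization on the two arm means subject to the ordering constraint) yields exactly the function $f_\mu(\eta,x,h)$, and taking the minimum over $h\neq h_x^\star$ gives \eqref{eq:cons1} with right-hand side $\skl(\delta_H,1-\delta_H)$.

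Family $A$, for the representation: fix $\bar g\neq g^\star$ and build $\mu'\in\set{M}$ in which $\bar g$ is the common optimal representation. Because the reward structure is unconstrained across tasks, for each task $x$ one is free to pick any predictor $\bar h_x$ to be optimal under $\mu'$, and one must only modify the means of $(x,g^\star,h_x^\star)$ and $(x,\bar g,\bar h_x)$ (or whichever pair minimizes the KL cost). Applying the transportation lemma with $\set{E}=\{\hat g=g^\star\}$ gives $\Pr_\mu(\set{E})\geq 1-\delta_G$ and $\Pr_{\mu'}(\set{E})\leq \delta_G$, so the right-hand side is $\skl(\delta_G,1-\delta_G)$. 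The KL cost aggregates additively over tasks (since only per-task arms are perturbed and the task selection is part of the filtration), and minimizing over the free choice of $\bar h_x$ per task produces $\sum_x \min_{\bar h_x}\ell_\mu(\eta,\bar g,\bar h_x)$; taking the worst $\bar g\neq g^\star$ yields \eqref{eq:cons2}. Combining the two families, $\eta$ is feasible for the stated LP, and minimizing $\sum_{x,g,h}\eta(x,g,h)=1$ (rescaled by $\Ex_\mu[\tau]$) gives the announced lower bound.

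The main obstacle I anticipate is the proper handling of the asymmetric PAC guarantee for the predictor constraint: unlike the usual best-arm identification setup, the event $\{\hat h_x\neq h_x^\star\}$ is only controlled jointly with $\{\hat g=g^\star\}$, so the choice of the event $\set{E}$ in the transportation inequality must be engineered so that $\Pr_\mu(\set{E})$ and $\Pr_{\mu'}(\set{E})$ give the clean $\skl(\delta_H,1-\delta_H)$ bound rather than a loose $\skl(\delta_G+\delta_H,\cdot)$. A secondary, but routine, difficulty is to verify that the built alternatives lie in $\set{M}$, i.e.\ that the perturbed means retain a common optimal representation; this is where the per-task flexibility to choose $\bar h_x$ is essential and explains the inner minimum in \eqref{eq:cons2}.
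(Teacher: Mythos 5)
Your overall architecture is the paper's: the transportation inequality of Kaufmann et al., normalization by $\Ex_\mu[\tau]$, and a split of the confusing alternatives into a family that preserves $g^\star$ (yielding \eqref{eq:cons1}) and a family that changes $g^\star$ (yielding \eqref{eq:cons2}) — this matches the paper's decomposition of $\Lambda(\mu)$ into $\Lambda_1^\mu$ and $\Lambda_2^\mu$. Your Family $B_x$ treatment, including the event $\set{E}=\{\hat h_x\neq h_x^\star,\hat g=g^\star\}$ and the two-arm scalar optimization producing $f_\mu(\eta,x,h)$, is essentially the paper's handling of $\Lambda_1^\mu(x,h)$ via Lemma 3 of Garivier and Kaufmann.

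Family $A$, however, contains a genuine gap. You assert that to make $\bar g$ the common optimal representation one "must only modify the means of $(x,g^\star,h_x^\star)$ and $(x,\bar g,\bar h_x)$". That two-point perturbation moves both means toward a common value $d_\alpha$ lying strictly between them, so any arm $(g,h)$ of task $x$ whose mean under $\mu$ exceeds $d_\alpha$ — for instance arms with $g\notin\{g^\star,\bar g\}$ interposed between $\mu(x,\bar g,\bar h_x)$ and $\mu(x,g^\star,h_x^\star)$, or arms $(g^\star,h)$ with $h\neq h_x^\star$ — still dominates $(\bar g,\bar h_x)$ in the perturbed model. The alternative therefore fails to lie in $\set{M}$, and the transportation inequality applied to it does not constrain a $\boldsymbol{\delta}$-PAC algorithm at all. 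Quantitatively, your construction yields the per-task cost $(\eta(x,g^\star,h_x^\star)+\eta(x,\bar g,\bar h_x))\,I_{\alpha_{x,\bar g,\bar h_x}}(\mu(x,g^\star,h_x^\star),\mu(x,\bar g,\bar h_x))$, which is a \emph{lower} bound on the true infimum $\ell_\mu(\eta,\bar g,\bar h_x)$ over valid alternatives — this is precisely the inequality exploited in the paper's proof of Proposition \ref{pro:2}. Imposing the corresponding constraint is therefore strictly stronger than what PAC-ness implies, the feasible set shrinks, and the resulting optimization value can exceed $K^\star(\mu,\boldsymbol{\delta})$: your "lower bound" would be unsound, not merely loose. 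The step you dismiss as "routine" — verifying membership in $\set{M}$ — is in fact the crux of this theorem, as the paper itself emphasizes when contrasting with classical best-arm identification: the cheapest valid confusing model must respect the ordering constraints $\lambda(x,g,h)\leq\lambda(x,\bar g,\bar h_x)$ for \emph{all} $(g,h)$ simultaneously, and solving that constrained minimization (the paper's Lemma \ref{lemma:second_ineq_solution_inner_opt}) pools every arm in the recursively defined set $\set{U}_{x,\bar g,\bar h_x}^{\eta,\mu}$ at the weighted mean $m\left(x,\eta,\set{U}_{x,\bar g,\bar h_x}^{\eta,\mu}\right)$ — an isotonic-regression-type solution touching potentially many arms. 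This pooling is exactly what generates the function $\ell_\mu$ appearing in \eqref{eq:cons2}; the per-task freedom to choose $\bar h_x$ does not remove the obstruction, since the interposed arms can belong to representations other than $g^\star$ and $\bar g$. Without this step your argument establishes neither the stated constraint nor a valid substitute for it.
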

\noindent where  in the first constraint $f_\mu(\eta,x,h) =(\eta (x,g^{\star},h_x^{\star})+\eta (x,g^{\star},h) )I_{\alpha_{x,g^{\star},h}}(\mu(x,g^{\star},h_x^{\star}), \mu(x,g^{\star},h))$ accounts for the difficulty of learning the best predictor $h_x^\star$ for each task $x$. The term $I_\alpha (\mu_1,\mu_2)=\alpha \skl (\mu_1, d_\alpha(\mu_1,\mu_2))+(1-\alpha) \skl (\mu_2, d_\alpha(\mu_1,\mu_2))$ is a generalization of  the Jensen-Shannon divergence, with $d_\alpha(\mu_1,\mu_2)=\alpha\mu_1+(1-\alpha) \mu_2$, and $ \alpha\in[0,1]$. Finally, the term $\alpha_{x,g,h} \coloneqq \eta (x,g^{\star},h_x^{\star})/(\eta (x,g^{\star},h_x^{\star})+\eta (x,g,h))$  represents the proportion of time  $(x,g^\star, h_x^\star)$ is picked over $(x,g,h)$.

The second constraint, $\ell_\mu(\eta,\bar g,\bar h_x)$ accounts for the difficulty of learning the optimal $g^\star$. To define it, let the average reward over $\mathcal{C} \subseteq \mathcal{G}\times\mathcal{H}$ for some task $x$ and allocation $\eta$ to be defined as 
\begin{equation}
    m(x,\eta,\mathcal{C}) = \frac{\sum_{(g,h)\in \mathcal{C}} \eta(x,g,h)\mu(x,g,h)}{\sum_{(g,h)\in \mathcal{C}} \eta(x,g,h)}.
\end{equation}
Then, $\ell_\mu(\eta,\bar g,\bar h_x)$  is given by:
\small
\begin{equation}
     \ell_\mu(\eta,\bar g, \bar h_x)=\smashoperator{\sum_{(g,h)\in \set{U}_{x,\bar g, \bar h_x}^{\eta,\mu}}} \eta(x,g,h) \skl\left(\mu(x,g,h), m\left(x,\eta, \mathcal{U}_{x,\bar g, \bar h_x}^{\eta,\mu}\right)\right),
\end{equation}\normalsize
where $\skl(a,b)$ is the KL divergence between two Bernoulli distributions of respective means $a$ and $b$, and  $ \set{U}_{x,\bar g, \bar h_x}^\mu$   is the set of confusing arms for task $x$.  Then $m\left(x,\eta, \mathcal{U}_{x,\bar g, \bar h_x}^{\eta,\mu}\right)$ represents the average reward of the confusing model for estimating $g^\star$.

The set $\set{U}_{x,\bar g, \bar h_x}^{\eta,\mu}$ is defined by through $\mathcal{N}_{x,g,h; \bar g, \bar h_x}^\mu$, which is the set of arms whose mean is bigger than $\mu(x,g,h)$ and that also include $(\bar g, \bar h_x)$, i.e.
$
{\cal N}_{x,g,h; \bar g, \bar h_x}^\mu= \{(g',h') : \mu(x,g',h')\geq \mu(x,g,h) \} \cup \{(\bar g, \bar h_x)\}
$.

Then, the set of confusing arms $\set{U}_{x,\bar g, \bar h_x}^{\eta,\mu} $ is given by 
\begin{align*}
    \set{U}_{x,\bar g, \bar h_x}^{\eta,\mu} = &\left\{(g,h) :  \mu(x,g,h) \geq m\left(x,\eta, {\cal N}_{x,g,h; \bar g, \bar h_x}^\mu\right) \right\}\\&\qquad\cup \{(\bar g, \bar h_x)\}.
\end{align*}
% $\mathcal{N}_{x,g,h}^\mu= \{(g',h') \in \mathcal{G}\times \mathcal{H}: \mu(x,g',h')\geq \mu(x,g,h) \} \cup \{(\bar g, \bar h_x)\}$. Then, one can directly verify that the solution of the previous optimization problem is given by

% \[

% \left\{
% \begin{array}{l}
% \set{U}_{x,\bar g, \bar h_x}^\eta = \left\{(g,h) :  \mu(x,g,h) \geq \frac{\sum_{(g',h') \in {\cal N}_{x,g,h}} \eta(x,g',h')\mu(x,g',h')} {\sum_{(g',h') \in {\cal N}_{x,g,h}} \eta(x,g',h')} \right\}\cup \{(\bar g, \bar h_x)\}, \\
% {\cal N}_{x,g,h}= \{(g',h') : \mu(x,g',h')\geq \mu(x,g,h) \} \cup \{(\bar g, \bar h_x)\}. 
% \end{array}
% \right.
% \]
\begin{figure}[t]
	\centering
	\includegraphics[width=0.7\linewidth]{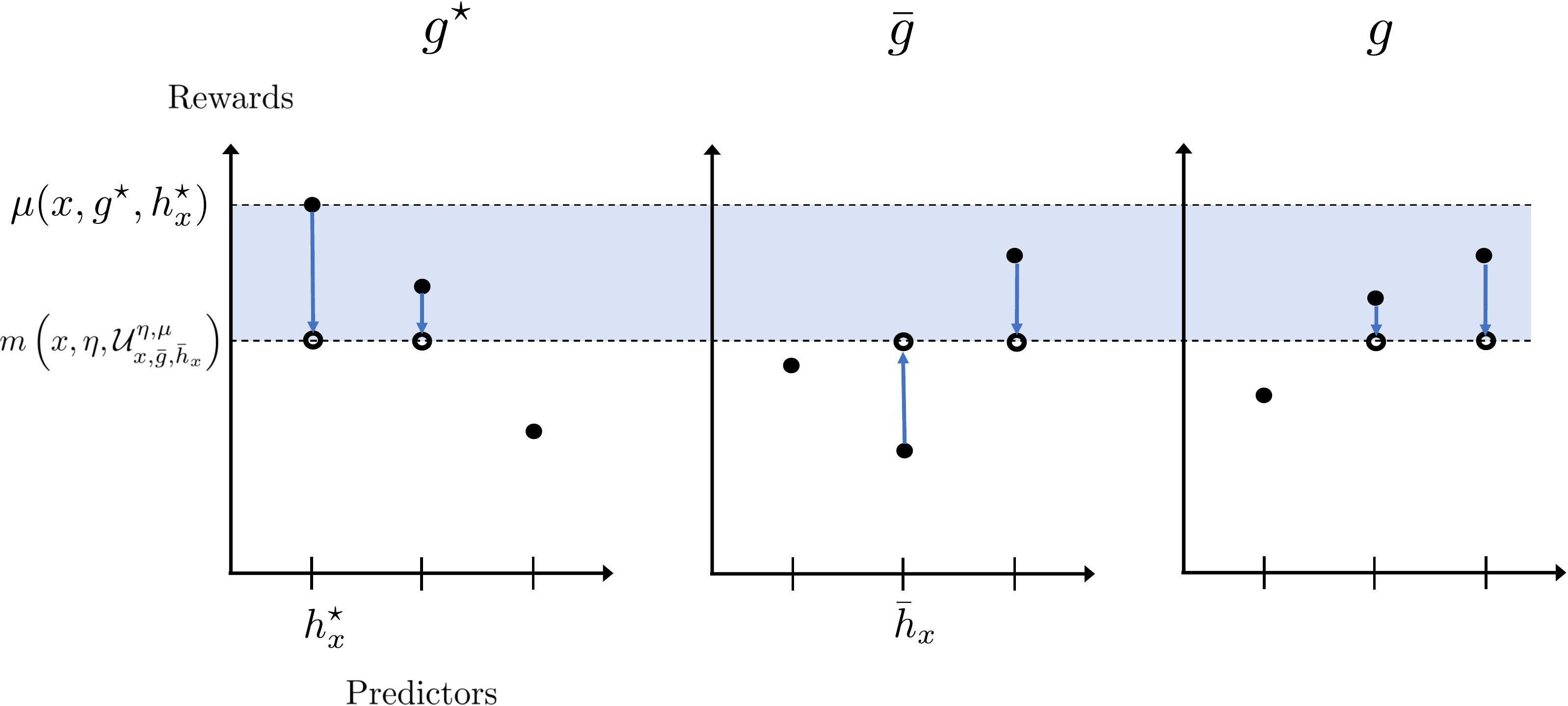}
	\caption{Example of the set  $\set{U}_{x,\bar g, \bar h_x}^{\eta,\mu}$. All points in the shadowed area are in the set. All arms with average reward above $m\left(x,\eta, \mathcal{U}_{x,\bar g, \bar h_x}^{\eta,\mu}\right)$  belong to $\set{U}_{x,\bar g, \bar h_x}^{\eta,\mu}$ (also $(\bar{g},\bar{h}_x)\in \set{U}_{x,\bar g, \bar h_x}^{\eta,\mu}$).}
	\label{fig:computation_confusing_problem}
\end{figure}
In \cref{thm:full_lower_bound_offline} $\eta(x,g,h)$ can be interpreted as the minimal expected number of times any $\boldsymbol{\delta}$-PAC algorithm should select $(x,g,h)$.  \cref{eq:cons1} corresponds to the constraints on $\eta$ one has to impose so that the algorithm learns the optimal predictor $h_x^\star$ for all $x$, while \cref{eq:cons2} is needed to ensure that the algorithm identifies the best representation $g^\star$ across all tasks. Both \cref{eq:cons1} and \cref{eq:cons2} define two convex sets in terms of $\eta$, and hence $K^{\star}(\mu, \boldsymbol{\delta})$ is the value of a convex program.

Observe that the constraints \cref{eq:cons1} and \cref{eq:cons2} share the components of $\eta$ that concern $g^\star$ only. We believe that actually separating the problem into two problems, one for each constraint, as formulated in the proposition below yields a tight upper bound of $K^\star(\mu,\boldsymbol{\delta})$.

\begin{proposition}\label{pro:1}
We have $K^\star(\mu,\boldsymbol{\delta})\le K_H^\star(\mu,\delta_H) + K_G^\star(\mu,\delta_G)$, where $K_H^\star(\mu,\delta_H)$ (resp. $K_G^\star(\mu,\delta_G)$) is the value of the problem:   $\min_{\eta\geq 0} \sum_{x,g,h}\eta(x,g,h)$ subject to (\cref{eq:cons1}) (resp. (\cref{eq:cons2})). 
\end{proposition}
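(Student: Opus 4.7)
The plan is to combine optimal solutions of the two decoupled problems and leverage a monotonicity property of both constraint functions in $\eta$. Concretely, let $\eta_H^\star$ be an optimal feasible solution of the problem defining $K_H^\star(\mu,\delta_H)$ (satisfying only \cref{eq:cons1}) and $\eta_G^\star$ be an optimal feasible solution of the problem defining $K_G^\star(\mu,\delta_G)$ (satisfying only \cref{eq:cons2}). The candidate solution for the joint problem is $\eta \coloneqq \eta_H^\star + \eta_G^\star \geq 0$, whose total mass equals $K_H^\star(\mu,\delta_H)+K_G^\star(\mu,\delta_G)$. The goal reduces to showing that $\eta$ is feasible for the joint problem of \cref{thm:full_lower_bound_offline}.

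First, I would rewrite both $f_\mu$ and $\ell_\mu$ as infima of linear functionals of $\eta$ with non-negative coefficients, which makes monotonicity in $\eta$ transparent. For the predictor constraint, a standard identity gives
\begin{equation*}
f_\mu(\eta,x,h) = \inf_{z\in[0,1]} \bigl[\eta(x,g^\star,h_x^\star)\,\skl(\mu(x,g^\star,h_x^\star),z) + \eta(x,g^\star,h)\,\skl(\mu(x,g^\star,h),z)\bigr],
\end{equation*}
with the infimum attained at $z=d_{\alpha_{x,g^\star,h}}$. For the representation constraint, $\ell_\mu(\eta,\bar g,\bar h_x)$ is the infimum over alternative Bernoulli means $\lambda$ on task $x$ satisfying $\lambda(x,\bar g,\bar h_x) \geq \lambda(x,g,h)$ for all $(g,h)$, of $\sum_{g,h}\eta(x,g,h)\,\skl(\mu(x,g,h),\lambda(x,g,h))$. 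The ``water-filling'' KKT analysis of this program recovers exactly the set $\set{U}_{x,\bar g, \bar h_x}^{\eta,\mu}$ as the level set where $\lambda$ is raised to the pooled mean $m(x,\eta,\set{U}_{x,\bar g,\bar h_x}^{\eta,\mu})$, matching the expression in the theorem.

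From these infimum representations, monotonicity is immediate: if $\eta' \geq \eta$ componentwise, then for every fixed $z$ (resp.\ $\lambda$) the objective increases, so the infimum does too. Hence $f_\mu(\cdot,x,h)$ and $\ell_\mu(\cdot,\bar g,\bar h_x)$ are non-decreasing in $\eta$. Applying this to $\eta = \eta_H^\star + \eta_G^\star$ yields
\begin{equation*}
f_\mu(\eta,x,h) \geq f_\mu(\eta_H^\star,x,h) \geq \skl(\delta_H,1-\delta_H) \qquad \forall x,\ \forall h\neq h_x^\star,
\end{equation*}
so \cref{eq:cons1} holds; and for every $\bar g\neq g^\star$,
\begin{equation*}
\sum_{x}\min_{\bar h_x}\ell_\mu(\eta,\bar g,\bar h_x) \geq \sum_{x}\min_{\bar h_x}\ell_\mu(\eta_G^\star,\bar g,\bar h_x) \geq \skl(\delta_G,1-\delta_G),
\end{equation*}
so \cref{eq:cons2} holds. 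Thus $\eta$ is feasible for the joint program and the conclusion $K^\star(\mu,\boldsymbol{\delta}) \leq K_H^\star(\mu,\delta_H) + K_G^\star(\mu,\delta_G)$ follows by the definition of $K^\star$ as a minimum.

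The main obstacle is the variational reformulation of $\ell_\mu$: the confusing set $\set{U}_{x,\bar g,\bar h_x}^{\eta,\mu}$ itself depends on $\eta$ through the pooled mean $m(x,\eta,\set{U})$, so direct verification of monotonicity from the closed-form expression is awkward. The cleanest route is to identify $\ell_\mu$ with the KL-projection onto the convex set of alternatives making $(\bar g,\bar h_x)$ dominant on task $x$, for which monotonicity in $\eta$ is automatic. Everything else—optimality of $\eta_H^\star,\eta_G^\star$, non-negativity and additivity of the objective—is routine.
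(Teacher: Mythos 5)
Your proof is correct and takes essentially the same route as the paper's: the paper likewise sums feasible solutions of the two decoupled programs and notes that, since each constraint functional is a non-negative combination of KL divergences, the sum remains feasible for the joint program. Your extra care in handling the monotonicity of $\ell_\mu$ via its variational (KL-projection) representation is exactly what the paper's Lemma 1 in the appendix implicitly supplies, so the two arguments coincide in substance.
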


Note that $K_H^\star(\mu,\delta_H)$ scales as $HX \kl(\delta_H,1-\delta_H)$ (since the corresponding optimization problem is that obtained in a regular bandit problem for each task \cite{garivier2016optimal}, which scales as $H\kl(\delta_H, 1-\delta_H)$ for each task). Now, to know how $K_G^\star(\mu,\delta_G)$ scales, we can further derive an upper bound of $K_G^\star(\mu,\delta_G)$. 

\begin{proposition}\label{pro:2}
We have $K_G^\star(\mu,\delta_G) \le L_G^\star(\mu,\delta_G)$ where $L_G^\star(\mu,\delta_G)$ is the value of the optimization problem:   $\min \sum_{x,g,h}\eta(x,g,h)$ over $\eta\ge 0$ satisfying for all $\bar{g}\neq g^\star$, 
\[\max_x\min_{\bar{h}_x}(\eta(x,g^\star,h_x^\star)+\eta(x,\bar{g},\bar{h}_x))I_{\alpha_{x,\bar{g},\bar{h}_x}}(\mu(x,g^\star,h_x^\star),\mu(x,\bar{g},\bar{h}_x))\ge \kl(\delta_G,1-\delta_G).\]
\end{proposition}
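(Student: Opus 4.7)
The plan is to show that every $\eta$ feasible for the problem defining $L_G^\star(\mu,\delta_G)$ is also feasible for the problem defining $K_G^\star(\mu,\delta_G)$; since the two problems share the objective $\sum_{x,g,h}\eta(x,g,h)$, this immediately gives $K_G^\star\le L_G^\star$. Fix such an $\eta$ and an arbitrary $\bar g\neq g^\star$; the goal is to verify constraint \cref{eq:cons2} for this $\bar g$.

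The heart of the argument is a pointwise lower bound valid for every task $x$ and every $\bar h_x$:
$$\ell_\mu(\eta,\bar g,\bar h_x)\ \ge\ \big(\eta(x,g^\star,h_x^\star)+\eta(x,\bar g,\bar h_x)\big)\, I_{\alpha_{x,\bar g,\bar h_x}}\!\big(\mu(x,g^\star,h_x^\star),\mu(x,\bar g,\bar h_x)\big).$$
I would establish this in three sub-steps. First, unwind the definitions to show $\{(g^\star,h_x^\star),(\bar g,\bar h_x)\}\subseteq \mathcal{U}_{x,\bar g,\bar h_x}^{\eta,\mu}$: the pair $(\bar g,\bar h_x)$ is in $\mathcal{U}$ by construction, and strict optimality \cref{eq:dominate} forces $\mathcal{N}_{x,g^\star,h_x^\star;\bar g,\bar h_x}^\mu = \{(g^\star,h_x^\star),(\bar g,\bar h_x)\}$, whose $\eta$-weighted average of $\mu$-values is a convex combination of $\mu(x,g^\star,h_x^\star)$ and the strictly smaller $\mu(x,\bar g,\bar h_x)$, hence does not exceed $\mu(x,g^\star,h_x^\star)$. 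Second, since every term in the defining sum of $\ell_\mu$ is a nonnegative KL divergence, drop all indices outside this two-point set. Third, invoke the standard fact that for Bernoulli KL, $\nu\mapsto w_1\kl(\mu_1,\nu)+w_2\kl(\mu_2,\nu)$ is minimized at $(w_1\mu_1+w_2\mu_2)/(w_1+w_2)$; replacing $m(x,\eta,\mathcal{U}_{x,\bar g,\bar h_x}^{\eta,\mu})$ by this two-point weighted mean can only decrease the value of the truncated sum, and the resulting expression is exactly the claimed right-hand side.

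With the pointwise bound in hand, let $x^\star$ (depending on $\bar g$) attain the maximum in the $L_G^\star$ constraint. Taking $\min_{\bar h_x}$ on both sides preserves the inequality, and dropping the nonnegative terms with $x\neq x^\star$ from the sum in \cref{eq:cons2} yields
$$\sum_x \min_{\bar h_x}\ell_\mu(\eta,\bar g,\bar h_x)\ \ge\ \min_{\bar h_{x^\star}}\big(\eta(x^\star,g^\star,h_{x^\star}^\star)+\eta(x^\star,\bar g,\bar h_{x^\star})\big)\, I_{\alpha_{x^\star,\bar g,\bar h_{x^\star}}}\!\big(\mu(x^\star,g^\star,h_{x^\star}^\star),\mu(x^\star,\bar g,\bar h_{x^\star})\big)\ \ge\ \kl(\delta_G,1-\delta_G),$$
where the last step is the $L_G^\star$ constraint evaluated at $x^\star$. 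Since $\bar g\neq g^\star$ was arbitrary, \cref{eq:cons2} holds for $\eta$, so $\eta$ is $K_G^\star$-feasible and the proposition follows. The only step requiring real care is the first sub-step of the pointwise bound: one must carefully trace the definitions of $\mathcal{N}^\mu$ and $\mathcal{U}^{\eta,\mu}$ and exploit the strictness in \cref{eq:dominate}; the remaining pieces are routine monotonicity and convexity observations.
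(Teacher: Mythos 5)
Your proof is correct and follows essentially the same route as the paper's: the key step in both is restricting the sum defining $\ell_\mu$ to the two-point subset $\{(g^\star,h_x^\star),(\bar g,\bar h_x)\}\subseteq \mathcal{U}_{x,\bar g,\bar h_x}^{\eta,\mu}$, minimizing over a common value $d\in[0,1]$ to produce the $\left(\eta(x,g^\star,h_x^\star)+\eta(x,\bar g,\bar h_x)\right)I_{\alpha_{x,\bar g,\bar h_x}}$ term, and comparing the sum over tasks with the maximum over tasks. If anything, your packaging of the conclusion as a feasibility inclusion (every $\eta$ feasible for the $L_G^\star$-problem satisfies \cref{eq:cons2}, hence is feasible for the $K_G^\star$-problem) is the logically clean way to finish --- the paper's closing sentence, which presents the $L_G^\star$-constraint as ``derived'' from Theorem \ref{thm:full_lower_bound_offline}, reads as if the implication went in the reverse direction --- and your explicit check that $(g^\star,h_x^\star)\in\mathcal{U}_{x,\bar g,\bar h_x}^{\eta,\mu}$ via the strictness in \cref{eq:dominate} spells out a membership fact the paper merely asserts.
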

One can show that $L_G^\star(\mu,\delta_G)$ scales as $GH\kl(\delta_G,1-\delta_G)$ ( since, even with one task, we need to sample all $GH$ arms to identify $g^\star$). To summarize, we have shown that the  lower bound $K^\star(\mu,\boldsymbol{\delta})$ scales at most as $H(G\kl(\delta_G,1-\delta_G)+X \kl(\delta_H,1-\delta_H))$. The latter scaling indicates the gain in terms of sample complexity one can expect when exploiting the structure of ${\cal M}$, i.e., a common optimal representation. Without exploiting this structure,  identifying the best arm for each task would result in a scaling of $GHX\kl(\delta,1-\delta)$ for $\delta=\delta_G+\delta_H$.

\paragraph{Differences with classical best-arm identification .} To better understand the lower bound in \cref{thm:full_lower_bound_offline} it is instructive to compare the with a classical MAB problem. Consider the best-arm identification problem in MAB with $K$ arms. There the set of confusing problems is $\Lambda(\mu) = \{\lambda \in[0,1]^K: a^{\star}(\lambda)\neq a^{\star}(\mu)\}$, where $a^{\star}(\mu)$ denotes the optimal arm under $\mu$, i.e., $a^{\star}(\mu)=\argmax_{a\in K} \mu(a)$. The sample complexity lower bound derived for such classical MAB problems \cite{kaufmann2016complexity} exploits the fact that the set $\Lambda(\mu)$ can be written as $\Lambda(\mu) = \bigcup_{a \neq a^{\star}(\mu)} \Lambda_a(\mu),$ where $\Lambda_a(\mu)\coloneqq\left\{\lambda \in [0,1]^K: \lambda_{a}>\lambda_{a^{\star}(\mu)} \right\}$. 

Unfortunately, this way of rewriting the set of confusing problems cannot be used in our problem setting. The reason is that the constraint in $\Lambda_a(\mu)$ does not account for the model structure, i.e., the optimal representation $g^{\star}$ needs to be the same across all the tasks (that is equivalent to imposing that $a$ is optimal for all tasks), which is also the reason the lower bound appears more complex. In Appendix A we show how to account for this kind of structure.

Because of this difference, with the model specified in \cref{eq:dominate} the  confusing parameter $\lambda$ differs from $\mu$ for more than $2$ arms (i.e., we need to consider all arms in the the set $\set{U}_{x,\bar g, \bar h_x}^\eta$, see also Lemma 1 in Appendix A), whereas in classical MAB to learn that $a$ is sub-optimal, the confusing parameter $\lambda\in \Lambda_a(\mu)$ differs from $\mu$ only for arms $a$ and $a^\star(\mu)$.
 In fact, in our model to identify that $(\bar{g},\bar{h}_x)$ is sub-optimal we need to consider an alternative model where only the average reward of the arms in the set $\set{U}_{x,\bar g, \bar h_x}^{\eta,\mu}$ changes. 
 
% \paragraph{Computation of $\set{U}_{x,\bar g, \bar h_x}^\mu(\eta)$.}
\noindent The set $\set{U}_{x,\bar g, \bar h_x}^{\eta,\mu}$ can be computed easily by recursively enlarging the set. We start with a set with only $(g^\star,h_x^\star)$ and 
$(\bar{g},\bar{h}_x)$. We compute the corresponding value of $m\left(x,\eta, \mathcal{U}_{x,\bar g, \bar h_x}^{\eta,\mu}\right)$, and we add to the set $\set{U}_{x,\bar g, \bar h_x}^{\eta,\mu}$ the arm $(g,h)$ with the highest mean not already in the set. We iterate until convergence. \cref{fig:computation_confusing_problem} provides an illustration of the set $\set{U}_{x,\bar g, \bar h_x}^{\eta,\mu}$.% after the above procedure has stopped.

\subsection{Algorithm}
We now present \textsc{OSRL-SC} (\cref{alg:offline_algo_full}), a $\boldsymbol{\delta}$-PAC algorithm whose expected sample complexity is asymptotically upper bounded by $K_G^\star(\mu,\delta_G) + K_{H}^\star(\mu,\delta_H)$. The algorithm proceeds in two phases: a first phase aimed at learning $g^\star$, and a second phase devoted to learning the optimal predictor for each task. At the end of the first phase, we have an estimate $\hat{g}$ of the best representation. In the second phase, for each task $x$, we use the optimal track-and-stop algorithm \cite{garivier2016optimal} to identify $\hat{h}_x$, the best predictor associated to $\hat{g}$. In the remaining part of the section, we just describe the first phase.

\paragraph{A track-and-stop algorithm to learn $g^\star$.} The lower bound describes the minimal expected numbers $\eta$ of observations of the various tasks needed to learn $g^\star$. These numbers minimize $\sum_{x,g,h}\eta(x,g,h)$ over $\eta\ge 0$ satisfying (\cref{eq:cons2}). In other words, the sampling budget should be allocated according to the following distribution: $q^\star(\mu)\in \Sigma$, solving: $\max_{q\in \Sigma} \rho(q,\mu)$, where
\begin{equation}
\rho(q,\mu)= \min_{\bar g\neq g^{\star}} \sum_{x} 
\min_{\bar h_x} \ell_\mu(q,\bar g, \bar h_x),
\end{equation}   
 and $\Sigma$ denotes the set of distributions over $\mathcal{X}\times \mathcal{G}\times \mathcal{H}$. We design an algorithm tracking this optimal allocation: it consists of (i) a sampling rule, (ii) a stopping rule, and a (iii) decision rule, described below.  

{\it (i) Sampling rule.} We adapt the D-tracking rule introduced in \cite{garivier2016optimal}. The idea is to track $q^{\star}(\mu)$, the optimal proportion of times we should sample each (task, arm) pair. One important issue is that the solution to $\max_{q\in \Sigma} \rho(q,\mu)$ is not unique (this happens for example when two tasks are identical). To circumvent this problem, we track $q_\sigma^\star(\mu)$, the unique solution of  
$(P_\sigma):\max_{q\in \Sigma} \rho(q,\mu)-\frac{1}{ 2\sigma}\| q\|_2^2$, where $\sigma>0$. When $\sigma$ is large, Berge's Maximum theorem \cite{berge1963topological} implies that $q_\sigma^\star(\mu)$ approaches the set of solutions of $\max_{q\in \Sigma} \rho(q,\mu)$, and that the value $C_\sigma(\mu)$ of $(P_\sigma)$ converges to $K_G^\star(\mu,\delta_G)/\skl(\delta_G,1-\delta_G)$. In what follows, we let $K_{G,\sigma}^\star(\mu,\delta_G)\coloneqq C_\sigma(\mu)\skl(\delta_G,1-\delta_G)$.

Our D-tracking rule targets $q_\sigma^\star(\hat{\mu}_t)$, which is the unique maximizer of $\max_{q\in \Sigma}\rho(q,\hat \mu_t)-\frac{1}{2\sigma}\|q\|_2^2$, where $\rho(q,\hat{\mu}_t)$ for any $\hat\mu_t\in \set M$ is defined as
\begin{equation}
\rho(q,\hat\mu_t)= \min_{\bar g\neq g_t^{\star}} \sum_{x}  \min_{\bar h_x} \ell_{\hat \mu_t}(q,\bar g, \bar h_x).
\end{equation}
%\[
%\left\{
%\begin{array}{l}
%\set{U}^{(t)}_{x,\bar g, \bar h_x} \coloneqq \left\{(g,h) \in \mathcal{G}\times \mathcal{H}: \hat{ \mu}_t (x,g,h) \geq \frac{\sum_{(g',h') \in {\cal N}^{(t)}_{x,g,h}} N_t(x,g',h')\hat{\mu}_t (x,g',h')} {\sum_{(g',h') \in {\cal N}^{(t)}_{x,g,h}} N_t(x,g',h')} \right\}\cup \{(\bar g, \bar h_x)\}, \\
%{\cal N}^{(t)}_{x,g,h}\coloneqq \{(g',h') \in \mathcal{G}\times \mathcal{H}:\hat{ \mu}_t (x,g',h')\geq \hat{\mu}_t (x,g,h) \} \cup \{(\bar g, \bar h_x)\}. 
%\end{array}
%\right.
%\]
 More precisely, if the set of under-sampled tasks and arms $U_t = \{(x,g,h)\in \mathcal{X}\times \mathcal{G}\times \mathcal{H}:  N_t(x,g,h) <\sqrt{t} -GHX/2 \}$ is not empty, or when $\hat{\mu}_t\notin \set M$, we select the least sampled (task, arm) pair. 
Otherwise, we track $q_\sigma^\star(\hat{\mu}_t)$, and select $ \argmax_{(x,g,h)} tq_{\sigma}^\star(x,g,h;\hat{\mu}_t)-N_t(x,g,h)$.

{\it (ii) Stopping rule.} We use Chernoff's stopping rule, which is formulated using a Generalized Likelihood Ratio Test \cite{chernoff1959sequential}. The derivation of this stooping rule is detailed in appendix B. The stopping condition is $\max_{\tilde{g}} \Psi_t(\tilde{g}) >\beta_t(\delta_G)$, where the exploration threshold $\beta_t(\delta_G)$ will be chosen appropriately, and where
\[
\Psi_t(\tilde{g})=\min_{\bar g\neq \tilde{g}} \sum_{x} \min_{\bar{h}_x} \ell_{\hat \mu_t}(N_t,\bar g, \bar h_x).  
\]

\begin{algorithm}[!t]
	\caption{\textsc{OSRL-SC} $(\delta_G,\delta_H,\sigma)$}
	\label{alg:offline_algo_full}
	\begin{algorithmic}[1]
	\STATE {\bf Initialization.} \STATE $N_1(x,g,h),\hat{\mu}_1(x,g,h)\gets 0$, $\forall (x,g,h) \in \mathcal{X}\times\mathcal{G}\times \mathcal{H}$ 
	\STATE $t\gets 1$
	\STATE {\bf Phase 1. Learning $g^\star$} 
	\WHILE{$\max_{g\in G}\Psi_t(g) \leq  \beta_t(\delta_G)$}
			\IF{$U_t = \emptyset$ and $\hat{\mu}_t\in \set M$}
			%\IF{ $|G_t| = 1$}
			\small
				\STATE{$(x(t),g(t),h(t))\gets \argmax_{(x,g,h)} tq_{\sigma}^{\star}(x,g,h;\hat{\mu}_t)-N_t(x,g,h)$} %\hfill \COMMENT{Tracking}
				\normalsize
			\ELSE
			\STATE{$(x(t),g(t),h(t))\gets \argmin_{(x,g,h)} N_t(x,g,h)$} %\hfill \COMMENT{Estimation}
			\ENDIF
			\STATE{Select $(x(t),g(t),h(t))$, observe the corresponding reward and update statistics; $t\leftarrow t+1$}
		\ENDWHILE
		\RETURN $\hat{g} = \argmax_g \hat{\mu}_{\tau_G}(g)$
	\STATE {\bf Phase 2. Learning $(h_1^\star,\ldots,h_X^\star)$} 
	\STATE { For all task $x$, $\hat{h}_x\leftarrow$ [track-and-stop \cite{garivier2016optimal} with arms $(\hat{g},h)_{h\in {\cal H}}$ and confidence $\delta_H$]}
		\RETURN $(\hat{g},\hat h_1,\ldots,\hat h_X)$
	\end{algorithmic}
\end{algorithm}
{\it (iii) Decision rule.} The first phase ends at time $\tau_G$, and $\hat{g}$ is chosen as the best empirical representation: $\hat{g}=\arg\max_g \hat{\mu}_t(g)$.  
 \begin{table*}[t]
 \setlength\arrayrulewidth{1pt}  
 \resizebox{\textwidth}{!}{
	\begin{tabular}{lll|cc|ccc}\toprule
		&&&\textbf{Average} & \textbf{Confidence interval $97.5\%$} & \textbf{Min} & \textbf{Max} & \textbf{Std} \\\toprule 
		% Risk 0.1 - OSRL
		\rowcolor[gray]{.95}$\delta=0.1$ & \textsc{OSRL-SC}& \textit{Total}& ${21278.80}$ & $\pm {430.37}$ & $5254.0$ & $46876.0$ & ${6423.03}$\\
		\rowcolor[gray]{.95}& &  \textbf{\textit{Phase 1}} & $\mathbf{3578.38}$ & $\pm \mathbf{43.31}$ & $\mathbf{2163.0}$ & $\mathbf{7014.0}$ & $\mathbf{646.46}$\\
		\rowcolor[gray]{.95}& & \textit{Phase 2} & ${17700.42}$ & $\pm {428.39}$ & $1554.0$ & ${43270.0}$ & ${6393.44}$\\
		% Risk 0.1 - Tas
		     & \textsc{TaS} & &$26456.83$ & $\pm 510.60$ &  ${4544.0}$ & $54566.0$ &$7620.35$\\\midrule
		% Risk 0.05 - OSRL
		\rowcolor[gray]{.95}$\delta=0.05$  & \textsc{OSRL-SC} &\textit{Total}& ${22671.50}$ & $\pm{420.40}$ & $6068.0$ & $48184.0$ & ${6274.13}$ \\
		\rowcolor[gray]{.95} & &  \textbf{\textit{Phase 1}} & $\mathbf{3651.99}$ & $\pm \mathbf{41.86}$ & $\mathbf{2358.0}$ & $\mathbf{6245.0}$ & $\mathbf{624.72}$\\
				\rowcolor[gray]{.95}& & \textit{Phase 2} & ${19019.51}$ & $\pm {417.81}$ & $2207.0$ & ${45298.0}$ & ${6235.60}$\\
		% Risk 0.05 - TAS
		& \textsc{TaS} && $27735.38$ & $\pm 534.35$ & ${7675.0}$& $58227.0$ & $7974.87$\\\midrule
	
		% Risk 0.01 - OSRL
		\rowcolor[gray]{.95}$\delta=0.01$ & \textsc{OSRL-SC} &\textit{Total}& ${25765.90}$ & $\pm{436.13}$ & $8951.0$ & $55809.0$ & ${6508.94}$ \\
		\rowcolor[gray]{.95} & &  
		\textbf{\textit{Phase 1}} &$\mathbf{3829.56}$ & $\pm\mathbf{45.93}$ & $\mathbf{2358.0}$ & $\mathbf{7354.0}$ & $\mathbf{685.44}$ \\
				\rowcolor[gray]{.95}& & \textit{Phase 2} & ${21936.34}$ & $\pm {434.09}$ & $5398.0$ & ${52002.0}$ & ${6478.57}$\\
		% Risk 0.01 - TAS
		    & \textsc{TaS} && $30970.94$ & $\pm 536.99$ & ${9538.0}$ & $70319.0$ & $8014.26$
		\\\bottomrule
	\end{tabular}}
	\caption{\textsc{OSRL-SC} vs \textsc{TaS}: Sample complexity results, over 1120 runs.}\label{Tab1}
\end{table*}

%\begin{algorithm}[htb]
%	\caption{\textsc{Tracking Phase 1}}
%	\label{alg:offline_algo_full_phase1}
%	\begin{algorithmic}
%		\REQUIRE $\delta_G,\hat{\mu}_1, N_1$
%		\WHILE{$\max_{g\in G}\ln \Lambda_t(g) \leq  \beta(t,\delta_G)$}
%			\IF{$U_t = \emptyset$ and $|G_t| = 1$}
%			%\IF{ $|G_t| = 1$}
%				\STATE{$A_{t}\gets \argmax_{(x,g,h)\in X\times G\times H} tq_{\lambda_t, x,g,h}^{\star}(\hat{\mu}_t)-N_t(x,g,h)$} \hfill \COMMENT{Tracking}
%			%\ELSE
%			%\STATE{$(x(t),g(t)) \gets$ Select one $(x,g)$ uniformly from $X\times G_t$.}\hfill \COMMENT{Exploration}
%			%\STATE{$h(t)\gets \argmax_{h\in H} \hat\mu_t(x(t),g(t),h)$}
%			%\STATE{$A_{t}\gets (x(t),g(t),h(t)) $} 
%			%	\STATE{$s(x(t),t+1)\leftarrow s(x(t),t+1)-1$}
%			%\ENDIF
%			\ELSE
%			\STATE{$A_{t}\gets \argmin_{(x,g,h)\in X\times G \times H} N_t(x,g,h)$} \hfill \COMMENT{Estimation}
%			\ENDIF
%			\STATE{Pick arm $A_{t}$ and observe reward $Z_{t}(A_t)$}
%			\STATE{Update statistics}
%		\ENDWHILE
%		\RETURN $\hat{g}^{\star} = \argmax_g \hat{\mu}_t(g)$
%	\end{algorithmic}
%\end{algorithm}
%\begin{algorithm}[htb]
%	\caption{\textsc{Tracking Phase 2}}
%	\label{alg:offline_algo_full_phase2}
%	\begin{algorithmic}
%		\REQUIRE $(\hat{g}^{\star}, N_{\tau_g}, \hat\mu_{\tau_g}, \delta_H)$
%		\FOR{$x=x_1,\dots, x_{|X|}$}
%			\STATE{Apply Track and Stop algorithm for best arm identification in a single task for task $x$ using estimate $\hat{g}^{\star}$.}
%		\ENDFOR
%	\end{algorithmic}
%\end{algorithm}

\subsection{PAC and sample complexity analysis}
We now present the sample complexity upper bound for \cref{alg:offline_algo_full}. First, we outline the stopping rule used by the algorithm. Following \cite{kaufmann2018mixture}, we first need to define $\phi : \mathbb{R}_+\to\mathbb{R}_+$ as $\phi(x) = 2\tilde{p}_{3/2}\left(\frac{p^{-1}(1+x)+\ln(2\zeta(2))}{2}\right)$, where $\zeta(s) = \sum_{n\geq 1} n^{-s}$, $p(u)=u-\ln(u)$ for $u\geq 1$ and for any $z\in [1,e]$ and $x\geq 0$:
\[
\tilde{p}_z(x) = \begin{cases}
e^{1/p^{-1}(x)}p^{-1}(x)\quad \hbox{ if } x \geq p^{-1}(1/\ln z),\\
z(x-\ln\ln z)\quad \hbox{otherwise.}
\end{cases}
\]
Then, the following theorem states that with a carefully chosen exploration threshold, the first phase of \textsc{OSRL-SC} returns the optimal representation w.p. greater than $1-\delta_G$.

\begin{theorem}\label{theorem:deltapac}
Let $\delta_G\in(0,1)$: for any sampling rule, Chernoff's stopping rule with threshold $\beta_t(\delta_G)=\beta_1(t) + \beta_2(\delta_G)$, where $\beta_1(t) = 3\sum_{x,g,h}\ln(1+\ln(N_t(x,g,h)))$, and  $\beta_2(\delta_G) = GHX \phi(\ln((G-1)/\delta_G)/XGH)$, ensures that for all $ \mu \in\set{M}$, $\mathbb{P}_\mu(\tau_{G} <\infty, \hat{g} \neq g^{\star}) \leq \delta_G$.
\end{theorem}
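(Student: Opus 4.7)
My plan is to follow the standard template for GLRT-based stopping rules in best-arm identification, as developed by Kaufmann and Koolen (2018) for mixture martingales, and adapt it to the representation-learning structure of our problem. The key observation is that the error event $\{\hat g \neq g^\star, \tau_G < \infty\}$ can be re-expressed in terms of the GLRT statistic evaluated at a wrong hypothesis, after which a carefully chosen threshold controls each associated deviation.

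First I would decompose the error event via a union bound over the $G-1$ alternative representations:
\[
\mathbb{P}_\mu(\tau_G < \infty, \hat g \neq g^\star) \le \sum_{\tilde g \neq g^\star} \mathbb{P}_\mu\bigl(\exists t \ge 1 : \Psi_t(\tilde g) > \beta_t(\delta_G)\bigr).
\]
This uses the fact that if the algorithm stops and declares $\hat g \neq g^\star$, then in particular the stopping condition $\Psi_t(\tilde g) > \beta_t(\delta_G)$ was triggered for some $\tilde g \neq g^\star$ at time $\tau_G$. The next step is to relate $\Psi_t(\tilde g)$ to a log-likelihood ratio between $\mu$ and an alternative model where $\tilde g$ is optimal. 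By taking $\bar g = g^\star$ in the outer minimum defining $\Psi_t(\tilde g)$, and choosing for each task $x$ the predictor $\bar h_x = h_x^\star$, one obtains an upper bound of $\Psi_t(\tilde g)$ by $\sum_x \ell_{\hat\mu_t}(N_t, g^\star, h_x^\star)$; by the variational characterization of $\ell$ in terms of KL projection on the set $\set{U}_{x,g^\star,h_x^\star}^{N_t,\hat\mu_t}$, this quantity is exactly the log-likelihood ratio between the empirical model and the closest element of the alternative model where $g^\star$ is \emph{not} optimal for all tasks simultaneously.

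The heart of the argument is then a concentration inequality for this log-likelihood ratio. Following \citep{kaufmann2018mixture}, I would build a mixture martingale indexed by the $GHX$ arm–task pairs. For each $(x,g,h)$, the individual likelihood ratio between $\mu(x,g,h)$ and a Bernoulli alternative forms a nonnegative supermartingale under $\mathbb{P}_\mu$; mixing over a prior on the alternative parameters, and applying Ville's maximal inequality, yields a uniform (over $t$) control of the form
\[
\mathbb{P}_\mu\Bigl(\exists t : \sum_{x,g,h} N_t(x,g,h)\, \skl(\hat\mu_t(x,g,h), \mu(x,g,h)) \ge \beta_1(t) + c\Bigr) \le e^{-p(c/GHX)+\ln(2\zeta(2))}
\]
up to the standard $\tilde p_{3/2}$ correction embedded in the definition of $\phi$. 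The term $\beta_1(t) = 3\sum_{x,g,h}\ln(1+\ln N_t(x,g,h))$ absorbs the stitching over time, and $\phi(\cdot)$ inverts the $p$ function to yield the clean bound $e^{-x}$ when the argument is $x = \ln((G-1)/\delta_G)/XGH$. Summing over the $G-1$ alternative representations and plugging in $\beta_2(\delta_G) = GHX\phi(\ln((G-1)/\delta_G)/XGH)$ gives total error bounded by $(G-1) \cdot \delta_G/(G-1) = \delta_G$, as required.

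The main obstacle is the second step: carefully verifying that $\Psi_t(\tilde g)$ is indeed upper bounded, under $\mathbb{P}_\mu$, by the global log-likelihood ratio that the mixture martingale controls. The definition of $\set{U}_{x,\bar g,\bar h_x}^{\eta,\mu}$ involves a data-dependent set whose membership depends on $\hat\mu_t$ and $N_t$, so one must argue that the infimum over confusing models in $\ell_{\hat\mu_t}$ is achieved by a Bernoulli product measure and that each marginal falls in the scope of the mixture prior used above. Once this identification is made, the remaining computations reduce to the scalar calculation in \citep{kaufmann2018mixture}, and the explicit form of $\beta_2(\delta_G)$ follows from tracking constants through $\phi$.
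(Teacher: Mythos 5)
Your proposal follows the paper's proof step for step: the union bound over the $G-1$ wrong representations, the reduction of $\Psi_t(\tilde g)$ to an empirical KL statistic, and the time-uniform deviation inequality of Theorem 14 in \cite{kaufmann2018mixture} applied with $|\set S|=XGH$, $c=3$, $d=1$ and $x=\ln((G-1)/\delta_G)$, which is exactly what produces $\beta_1(t)=3\sum_{x,g,h}\ln(1+\ln N_t(x,g,h))$, $\beta_2(\delta_G)=GHX\,\phi(\ln((G-1)/\delta_G)/XGH)$, and the final accounting $(G-1)\cdot\delta_G/(G-1)=\delta_G$.

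However, the step you single out as ``the main obstacle'' contains a genuine soft spot, and the repair you sketch would not work as stated. You propose to verify that the infimum inside $\ell_{\hat\mu_t}$ is attained by a Bernoulli product measure whose marginals ``fall in the scope of the mixture prior''; but that KL projection is data-dependent (through $\hat\mu_t$, $N_t$, and the set $\set U^{(t)}_{x,\bar g,\bar h_x}$), and the mixture-martingale construction cannot be applied to a data-dependent alternative: the supermartingale property under $\mathbb{P}_\mu$ requires the reference parameter to be the fixed true $\mu$. The correct closing move, which is what the paper's steps (b)--(c) implement, is much simpler and requires no analysis of the projection at all. Since $\tilde g\neq g^\star$, the true model $\mu$ itself is feasible in the infimum $\Psi_t(\tilde g)=\inf_{\lambda\in\set M\setminus\Lambda(\tilde g)}\sum_{x,g,h}N_t(x,g,h)\,\skl(\hat\mu_t(x,g,h),\lambda(x,g,h))$ (equivalently, in your specialization take $\bar g=g^\star$, $\bar h_x=h_x^\star$ and plug in $\lambda=\mu$; the paper phrases this via the fact that $\bar m_t(x)$ minimizes the weighted KL sum). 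Hence pointwise $\Psi_t(\tilde g)\le\sum_{x,g,h}N_t(x,g,h)\,\skl(\hat\mu_t(x,g,h),\mu(x,g,h))$, the data-dependence of $\set U^{(t)}$ becomes irrelevant, and your displayed concentration bound---which correctly targets the divergence to the fixed $\mu$---finishes the proof. One further slip to correct: after fixing $\bar g=g^\star$, $\bar h_x=h_x^\star$, the set you project onto consists of models in which $g^\star$ \emph{is} optimal (i.e., in which $\tilde g$ is not), and it is precisely because $\mu$ belongs to this set that the bound holds; your phrase ``where $g^\star$ is not optimal'' has it backwards.
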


From \cite{garivier2016optimal}, the second phase of \textsc{OSRL-SC} also returns the optimal predictors  for each task w.p. greater than  $1-\delta_H$. Finally, in the next theorem, we show that \textsc{OSRL-SC} stops in finite time a.s., and that its expected sample complexity approaches  $K_G^\star(\mu,\delta_G) + K_{H}^\star(\mu,\delta_H)$ for sufficiently small values of the risks $\delta_G, \delta_H$, and sufficiently large $\sigma$. 

\begin{theorem}\label{theorem:exp_sc_bound}
If the exploration threshold of the first phase of \textsc{OSRL-SC} is chosen as in Theorem \cref{theorem:deltapac}, then we have: $\mathbb{P}_\mu[\tau_G<\infty]=1$ and $\mathbb{P}_\mu[\tau_H<\infty]=1$ (where $\tau_H$ is the time at which the second phase ends). In addition, the sampling complexity of \textsc{OSRL-SC} satisfies: $\limsup_{\delta_G,\delta_H\to 0} \frac{\mathbb{E}_\mu[\tau]}{K_{G,\sigma}^\star(\mu,\delta_G)+K_H^\star(\mu,\delta_H)} \leq 1$,
where $K_{G,\sigma}^\star(\mu,\delta_G)=C_\sigma(\mu)\skl(\delta_G,1-\delta_G)$, with $C_\sigma(\mu)\coloneqq \left( \max_{q\in \Sigma} \rho(q,\mu) -\frac{1}{2\sigma}\|q\|_2^2\right)^{-1}$.
\end{theorem}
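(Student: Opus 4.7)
My plan is to treat the two phases of \textsc{OSRL-SC} separately. For Phase 1 I follow the track-and-stop analysis template of \cite{garivier2016optimal}, adapted to the regularized allocation $q_\sigma^\star$; for Phase 2 I directly invoke their result applied to each of the $X$ independent per-task sub-problems.

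For a.s.\ termination of Phase 1, the D-tracking forced-exploration clause ensures $U_t = \emptyset$ triggers infinitely often, so $N_t(x,g,h) \geq \sqrt{t} - GHX/2$ for all $(x,g,h)$ eventually, and the strong law gives $\hat\mu_t \to \mu$ a.s. Because the quadratic penalty $-\|q\|_2^2/(2\sigma)$ makes $(P_\sigma)$ strictly concave, its maximizer $q_\sigma^\star(\mu')$ is unique and, by Berge's maximum theorem, continuous in $\mu'$. The tracking property then yields $N_t/t \to q_\sigma^\star(\mu)$, and by joint continuity of $\rho$, $\Psi_t(g^\star)/t \to \rho(q_\sigma^\star(\mu),\mu) = 1/C_\sigma(\mu) > 0$. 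Since $\beta_t(\delta_G) = O(\log t)$, the Chernoff condition triggers in finite time a.s., giving $\mathbb{P}_\mu[\tau_G < \infty] = 1$; the same argument applied per-task (invoking \cite{garivier2016optimal}) yields $\mathbb{P}_\mu[\tau_H < \infty] = 1$.

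For the asymptotic bound on Phase 1, I fix $\varepsilon > 0$ and define the concentration event $\mathcal{E}_t = \{\|\hat\mu_s - \mu\|_\infty \leq \varepsilon,\ \forall s \in [t^{1/4}, t]\}$, whose complement has probability summable in $t$ by Hoeffding and a union bound over tasks and arms. On $\mathcal{E}_t$, continuity of $q_\sigma^\star$ combined with the D-tracking inequality (the analog of Lemma~7 of \cite{garivier2016optimal}) gives $N_t/t = q_\sigma^\star(\mu) + o_\varepsilon(1)$, whence $\Psi_t(g^\star) \geq t(1-\varepsilon'(\varepsilon))/C_\sigma(\mu)$ for $t$ large enough. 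Comparing with the stopping condition yields $\tau_G \leq C_\sigma(\mu)(1+o(1))\beta_{\tau_G}(\delta_G)$, and using $\beta_t(\delta_G) = \kl(\delta_G,1-\delta_G) + o(\log(1/\delta_G))$ as $\delta_G\to 0$ rearranges to $\tau_G \leq K_{G,\sigma}^\star(\mu,\delta_G)(1+o(1))$. The contribution of $\mathcal{E}_t^c$ to $\mathbb{E}[\tau_G]$ is a bounded constant, negligible in the ratio. Phase 2 decomposes, on the event $\{\hat g = g^\star\}$, into $X$ independent $H$-armed best-arm identification problems with confidence $\delta_H$, each bounded via Theorem~14 of \cite{garivier2016optimal}, summing to $K_H^\star(\mu,\delta_H)(1+o(1))$. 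The complementary event $\{\hat g \neq g^\star\}$ has probability at most $\delta_G$ by \cref{theorem:deltapac} and contributes $O(\delta_G \log(1/\delta_H)) = o(K_H^\star)$. Adding the two expected durations and dividing by $K_{G,\sigma}^\star + K_H^\star$ yields the claimed $\limsup \leq 1$.

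The main obstacle is transferring the D-tracking analysis of \cite{garivier2016optimal} from a scalar KL-type characteristic function to the more involved $\ell_\mu$, which contains nested minima over $\bar g$ and $\bar h_x$ together with an implicit dependence on the confusing set $\mathcal{U}_{x,\bar g,\bar h_x}^{\eta,\mu}$. A max-of-mins continuity argument, combined with the uniqueness of $q_\sigma^\star(\mu')$ secured by the quadratic regularization, is what lets me chain the concentration and continuity errors into a single $o(1)$ factor; verifying that $\rho$ remains jointly continuous at $\mu' = \mu$ despite the piecewise definition of $\mathcal{U}$ is the most delicate part.
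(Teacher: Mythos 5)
Your proposal is correct and follows essentially the same route as the paper's proof: forced exploration plus Berge-type continuity of the regularized maximizer and the tracking lemma for almost-sure termination of Phase~1; a good tail event $\bigcap_{s\in[T^{1/4},T]}\{\hat\mu_s\approx\mu\}$ with an exponentially small complement, the bound $\Psi_t \geq t\rho_\sigma \geq t\,C^\star_{\sigma,\varepsilon}(\mu)$, and inversion of the $\ln(Dt/\delta_G)$ threshold for the expected duration of Phase~1; and the conditioning decomposition $\mathbb{E}_\mu[\tau_H]=\mathbb{E}_\mu[\tau_H\mid\hat g=g^\star]\mathbb{P}_\mu(\hat g=g^\star)+\mathbb{E}_\mu[\tau_H\mid\hat g\neq g^\star]\mathbb{P}_\mu(\hat g\neq g^\star)$ with Theorem~14 of \cite{garivier2016optimal} and the $O(\delta_G\,\skl(\delta_H,1-\delta_H))$ error term for Phase~2. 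One cosmetic slip: $1/C_\sigma(\mu)$ equals $\rho_\sigma(q_\sigma^\star(\mu),\mu)$, not $\rho(q_\sigma^\star(\mu),\mu)$, but since $\rho\geq\rho_\sigma$ and you only use this quantity as a lower bound on $\Psi_t(g^\star)/t$, the argument is unaffected.
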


\begin{corollary}Additionally, due to Berge's theorem, since $\rho(q,\mu)- \frac{1}{2\sigma}\|q\|_2^2$ is continuous in $q$ for each $(\sigma, \mu)$, we have: $\lim_{\sigma \to \infty} K_{G,\sigma}^\star(\mu,\delta_G)=K_G^\star(\mu,\delta_G)$.
\end{corollary}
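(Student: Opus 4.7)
The plan is to reduce the claim to a comparison of two maximum values on the compact simplex $\Sigma$, and then invoke Berge's maximum theorem (or, equivalently, a one-line uniform-convergence estimate).

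First I would rewrite $K_G^\star(\mu,\delta_G)$ in closed ratio form using the homogeneity of $\rho(\cdot,\mu)$. Because $m(x, tq, \mathcal{C}) = m(x, q, \mathcal{C})$ for every $t>0$, the confusing set $\set{U}^{q,\mu}_{x,\bar g,\bar h_x}$ depends on $q$ only through $m(x,q,\cdot)$, so each $\ell_\mu(\cdot,\bar g,\bar h_x)$ is $1$-homogeneous in $q$, and hence so is $\rho(\cdot,\mu)$. Writing $\eta = tq$ with $q\in\Sigma$ and $t=\sum_{x,g,h}\eta(x,g,h)$ turns \cref{eq:cons2} into $t\,\rho(q,\mu)\geq \kl(\delta_G,1-\delta_G)$ and the objective into $t$, giving
\[
K_G^\star(\mu,\delta_G) \;=\; \frac{\kl(\delta_G,1-\delta_G)}{\max_{q\in\Sigma}\rho(q,\mu)}.
\]
By definition $K_{G,\sigma}^\star(\mu,\delta_G) = \kl(\delta_G,1-\delta_G)/v_\sigma(\mu)$, where $v_\sigma(\mu) := \max_{q\in\Sigma}\bigl[\rho(q,\mu) - \tfrac{1}{2\sigma}\|q\|_2^2\bigr]$. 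It therefore suffices to show $v_\sigma(\mu) \to v_\infty(\mu) := \max_{q\in\Sigma}\rho(q,\mu)$ as $\sigma\to\infty$, and then apply the continuous map $v\mapsto \kl(\delta_G,1-\delta_G)/v$ (well-defined since $v_\infty(\mu)>0$ whenever $K_G^\star(\mu,\delta_G)$ is finite).

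For the value convergence I would set $F(q,s) := \rho(q,\mu) - \tfrac{s}{2}\|q\|_2^2$ on $\Sigma\times[0,\infty)$ with $s=1/\sigma$. Under the continuity hypothesis on $\rho(\cdot,\mu)$ stated in the corollary, $F$ is jointly continuous in $(q,s)$, the feasible set $\Sigma$ is nonempty, compact, and constant in $s$ (hence trivially a continuous correspondence). Berge's maximum theorem therefore asserts that $s\mapsto \max_{q\in\Sigma}F(q,s)$ is continuous on $[0,\infty)$, so $v_\sigma(\mu) = \max_q F(q,1/\sigma) \to \max_q F(q,0) = v_\infty(\mu)$ as $\sigma\to\infty$. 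One can equivalently short-circuit Berge by the uniform bound $\sup_{q\in\Sigma}\bigl|F(q,1/\sigma)-\rho(q,\mu)\bigr| \leq 1/(2\sigma)$, which yields $|v_\sigma(\mu)-v_\infty(\mu)|\leq 1/(2\sigma)$ directly.

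The only delicate point, and the main obstacle, is the continuity of $\rho(\cdot,\mu)$ itself, because the set $\set{U}^{q,\mu}_{x,\bar g,\bar h_x}$ is a level set that can gain or lose an arm as $q$ varies. This obstacle is however only apparent: at any boundary $q$ where an arm $(g,h)$ enters or leaves $\set{U}^{q,\mu}_{x,\bar g,\bar h_x}$, one has $\mu(x,g,h) = m\bigl(x,q,\set{U}^{q,\mu}_{x,\bar g,\bar h_x}\bigr)$, so the corresponding term $\kl(\mu(x,g,h),m(\cdots))$ vanishes and the two candidate expressions for $\ell_\mu$ coincide. Continuity of each $\ell_\mu(\cdot,\bar g,\bar h_x)$ follows, and $\rho$ is then a finite $\min$-$\sum$-$\min$ of continuous functions, hence continuous on $\Sigma$. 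Once this is established, the Berge/uniform-convergence step above closes the proof.
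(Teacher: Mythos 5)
Your proposal is correct and takes essentially the same route as the paper: the paper also obtains the corollary by applying Berge's maximum theorem to $\max_{q\in\Sigma}\bigl[\rho(q,\mu)-\tfrac{1}{2\sigma}\|q\|_2^2\bigr]$ over the compact simplex, with the ratio identity $K_{G,\sigma}^\star(\mu,\delta_G)=\kl(\delta_G,1-\delta_G)/v_\sigma(\mu)$ built into its definition of $C_\sigma(\mu)$ and the corresponding normalization $\eta=tq$ carried out in the appendix preliminaries. Your additions — reparametrizing by $s=1/\sigma$ so that $\sigma\to\infty$ becomes an honest continuity-at-a-point application of Berge, the quantitative shortcut $|v_\sigma(\mu)-v_\infty(\mu)|\le 1/(2\sigma)$ from $\|q\|_2^2\le 1$ on $\Sigma$, and the boundary argument showing the level-set changes in $\mathcal{U}^{q,\mu}_{x,\bar g,\bar h_x}$ do not break continuity of $\rho(\cdot,\mu)$ — rigorously fill in points the paper asserts without proof.
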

% !TeX spellcheck = en_GB
% !TeX root=main_supplementary.tex
% !TeX encoding = UTF-8
\section{Numerical results}~\label{app:exp}
%In this section, we present numerical results to illustrate the performance of \textsc{OSRL-SC}.
%\subsection{Sample complexity numerical results}
\noindent We analyze the performance of \textsc{OSRL-SC}, and compare it directly with \textsc{Track and Stop} (\textsc{TaS})\cite{garivier2016optimal}. We are interested in answering the following question: {is it easier to learn the best representation by just focusing on one task, or should we consider multiple tasks at the same time?}
%Therefore, the analysis will focus on the following two points:
%\begin{enumerate}
%	\item \textit{Is} \textsc{OSRL-SC} \textit{able to perform information re-factoring? In other words, what are the gains from considering the tasks altogether?}
%	\item \textit{Does the reference vector} $q_\sigma^{\star}(\hat \mu_t)$ \textit{ change sensibly over time? How is this affected by $\hat \mu_t$?}
%\end{enumerate}
%

\paragraph{Simulation setup.}
We consider $2$ tasks ($x_1,x_2$), $3$ representations ($g_1,g_2,g_3$) and $2$ predictors ($h_1,h_2$). This setting is rather simple, although not trivial. Note that as the number of (task, arm) pairs decreases, we expect the gap between the two algorithms to decrease and thus favour \textsc{TaS}. Hence considering examples with small numbers of tasks are informative about OSRL abilities to factor information across tasks. The average rewards are
\renewcommand{\kbldelim}{(}% Left delimiter
\renewcommand{\kbrdelim}{)}% Right delimiter
\[
 \underbrace{\kbordermatrix{
    & h_1 & h_2 \\
    g_1 & 0.5 & 0.45 \\
    g_2 & 0.35 & 0.33 \\
    g_3 & 0.1 & 0.05  
  }}_{\text{Average rewards for task } x_1},\qquad 
  \underbrace{\kbordermatrix{
      & h_1 & h_2 \\
      g_1 & 0.5 & 0.45 \\
      g_2 & 0.1 & 0.05 \\
      g_3 & 0.35 & 0.33  
    }}_{\text{Average rewards for task } x_2}
\]
We set up tasks $x_1$ and $x_2$ so that they are very similar: actually, the only difference is that the 2nd and 3rd row of the above matrices are swapped. Therefore it should not matter which task \textsc{TaS} picks, but, on the other hand, \textsc{OSRL-SC} should benefit from this small difference. Hence, for each simulation of \textsc{TaS}, we picked one task uniformly at random. We average results over 1120 runs.

\paragraph{Algorithms.} We test \textsc{TaS} and \textsc{OSRL-SC} with various risks $\delta \in \{0.01, 0.05, 0.1\}$ (with $\delta=\delta_G=\delta_H$ for \textsc{OSRL-SC}). For \textsc{TaS}, we use the following threshold $\beta_t(\delta_G) = \log\left(\frac{2t(GH-1)}{\delta_g}\right)$. We tried the same threshold as in  \textsc{OSRL-SC}, but it yielded worse results.
For \textsc{OSRL-SC}, we set $\sigma=10^5$. For the example considered, one can see that $\argmax_{q\in\Sigma}\rho(q,\mu)$ has a unique maximizer. Therefore, $\sigma$ will not influence the value of the lower bound if $\hat\mu_t$ is approximately close to $\mu$, in norm. However, when $\hat\mu_t$ is visibly different from $\mu$, then the value of $C_\sigma(\hat \mu_t)$ may be affected by the value of $\sigma$. We have not thoroughly explored different values of $\sigma$, but we may suggest that a value of $\sigma> 10^3$ is a safe choice.

We compute $q_\sigma^{\star}(\hat \mu_t)$ every $12$ rounds (which is equal to $GHX$) to reduce the computational time (this is better motivated in Appendix B). Despite that, one needs to keep in mind that tracking a sub-optimal, or wrong, reference vector $q_\sigma^\star$ may sensibly affect the sample complexity. We can also motivate this period update by the fact that $q_\sigma^{\star}(\hat \mu_t)$ in a small time interval does not change much, as shown numerically in what follows. 

To compute $q_\sigma^\star(\hat \mu_t)$, in round $t+1$,  we use as initial condition a convex combination of the previous solution and a uniform point in the probability simplex (with a factor $0.5$). This is done to speed-up the algorithm (for more details refer to Appendix B).

\paragraph{Comparison of \textsc{OSRL-SC} and \textsc{TaS}.}

In Table \ref{Tab1}, we report the sample complexity of the two algorithms. In bold, we highlighted results for the first phase of \textsc{OSRL-SC}. Even if the number of representations is higher than the number of predictors, somewhat surprisingly, the first phase \textsc{OSRL-SC} looks  on average very stable, with a small confidence interval (when compared to Phase 2 or \textsc{TaS}). It is worth observing that with the smallest number possible of tasks ($X=2$), \textsc{OSRL-SC} manages to reduce the required number of rounds to identify the optimal representation, and the predictors, compared to  \textsc{TaS}. Furthermore, the first phase of \textsc{OSRL-SC} appears  stable also when $\delta$ decreases. Between $\delta=0.1$ and $\delta=0.05$ there is a relative increase of average sample complexity of roughly $2\%$ for \textsc{OSRL-SC}; between $\delta=0.05$ and $\delta=0.01$ we have that the average sample complexity for \textsc{OSRL-SC} has a relative increase of roughly $5\%$. Overall, these results indicate that \textsc{OSRL-SC} is able to re-factor information efficiently.

\paragraph{Analysis of \textsc{OSRL-SC}: First phase}
\begin{figure}[h!]
	\begin{minipage}{1\textwidth}
		\centering
		\subcaptionbox{}{\includegraphics[width=.49\linewidth]{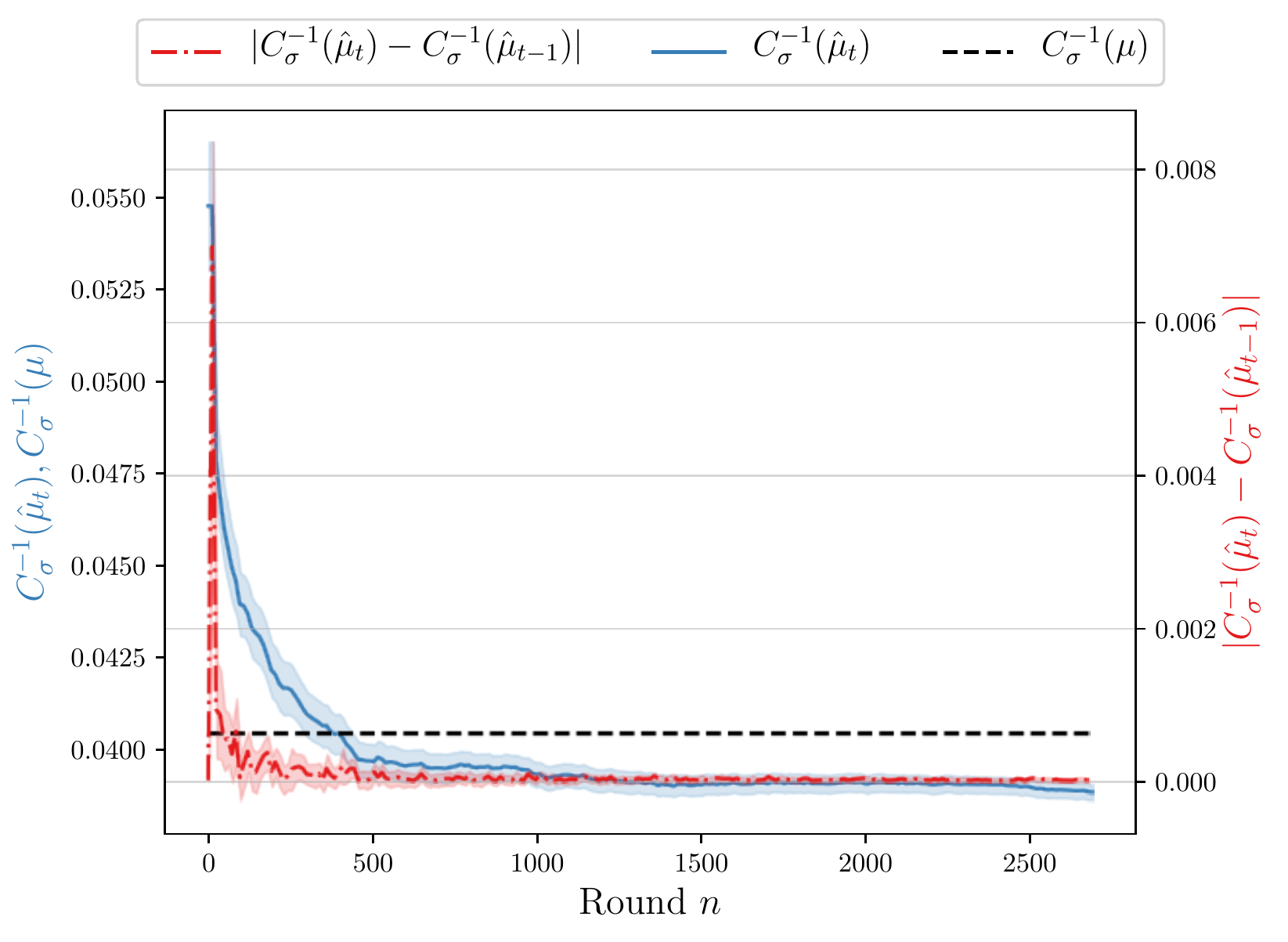}}
		\subcaptionbox{}{\includegraphics[width=.49\linewidth]{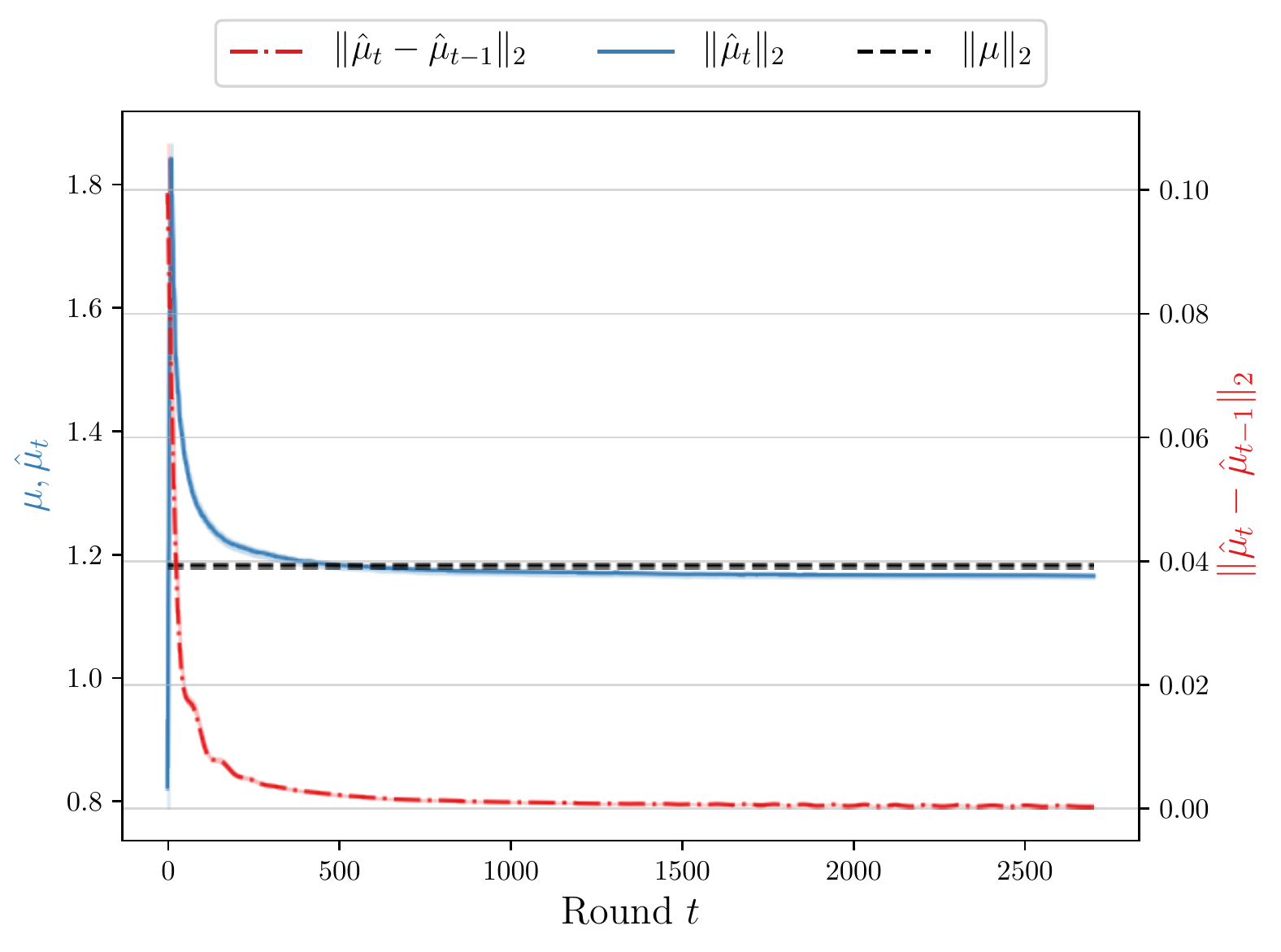}}
		\caption{Analysis of $C_\sigma^{-1}(\hat \mu_t)$ and $\hat \mu_t$ under \textsc{OSRL-SC}. (a) Average dynamics of $C_\sigma^{-1}(\hat{\mu}_t)$, (b) Dynamics of $\hat \mu_t$. $\|\hat{\mu}_t-\hat{\mu}_{t-1}\|_2$ is normalized by $\sqrt{GHX}$ to show the average change of each component, and low-pass filtered using a $8$-th order Butterworth filter with critical frequency $\omega_0=0.025$.  The shadowed areas represent $97.5\%$ confidence interval.  } 
		\label{fig:C_sigma_analysis}
	\end{minipage}
\end{figure}
\begin{table}[t]
 \resizebox{\linewidth}{!}{
\begin{tabular}{lcccccc}
\toprule
\textbf{Task $x_1$}   & $(g_1,h_1)$          & $(g_1,h_2)$          & $(g_2,h_1)$          & $(g_2,h_2)$          & $(g_3,h_1)$          & $(g_3,h_2)$          \\ \midrule
 $N_{\tau_G}(x_1)/\tau_G$          & 0.14                 & 0.05                 & 0.02                 & 0.02                 & 0.12                 & 0.15                 \\
$q_\sigma^\star(x_1;\mu)$ & 0.18                 & $5\cdot 10^{-3}$     & $6\cdot10^{-4}$      & $7\cdot10^{-4}$      & 0.13                 & 0.18                 \\ \midrule
                      & \multicolumn{1}{l}{} & \multicolumn{1}{l}{} & \multicolumn{1}{l}{} & \multicolumn{1}{l}{} & \multicolumn{1}{l}{} & \multicolumn{1}{l}{} \\ \midrule
\textbf{Task $x_2$}   & $(g_1,h_1)$          & $(g_1,h_2)$          & $(g_2,h_1)$          & $(g_2,h_2)$          & $(g_3,h_1)$          & $(g_3,h_2)$          \\ \midrule
$N_{\tau_G}(x_2)/\tau_G$          & 0.14                 & 0.05                 & 0.12                 & 0.16                 & 0.02                 & 0.02                 \\
$q_\sigma^\star(x_2;\mu)$ & 0.18                 & $5\cdot 10^{-3}$     & 0.13                 & 0.18                 & $6\cdot10^{-4}$      & $7\cdot10^{-4}$      \\ \bottomrule
\end{tabular}}
\caption{Analysis of \textsc{OSRL-SC}. Comparison of the optimal allocation vector $q_\sigma^\star(\mu)$ and the average proportion of arm pulls $N_{\tau_G}/\tau_G$ at the stopping time.}\label{Tab2}
\end{table}

To analyze the convergence of \textsc{OSRL-SC}, we focus on its first phase, specifically on the following quantities: $\hat \mu_t, q_\sigma^{\star}(\hat \mu_t)$ and $C_\sigma^{-1}(\hat{\mu}_t)$. We will use the right y-axis of each plot to display the difference between the value in round $t$ and round $t-1$ of the quantities considered.

Figure \ref{fig:C_sigma_analysis}(b) shows how $\|\hat{\mu}_t-\hat{\mu}_{t-1}\|_2$ (normalized by  $\sqrt{XGH}$) and $\|\hat{\mu}_t\|_2$ evolve over time. We clearly see that $\|\hat{\mu}_t\|_2$ quickly converges to some fixed value. This convergence appears in all the plots. Figure  \ref{fig:C_sigma_analysis}(a) shows the value of $C_\sigma^{-1}(\hat \mu_t)$, the true value $C_\sigma^{-1}(\mu)$, and the relative change of $C_\sigma^{-1}(\hat \mu_t)$ between two consecutive steps. We observe that the convergence rate of $\hat{\mu}_t$ dictates also the convergence of $C_\sigma^{-1}(\hat{\mu}_t)$. This suggests that we do not need to solve the lower bound optimization problem too often to update the target allocation, which helps  reduce the computational complexity.

Figures \ref{fig:q_sigma_analysis}(a) and (b) show 2  curves each: the left plot shows the $2$-norm of $q_\sigma^{\star}(\hat \mu_t)$ and  $q_\sigma^{\star}(\hat \mu_t)-q_\sigma^{\star}(\hat \mu_{t-1})$ (the latter normalized by $\sqrt{GHX}$), whilst the right plot shows the same signals computed using the $L^\infty$-norm. In Figure \ref{fig:q_sigma_analysis}(b), notice that the average absolute change in each component of the reference vector is very small, below $3\%$ after few dozens of steps. Furthermore, we can see that this quantity has a convergence rate that is directly dictated by the convergence of  $\hat{\mu}_t$ (even if its convergence rate is smaller). In Figure \ref{fig:q_sigma_analysis}(a), observe that the relative difference between $q_\sigma^{\star}(\hat \mu_t)$ and $q_\sigma^{\star}(\mu)$ around $t=2500$ is upper bounded by roughly $1/9$.

%[0.13520893 0.05163171 0.02092654 0.02094644 0.11569115 0.15480985
% 0.13919268 0.04822979 0.11566771 0.15599333 0.0208252  0.02087668]
% 
%  ci: [0.0035271  0.00337841 0.00073618 0.00073427 0.00086019 0.00119277
% 0.00336469 0.00324041 0.00086516 0.0011714  0.00073075 0.00073064]
%
%true_value = 0.04036373
%opt_allocation = [
%    0.18201361, 0.00484661, 0.00056922, 0.00071107, 0.12869794, 0.18316155,
%    0.18201361, 0.00484661, 0.12869794, 0.18316155, 0.00056922, 0.00071107

\begin{figure}[!t]
	\begin{minipage}{1\textwidth}
		\centering
		\subcaptionbox{}{\includegraphics[width=.49\linewidth]{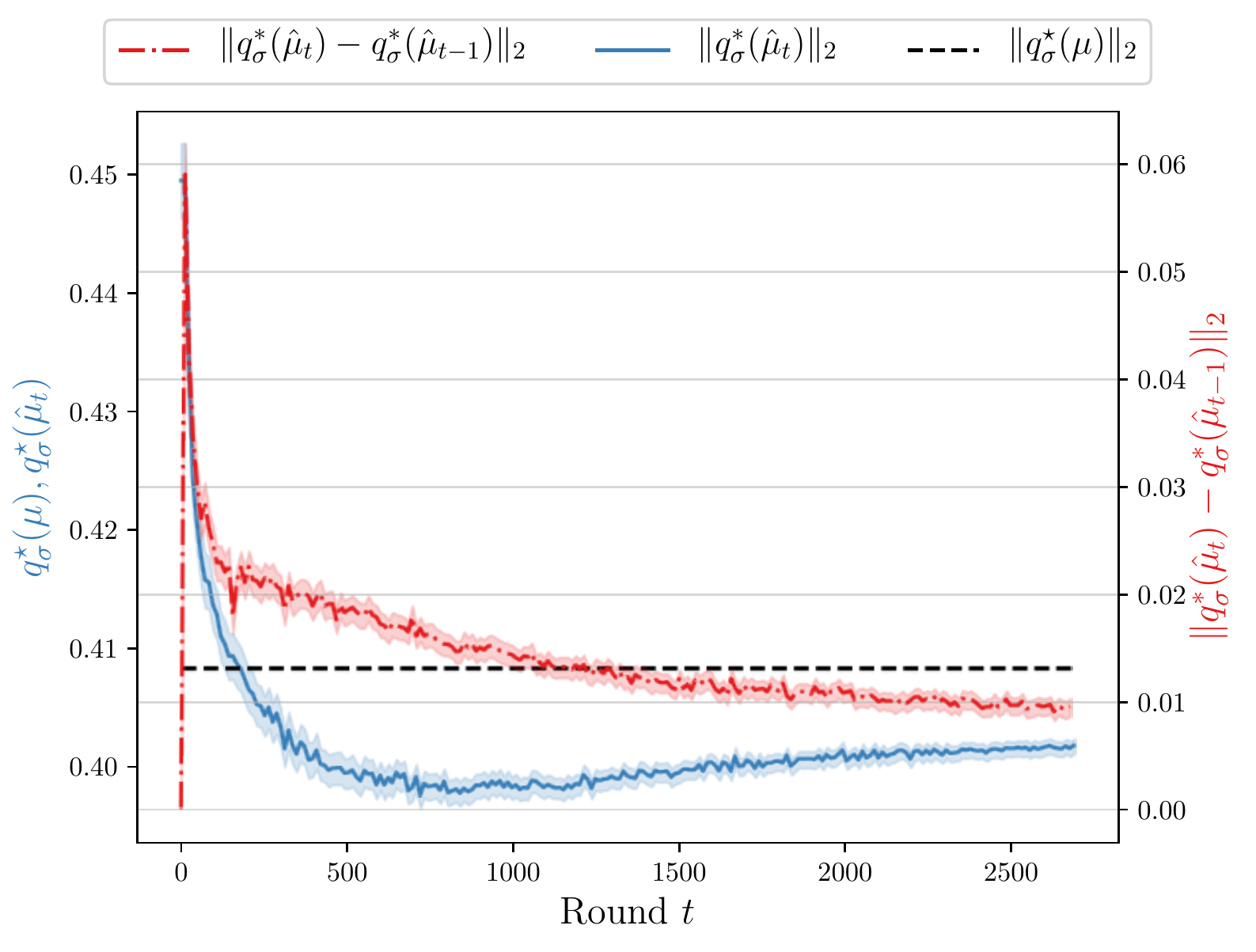}}
		\subcaptionbox{}{\includegraphics[width=.49\linewidth]{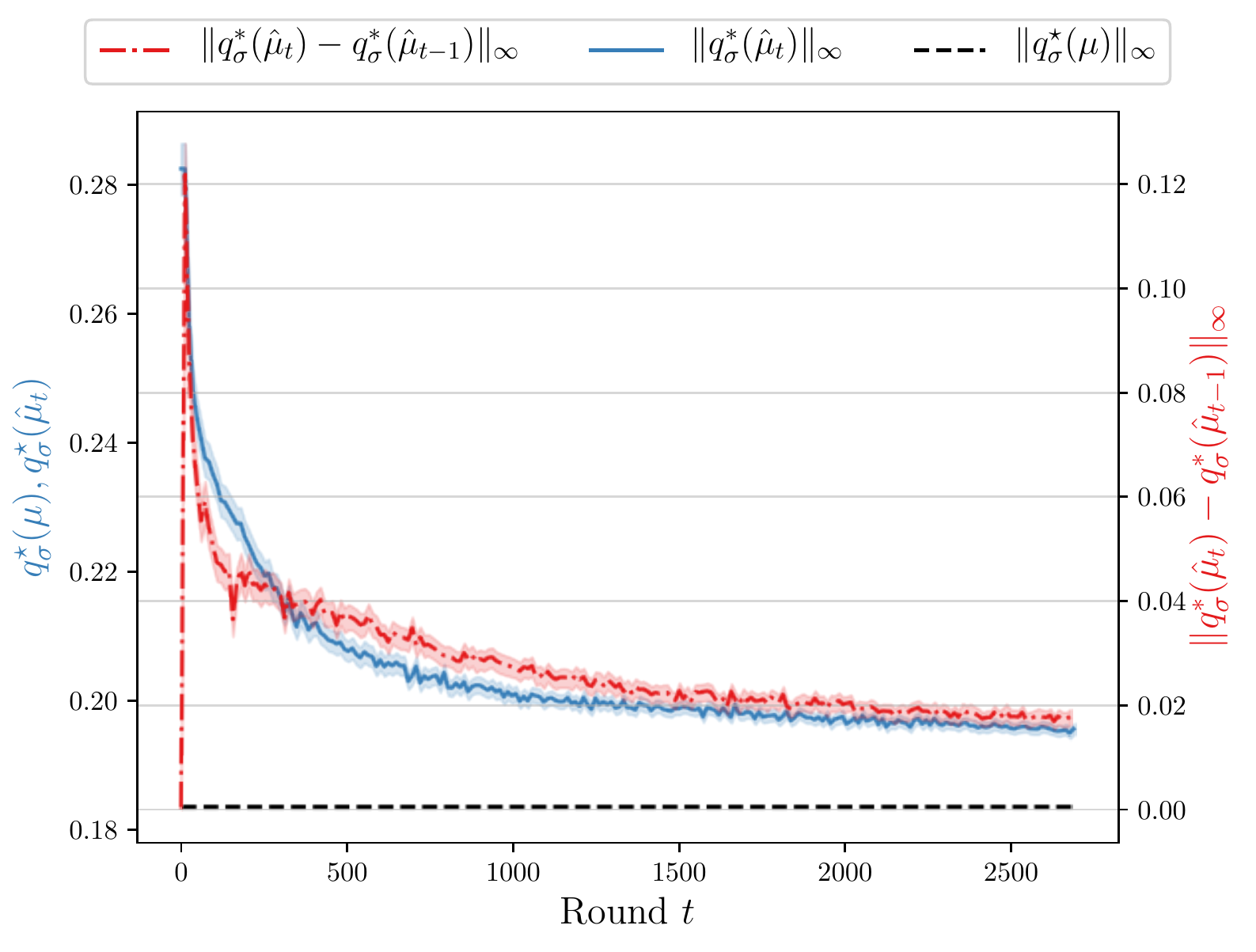}}
	\end{minipage}
	\caption{Analysis of $q_\sigma^{\star}(\hat \mu_t)$ under \textsc{OSRL-SC}. (a) Results using the $2$-norm; (b) Results using the $L^\infty$-norm.  $\|q_\sigma^{\star}(\hat \mu_t)-q_\sigma^{\star}(\hat \mu_{t-1})\|_2$ is normalized by $\sqrt{GHX}$ to show the average change for each component.
	The shadowed areas represent $97.5\%$ confidence interval.}
	\label{fig:q_sigma_analysis}
\end{figure}

Finally, and importantly, in Table \ref{Tab2}, we show the average proportion of arm pulls under \textsc{OSRL-SC} at the stopping time $\tau_G$ compared to the optimal allocation vector $q_\sigma^\star(\mu)$. It turns out that \textsc{OSRL-SC} follows accurately the optimal allocation. The algorithm picks the most informative arms in each task, i.e., it adapts to the task. From this table, we can answer our initial question: to learn $g^\star$ as fast as possible, we need to use all tasks. Task 1 is used to learn that $g_3$ is sub-optimal, and Task 2 is used to learn that $g_2$ is sub-optimal. This is precisely what \textsc{OSRL-SC} is doing.

% !TeX spellcheck = en_GB
% !TeX encoding = UTF-8
\section{Conclusion}\label{sec7}
%topic and thesis statement
In this work, we analyzed knowledge transfer in stochastic multi-task bandit problems with finite arms, using the framework of  best-arm identification with fixed confidence. We proposed \textsc{OSRL-SC}, an algorithm that  transfers knowledge across tasks while approaching the sample complexity lower bound. 
We believe that this paper constitutes a sound starting point to study the transfer of knowledge in more general online multi-task learning problems.
The limitation of this work is that we only consider models with a finite number of tasks and arms, which could limit their application in real life. Furthermore, our algorithm converges to an upper bound of the lower bound. Whether it is possible to approach the sample complexity lower bound is an interesting area of research. Lastly, we think it would be interesting to study different structural assumptions (e.g. linearity) that tie reward functions across tasks together, or extend this work to multi-task reinforcement learning in MDPs.

%
%
%we studied the case where an agent needs to solve an uncontrolled sequence of tasks. We derived a lower bound scaling as $(X+G)H\log(T)$, where $G,H$ and $X$ denote the number of representations, predictors and tasks. In comparison, not transferring knowledge results in a scaling of $XGH\log(T)$. The proposed algorithm, \textsc{OSRL}, based on the task to solve at round $t$, decides if it is preferable to explore, and improve its knowledge of the optimal representation, or exploit.
%In (b) we studied the case where an agent can actively select a task and an action. We derived a lower bound that can be decomposed into two components: one for learning the representation, and one for learning the predictors. The proposed algorithm, \textsc{OSRL-SC}, builds upon this bounds to be optimal. %optimization problem with  two constraints
%
% future research

%
%
%We believe this work paves new ways of studying multi-task learning in bandit problems. Future work includes: (\rm{I}) include smoothness assumptions in the model; (\rm{II}) analyse (a) in case the task-distribution has an ergodic structure; (\rm{III}) understand which assumptions can simplify the lower bound in (b); (\rm{IV}) reduce the number of times we compute the optimal reference vector $q^*(\mu)$ in (b).

\addcontentsline{toc}{section}{References}
\bibliographystyle{plainnat}
\bibliography{ref}
\appendix

% !TeX spellcheck = en_GB
% !TeX root=main_supplementary.tex
% !TeX encoding = UTF-8
\section{Lower bound}

In this section, we provide the proofs of Theorem \ref{thm:full_lower_bound_offline}, and of Propositions \ref{pro:1}-\ref{pro:2}. 

\subsection{Proof of the sample complexity lower bound of Theorem \ref{thm:full_lower_bound_offline}}\label{sub:c2}

The proof relies on classical change-fo-measure arguments, as those used in the classical MAB \cite{kaufmann2016complexity}. 

\begin{proof}
To simplify the notation, let $\tau=\tau_{\boldsymbol{\delta}}$, and further let $\eta (x,g,h) = \mathbb{E}_\mu[N_\tau(x,g,h)]$ at the stopping time $\tau$ (notice that $\mathbb{E}_\mu[\tau] = \sum_{x,g,h} \eta (x,g,h)$). For any model $\mu \in \set M$ we will denote the optimal representation of $\mu$ by $g^\star(\mu)$ and the optimal set of predictors (associated to $g^\star(\mu)$) by $\boldsymbol{h}^\star(\mu)=(h_1^\star, \dots,h_X^\star )(\mu)$. Whenever possible, we will write $g^\star=g^\star(\mu)$ (similarly for $h_x^\star=h_x^\star (\mu),\forall x\in \mathcal{X}$).\\

Define the set of \textit{confusing problems} \[\Lambda (\mu)\coloneqq \left\{\lambda\in {\cal M}: (g^\star,h_1^\star,\ldots,h_X^\star)(\lambda)\neq (g^\star,h_1^\star,\ldots,h_X^\star)(\mu)\right\}.\] 
As in the analysis of the regret lower bound, we will split the analysis by considering two subsets of $\Lambda$,  defined as follows:
\begin{equation*}
	 {\Lambda}_1^\mu\coloneqq \left\{\lambda \in  \Lambda(\mu): g^\star(\lambda)=g^\star(\mu) \right\},\quad {\Lambda}_2^\mu \coloneqq \left\{\lambda \in  \Lambda(\mu): g^\star(\lambda)\neq g^\star(\mu) \right\}.
\end{equation*}
We will now focus on the first subset $\Lambda_1^\mu$, from which we will derive the first constraint (\ref{eq:cons1}) of Theorem \ref{thm:full_lower_bound_offline}. Then, we will focus on the second subset $\Lambda_2^\mu$, from which derives the second constraint (\ref{eq:cons2}).

\underline{\textbf{First constraint (\ref{eq:cons1}).}}\\
We restrict our attention to ${\Lambda}_1^\mu$. Define the set of confusing problems for task $x \in \set X$ as:  ${\Lambda}_1^\mu(x) \coloneqq \{\lambda \in {\Lambda}_1^\mu: h_y^\star(\lambda)=h_y^\star(\mu), \forall y\in \set X\setminus \{x\}\}$, which is a handy definition that we will use in few steps.
 
 Now, consider a $(\delta_G,\delta_H)$-PAC algorithm, and for a specific task $x\in \set X$ define the event $\set{E} = \{\hat{h}_{x} \neq h_{x}^\star, \hat g=g^\star\}$, where $\hat h_x$ and $\hat g$ denote respectively the estimated predictor for task $x$ and the estimated optimal representation at the stopping time $\tau$. Let then $\lambda\in{\Lambda}_1^\mu(x)$, be an alternative bandit model: the expected log-likelihood ratio $L_{\tau}$ at the stopping time $\tau$ of the observations under the two models $\mu$ and $\lambda$ is given by
  \[
  \mathbb{E}_\mu[L_\tau] = \sum_{(y,g,h)\in \set X\times\set G\times \set H}\eta (y,g,h)\skl_{\mu|\lambda}(y,g,h),
  \]
 
 and in view of the \textit{transportation Lemma} 1 in \cite{kaufmann2016complexity} and the definition of $(\delta_G,\delta_H)$-PAC algorithm, we can lower bound the previous quantity at the stopping time $\tau$:
 \[
 \mathbb{E}_\mu[L_\tau] = \sum_{y,g,h}\eta(y,g,h)\skl_{\mu|\lambda}(y,g,h)\geq \skl(\mathbb{P}_\mu(\set{E}), \mathbb{P}_\lambda(\set{E}))=\skl(\delta_H, 1-\delta_H). 
 \]
Then, we can get a tight lower bound by considering the worst possible model $\lambda$. To do so, first introduce the set
 ${\Lambda}_1^\mu(x,h) = \{\lambda \in {\Lambda}_1^\mu(x):  \lambda (x,g^\star,h)> \lambda (x, g^\star,h_x^\star) \}$ which is the set of confusing problems where the predictor $h_x^\star$ is not optimal in task $x$. Observe that one can write $\Lambda_1^\mu(x)= \cup_{h\neq h^\star_x}\Lambda_1^\mu(x,h)$. This rewriting allows us  to derive the first constraint as follows:
\begin{align*}
\skl(\delta_H, 1-\delta_H) &\leq \inf_{\lambda \in {\Lambda}_1^\mu(x)}\sum_{y,g,h} \eta (y,g,h)\skl_{\mu|\lambda}(y,g,h),\\
				&\stackrel{(a)}{=} \inf_{\lambda \in {\Lambda}_1^\mu(x)}\sum_{h} \eta (x,g^\star,h)\skl_{\mu|\lambda}(x,g^\star,h),\\
				&= \min_{h\neq h_x^\star}\inf_{\lambda \in {\Lambda}_1^\mu(x,h)}\left[\eta (x,g^\star,h_x^\star)\skl_{\mu|\lambda}(x,g^\star,h_x^\star)+\eta (x,g^\star,h) \skl_{\mu|\lambda}(x,g^\star,h)\right],\\
				&\stackrel{(b)}{=}\min_{h\neq h_x^\star}  (\eta (x,g^\star,h_x^\star)+\eta (x,g^\star,h)) I_{\alpha_{x,g^\star,h}}(\mu(x,g^\star,h_x^\star), \mu(x,g^\star,h)).
\end{align*}
where (a) follows from the fact that in $\lambda$ we are changing the predictor of only one task $x$; (b) follows by solving the infimum problem as in Lemma 3 of \cite{garivier2016optimal}, and from the definition of generalized Shannon divergence $I_\alpha(x,y)$ with $\alpha_{x,g,h}$ defined as
\[
\alpha_{x,g,h} =  \frac{\eta (x,g^\star,h_x^\star)}{\eta (x,g^\star,h_x^\star)+\eta (x,g,h)}.
\]

\underline{\textbf{Second constraint (\ref{eq:cons2}).}}\\
 Similarly to the previous case consider a $(\delta_G,\delta_H)-$PAC algorithm and define the event $\set{E} = \{ \hat g\neq g^\star\}$: then, we can apply the transportation Lemma 1 in \cite{kaufmann2016complexity} at the stopping time $\tau$ to obtain: for every $\lambda\in {\Lambda}_2^\mu$,
 \begin{equation}\label{eq:c12}
\skl(\delta_G,1-\delta_G) \leq 	\sum_{x,g,h} \eta (x,g,h)\skl_{\mu|\lambda}(x,g,h) .
 \end{equation}
We will consider subsets of ${\Lambda}_2^\mu$ defined as follows: for every $\bar g \in \set G$ such that $ \bar g\neq g^\star$ we define $ {\Lambda}_2^\mu (\bar{g})\coloneqq \left\{\lambda \in  {\Lambda}_2^\mu: g^\star(\lambda)=\bar g \right\}$. Similarly, we also define
\[
 \quad {\Lambda}_2^\mu (\bar{g},\bar{\boldsymbol{h}})\coloneqq \left\{\lambda \in  {\Lambda}_2^\mu(\bar g): (g^\star,h_1^\star,\ldots,h_X^\star)(\lambda)=(\bar{g},\bar{\boldsymbol{h}}) \right\},\forall (\bar{g},\bar{\boldsymbol{h}})\in  (\mathcal{G}\setminus\{g^\star\})\times \mathcal{H}^\mathcal{X} ,
	\]
	where $(\bar{g},\bar{\boldsymbol{h}})=(\bar{g},\bar{h}_1,\ldots,\bar{h}_X)$. Furthermore, notice that ${\Lambda}_2^\mu=\cup_{\bar g \neq g^\star}\Lambda_2^\mu(g)=\cup_{\bar{g}\neq g^\star} \cup_{\bar{\boldsymbol{h}}\in \mathcal{H}^\mathcal{X}} {\Lambda}_2^\mu(\bar{g},\bar{\boldsymbol{h}})$. Hence, the bound in (\ref{eq:c12}) can be reformulated in the following manner:
\begin{align*}
\skl(\delta_G, 1-\delta_G) &\leq \inf_{\lambda \in \Lambda_2^\mu} \sum_{x,g,h} \eta (x,g,h)\skl_{\mu|\lambda}(x,g,h),\\
&=\min_{\bar g \neq g^\star} \inf_{\lambda \in \Lambda_2^\mu(\bar g)} \sum_{x,g,h} \eta (x,g,h)\skl_{\mu|\lambda}(x,g,h),\\
&= \min\limits_{\substack{\bar{g}\neq g^\star\\  \bar{\boldsymbol{h}}\in \mathcal{H}^\mathcal{X}  } } \inf_{\lambda\in {\Lambda}_2^\mu(\bar{g},\bar{\boldsymbol{h}})} \sum_{x,g,h} \eta (x,g,h)\skl_{\mu|\lambda}(x,g,h),\\
											&= \min\limits_{\substack{\bar{g}\neq g^\star\\  \bar{\boldsymbol{h}}\in \mathcal{H}^\mathcal{X}  } }  \sum_{x\in \mathcal{X}} \inf_{\lambda\in {\Lambda}_2^\mu(\bar{g},\bar{\boldsymbol{h}})} \sum_{g,h} \eta (x,g,h)\skl_{\mu|\lambda}(x,g,h).
\end{align*}
The equation above stems from the fact that $(\bar g, \bar{\boldsymbol{h}})$ is fixed, for all the tasks. Using Lemma \ref{lemma:second_ineq_solution_inner_opt} we can replace the right-hand term in the above inequality as follows:
\begin{align*}
\skl(\delta_G, 1-\delta_G) &= \min\limits_{\substack{\bar{g}\neq g^\star\\  \bar{\boldsymbol{h}}\in \mathcal{H}^\mathcal{X}  } }  \sum_{x\in \mathcal{X}}\sum_{(g,h)\in \set{U}_{x,\bar g, \bar{h}_x}^\mu (\eta)} \eta(x,g,h) \skl(\mu(x,g,h), m(x,\bar g, \bar{h}_x;\eta,\mu)), \\
&=\min\limits_{\bar{g}\neq g^\star }  \sum_{x\in \mathcal{X}} \min_{\bar{h}_x\in \mathcal{H}}  \sum_{(g,h)\in \set{U}_{x,\bar g, \bar h_x}^\mu(\eta)} \eta(x,g,h) \skl(\mu(x,g,h), m(x,\bar g, \bar h_x;\eta,\mu)).
\end{align*}
The last equality holds because the quantity \[ \sum_{(g,h)\in \set{U}_{x,\bar g, \bar{\boldsymbol{h}}}^\mu(\eta)} \eta(x,g,h) \skl(\mu(x,g,h), m(x,\bar g, \bar h_x;\eta,\mu)),\] only depends on which $\bar{h}_x$ is chosen in task $x$. The proof is hence completed.
\end{proof}
\begin{lemma}\label{lemma:second_ineq_solution_inner_opt}
	For any $\eta \in \mathbb{R}_{\geq 0}^{\set X\times \set G \times \set H}$, for a fixed $x\in\mathcal{X}$ and an action $(\bar{g},\bar{\boldsymbol{h}})\in (\mathcal{G}\setminus\{g^\star\})\times \mathcal{H}^\mathcal{X}$,
	\[
	\inf_{\lambda \in {\Lambda}_2^\mu(\bar{g},\bar{\boldsymbol{h}})} \sum_{g,h} \eta(x,g,h) \skl_{\mu|\lambda}(x,g,h) = \sum_{(g,h)\in \set{U}_{x,\bar g, \bar h_x}^\mu(\eta)} \eta(x,g,h) \skl(\mu(x,g,h), m(x,\bar g, \bar h_x;\eta,\mu)),
	\]
	where $m(x,\bar g, \bar h_x;\eta,\mu)=\frac{\sum_{(g,h)\in \set U_{x,\bar g, \bar h_x}^\mu(\eta)} \eta(x,g,h)\mu(x,g,h)}{\sum_{(g,h)\in \set U_{x,\bar g, \bar h_x}^\mu(\eta)} \eta(x,g,h)},$ and $ \set{U}_{x,\bar g, \bar h_x}^\mu(\eta)$ is defined as
	\[
	\set{U}_{x,\bar g, \bar h_x}^\mu(\eta) \coloneqq \left\{(g',h') \in \mathcal{G}\times \mathcal{H}:  \mu(x,g',h') \geq \frac{\sum_{(g,h) \in \mathcal{N}_{x,g',h'}^\mu} \eta(x,g,h)\mu(x,g,h)} {\sum_{(g,h) \in \mathcal{N}_{x,g',h'}^\mu} \eta(x,g,h)} \right\}\cup \{\bar g, \bar h_x\}, 
	\]
	with $\mathcal{N}_{x,g,h}^\mu\coloneqq \{(g',h') \in \mathcal{G}\times \mathcal{H}: \mu(x,g',h')\geq \mu(x,g,h) \} \cup \{(\bar g, \bar h_x)\}$. 
\end{lemma}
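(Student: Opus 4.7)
The plan is to reduce the constrained KL-minimization to a water-filling argument, and then identify the active set with $\set{U}_{x,\bar g, \bar h_x}^\mu(\eta)$. First I would observe that the objective depends only on $\lambda$ restricted to task $x$, and that the constraint $\lambda \in \Lambda_2^\mu(\bar g, \bar{\boldsymbol{h}})$ at task $x$ reduces (after taking the closure, which is harmless since the objective is continuous and we only need the infimum) to $\lambda(x,\bar g,\bar h_x)\ge \lambda(x,g,h)$ for all $(g,h)$; the coordinates of $\lambda$ at the other tasks can be set freely so as to enforce $(\bar g, \bar h_y)$ optimal for $y\neq x$ without contributing to the task-$x$ sum. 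Setting $m := \lambda(x,\bar g,\bar h_x)$, the inner infimum becomes: minimize $\sum_{g,h} \eta(x,g,h)\,\skl(\mu(x,g,h),\lambda(x,g,h))$ subject to $\lambda(x,g,h)\le m$ and $\lambda(x,\bar g,\bar h_x)=m$.

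Next, for fixed $m$ I would optimize over each $\lambda(x,g,h)$ independently. Since $b\mapsto \skl(a,b)$ is strictly convex with minimum at $b=a$ and decreasing on $[0,a]$, the optimum is $\lambda(x,g,h)=\min\{\mu(x,g,h),m\}$ away from $(\bar g,\bar h_x)$. Defining the "above-$m$" set $\mathcal{A}(m):=\{(g,h):\mu(x,g,h)>m\}\cup\{(\bar g,\bar h_x)\}$, only these arms contribute and the reduced cost is
\[
C(m)=\sum_{(g,h)\in\mathcal{A}(m)}\eta(x,g,h)\,\skl(\mu(x,g,h),m).
\]
Over any range where $\mathcal{A}(m)$ is constant, $C$ is convex in $m$, and using $\partial_b \skl(a,b)=(b-a)/(b(1-b))$ the first-order condition yields the weighted mean $m = m(x,\eta,\mathcal{A}(m))$; any optimizer $m^\star$ is therefore a fixed point of $m\mapsto m(x,\eta,\mathcal{A}(m))$.

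Then I would identify $\mathcal{A}(m^\star)$ with $\set{U}_{x,\bar g,\bar h_x}^\mu(\eta)$ via the recursive construction described right after \cref{thm:full_lower_bound_offline}. Sorting arms by $\mu(x,\cdot,\cdot)$ in decreasing order and successively appending them to a set initialized with $(\bar g,\bar h_x)$, the weighted average evolves monotonically until the next candidate has mean strictly below the current average; this stopping condition is exactly the inclusion criterion $\mu(x,g,h)\ge m(x,\eta,\mathcal{N}_{x,g,h;\bar g,\bar h_x}^\mu)$ that defines $\set{U}$. The iteration terminates in at most $GH$ steps, its output is $\set{U}_{x,\bar g,\bar h_x}^\mu(\eta)$, and plugging $m^\star=m(x,\eta,\set{U}_{x,\bar g,\bar h_x}^\mu(\eta))$ back into $C(m^\star)$ yields the right-hand side of the claimed identity.

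The principal obstacle is the boundary case where some $\mu(x,g,h)$ equals $m^\star$ exactly: such an arm can be included or excluded from $\mathcal{A}(m^\star)$ without altering the cost since $\skl(\mu(x,g,h),m^\star)=0$, so one must check that both choices give the same optimal value and that the fixed-point equation $m=m(x,\eta,\mathcal{A}(m))$ pins down a unique $m^\star$. Strict convexity of the Bernoulli KL in its second argument together with the monotonicity of the sweep through arms in decreasing order of $\mu$ should resolve this: the map $m\mapsto m(x,\eta,\mathcal{A}(m))$ is non-increasing in $m$ over the relevant regime, forcing a unique crossing and hence a unique optimizer.
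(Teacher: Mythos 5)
Your proposal takes essentially the same route as the paper's own proof: the paper performs exactly your first reduction (restricting attention to task $x$, relaxing the strict dominance in $\set M$ to the closed constraints $\lambda(x,g,h)\le\lambda(x,\bar g,\bar h_x)$, with the other tasks handled for free), and then simply asserts that ``one can directly verify'' the water-filling solution $\lambda(x,g,h)=m(x,\bar g,\bar h_x;\eta,\mu)$ on $\set U^\mu_{x,\bar g,\bar h_x}(\eta)$ and $\lambda(x,g,h)=\mu(x,g,h)$ elsewhere. Your coordinate-wise optimization, first-order condition, and sweep identifying the active set with $\set U^\mu_{x,\bar g,\bar h_x}(\eta)$ are precisely the verification the paper omits, and the main chain of your argument is sound.

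However, the sub-claim in your last paragraph is false as stated: the map $m\mapsto m(x,\eta,\mathcal{A}(m))$ is \emph{not} non-increasing in $m$. Counterexample: three arms with unit weights, $\mu(x,\bar g,\bar h_x)=0.1$ and two further arms with means $0.2$ and $0.9$; for $m$ slightly below $0.2$ the weighted average over $\mathcal{A}(m)$ is $0.4$, while for $m\in(0.2,0.9)$ it is $0.5$, so the map jumps \emph{up} at the breakpoint. (Whether it jumps up or down depends on whether the exiting arm lies below or above the current average, so no global monotonicity holds.) Thus your ``unique crossing'' argument does not go through as written. The repair is already among your ingredients and makes uniqueness unnecessary: each summand $b\mapsto \eta(x,g,h)\,\skl(\mu(x,g,h),b)\,\indi\{b<\mu(x,g,h)\}$, extended by $0$ for $b\ge\mu(x,g,h)$, is convex and continuously differentiable on $(0,1)$ because $\partial_b\,\skl(a,b)=(b-a)/(b(1-b))$ vanishes at $b=a$; adding the always-active $(\bar g,\bar h_x)$ term, the reduced cost $C$ is \emph{globally} convex and $C^1$, not merely convex on each piece as you state. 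Hence every fixed point of the averaging map is a zero of $C'$ and therefore a global minimizer of $C$; since your sweep terminates at such a fixed point, whose active set agrees with $\set U^\mu_{x,\bar g,\bar h_x}(\eta)$ up to arms with $\mu(x,g,h)=m^\star$ — which contribute zero cost and leave the average unchanged — the claimed identity of values follows without pinning down a unique $m^\star$ (the lemma only asserts the value of the infimum, not uniqueness of the minimizer).
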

\begin{proof}%[(\textit{Proof of \ref{lemma:second_ineq_solution_inner_opt}}).]
Bear in mind that ${\Lambda}_2^\mu(\bar{g},\boldsymbol{\bar{h}})$  is the set of models where $\bar{g}$ is the optimal representation, and $\boldsymbol{\bar{h}}$ is the set of optimal predictors. Consequently, for a specific task $x$, we can rewrite  $	\inf_{\lambda \in {\Lambda}_2^\mu(\bar{g},\bar{\boldsymbol{h}})} \sum_{g,h} \eta(x,g,h) \skl_{\mu|\lambda}(x,g,h)$  as the solution of the following optimization problem:
\begin{equation*}
\begin{aligned}
\inf_{\lambda \in \set{M}}\quad & \sum_{g,h} \eta(x,g,h) \skl_{\mu|\lambda}(x,g,h),\\
\textrm{s.t.} \quad & \lambda(x,g,h) \leq \lambda(x, \bar{g}, \bar{h}_x),\forall (g,h)\neq (\bar g, \bar h_x).
\end{aligned}
\end{equation*}
Define $\mathcal{N}_{x,g,h}^\mu$ to be the set of arms whose mean is bigger than $\mu(x,g,h)$, and that also include $(\bar g, \bar h_x)$, i.e., $\mathcal{N}_{x,g,h}^\mu= \{(g',h') \in \mathcal{G}\times \mathcal{H}: \mu(x,g',h')\geq \mu(x,g,h) \} \cup \{(\bar g, \bar h_x)\}$. Then, one can directly verify that the solution of the previous optimization problem is given by
\[ 
% m(x,\bar g, \bar h_x) = \lambda(x,\bar g, \bar h_x) \hbox{ and } 
 \begin{cases}
 \lambda(x,g,h)= m(x,\bar g, \bar h_x; \eta, \mu)&\hbox{ for } (g,h) \in \set{U}_{x,\bar g, \bar h_x}^\mu(\eta),\\
  \lambda(x,g,h)=\mu(x,g,h),&\hbox{ otherwise,}
 \end{cases} 
 \]
where
\[ 
m(x,\bar g, \bar h_x;\eta,\mu) =  \frac{\sum_{(g,h)\in \set U_{x,\bar g, \bar h_x}^\mu(\eta)} \eta(x,g,h)\mu(x,g,h)}{\sum_{(g,h)\in \set U_{x,\bar g, \bar h_x}^\mu(\eta)} \eta(x,g,h)},
\]
and the set $\set{U}_{x,\bar g, \bar h_x}^\mu(\eta)$ is defined as
\[
\set{U}_{x,\bar g, \bar h_x}^\mu(\eta) = \left\{(g',h') \in \mathcal{G}\times \mathcal{H}:  \mu(x,g',h') \geq \frac{\sum_{(g,h) \in \mathcal{N}_{x,g',h'}^\mu} \eta(x,g,h)\mu(x,g,h)} {\sum_{(g,h) \in \mathcal{N}_{x,g',h'}^\mu} \eta(x,g,h)} \right\} \cup \{\bar g, \bar h_x\}.
\] 
Observe that the set $\set{U}_{x,\bar g, \bar h_x}(\eta)$ is nonempty, since it includes $(\bar{g}, \bar{h}_x)$ and $(g^\star, h_x^\star)$. Therefore we have
\[
\inf_{\lambda \in {\Lambda}_2^\mu(\bar{g},\bar{\boldsymbol{h}})} \sum_{g,h} \eta(x,g,h) \skl_{\mu|\lambda}(x,g,h) = \sum_{(g,h)\in \set{U}_{x,\bar g, \bar h_x}^\mu(\eta)} \eta(x,g,h) \skl(\mu(x,g,h),  m(x,\bar g, \bar h_x;\eta,\mu)). 
\]
\end{proof}

\subsection{Proofs of Propositions \ref{pro:1} and \ref{pro:2}}

\begin{proof}[Proof of Proposition \ref{pro:1}]
Suppose $\eta^{(G)}$ ($\eta^{(H)}$ respectively) are vectors of $\mathbb{R}^{\mathcal{X}\times \mathcal{G}\times \mathcal{H}}_{\geq 0}$ that satisfies constraint (\ref{eq:cons1}) ((\ref{eq:cons2}) respectively). Since the KL-divergence is non-negative, $\eta^{(G)} + \eta^{(H)}$ satisfies both  constraints \ref{eq:cons1} and \ref{eq:cons2}. Consequently, we have that  $\eta^{(G)}+\eta^{(H)}\ge K^\star(\mu,\boldsymbol{\delta})$. We get that $K^\star(\mu,\delta_G)+K^\star(\mu,\delta_H)\ge  K^\star(\mu,\boldsymbol{\delta})$.
\end{proof}

\begin{proof}[Proof of Proposition \ref{pro:2}]
For ease of notation let $\set{U}_{x,\bar g, \bar h_x}=\set{U}_{x,\bar g, \bar h_x}^\mu(\eta)$ and $m(x,\bar g, \bar h_x)=m(x,\bar g, \bar h_x;\eta,\mu) $.
	From Lemma \ref{lemma:second_ineq_solution_inner_opt} and Theorem \ref{thm:full_lower_bound_offline} we have that
	\begin{align*}&\min_{\bar{\boldsymbol{h}}\in \mathcal{H}^\mathcal{X}  }  \inf_{\lambda\in {\Lambda}_2^\mu(\bar{g},\bar{\boldsymbol{h}})} \sum_{x,g,h} \eta (x,g,h)\skl_{\mu|\lambda}(x,g,h),\\
	=& \sum_{x\in \mathcal{X}} \min_{\bar{h}_x\in \mathcal{H}}  \sum_{(g,h)\in \set{U}_{x,\bar g, \bar h_x}} \eta(x,g,h) \skl(\mu(x,g,h), m(x,\bar g, \bar h_x)),\\
	\geq & \max_x \min_{\bar h_x}\sum_{(g,h)\in \set{U}_{x,\bar g, \bar h_x}} \eta(x,g,h) \skl(\mu(x,g,h), m(x,\bar g, \bar h_x)),
	\end{align*}
	where the last inequality stems the fact that we are considering a sum of non-negative terms. By using the fact that $ (x,g^\star,h_x^\star),(x,\bar{g},\bar{h})\in   \mathcal{U}_{x,\bar{g},\bar{h}}$, define $\set S=\{(x,g^\star,h_x^\star),(x,\bar{g},\bar{h})\}$, then
\begin{align*}
\sum_{(g,h)\in \set{U}_{x,\bar g, \bar h}} \eta(x,g,h) \skl(\mu(x,g,h), m(x,\bar g, \bar h))&\ge  \sum_{(g,h)\in \set S}\eta(x,g,h)\skl(\mu(x,g,h), m(x,\bar g, \bar h)),\\
&\geq \inf_{d\in [0,1]}\sum_{(g,h)\in \set S}\eta(x,g,h)\skl(\mu(x,g,h),d).
\end{align*}
Denote the minimizer by $\bar d$, then we obtain we obtain
\[\sum_{(g,h)\in \set S}\eta(x,g,h)\skl(\mu(x,g,h),\bar d)=(\eta(x,g^\star,h_x^\star)+\eta(x,\bar{g},\bar{h}))I_{\alpha_{x,\bar{g},\bar{h}_x}}(\mu(x,g^\star,h_x^\star),\mu(x,\bar{g},\bar{h})).\]
Therefore, using the previous result, together with the result from Theorem \ref{thm:full_lower_bound_offline}, we can derive that for all $\bar g\neq g^\star$ the following inequality holds
\[\skl(\delta_G, 1-\delta_G)\leq \max_x \min_{\bar h_x} (\eta(x,g^\star,h_x^\star)+\eta(x,\bar{g},\bar{h}))I_{\alpha_{x,\bar{g},\bar{h}_x}}(\mu(x,g^\star,h_x^\star),\mu(x,\bar{g},\bar{h})),\]
from which we get the statement of Proposition \ref{pro:2}.
\end{proof}

%The above propositions imply that our sample complexity lower bound scales at most as $H(G\log(1/\delta_G)+ X\log(1/\delta_H))$. On the other hand, not considering the structure of the problem, and solving each task individually, would result in a scaling of $GHX\skl(\delta,1-\delta)$, with $\delta=\delta_G+\delta_H$. The proof is simple: consider a single task $x$, and define  the event $\set{E}=\underbrace{\{\hat g\neq g^\star, \hat h_x = h_x^\star\}}_{\set{E}_1} \cup \underbrace{\{\hat g= g^\star, \hat h_x \neq h_x^\star\}}_{\set E_2} $: the two events $\set E_1$ and $\set E_2$ are disjoint, thus for any measure $\lambda $ we get $\mathbb{P}_\lambda(\set E) = \mathbb{P}_\lambda(\set E_1) +\mathbb{P}_\lambda(\set E_2) $, and hence, we have the result by applying the Transportation lemma.
% !TeX spellcheck = en_GB
% !TeX root=main_supplementary.tex
% !TeX encoding = UTF-8

\newpage

\section{Sample complexity of \textsc{OSRL-SC}}

This section is devoted to the analysis of \textsc{OSRL-SC}. We first provide useful notations and then discuss the implementation of the algorithm in detail. In particular, we provide a full description of our stopping rule (not described in the main text due to space constraints). Finally, we analyse the sample complexity of \textsc{OSRL-SC}.

\subsection{Preliminaries}

We introduce some useful notation used in the remaining of the paper. Recall the following quantity from Theorem \ref{thm:full_lower_bound_offline}:
\begin{align*}
\inf_{\lambda \in \Lambda_2^\mu} \mathbb{E}[L_\tau]&=\inf_{\lambda \in \Lambda_2^\mu} \sum_{x,g,h}\eta(x,g,h) \skl_{\mu|\lambda}(x,g,h),\\
&=\min\limits_{\substack{\bar{g}\neq g^\star\\  \bar{\boldsymbol{h}}\in \mathcal{H}^\mathcal{X}  } }  \sum_{x\in \mathcal{X}} \inf_{\lambda\in {\Lambda}_2^\mu(\bar{g},\bar{\boldsymbol{h}})} \sum_{g,h} \eta (x,g,h)\skl_{\mu|\lambda}(x,g,h),\\&= \min_{\bar g\neq g^{\star}} \sum_{x}  \min_{\bar h_x \in \set H}\sum_{(g,h)\in \set{U}_{x,\bar g, \bar h_x}^\mu(\eta)} \eta(x,g,h) \skl(\mu(x,g,h), m(x,\bar g, \bar h_x;\eta,\mu)),\end{align*}
where $g^{\star}=g^{\star}(\mu)$. If one lets $q\in \Sigma$ be defined as $q(x,g,h)=\eta(x,g,h)/ \sum_{x,g,h}\eta(x,g,h)$ then one obtains the definition of $\rho(q,\mu)$:
\begin{equation}
\inf_{\lambda \in \Lambda_2^\mu} \sum_{x,g,h}\eta(x,g,h) \skl_{\mu|\lambda}(x,g,h) = \mathbb{E}[\tau]\underbrace{\inf_{\lambda \in \Lambda_2^\mu} \sum_{x,g,h}q(x,g,h) \skl_{\mu|\lambda}(x,g,h)}_{\coloneqq \rho(q,\mu)}.
\end{equation}
Therefore, we can rewrite $\rho(q,\mu)$  as follows
\begin{align*}
\rho(q,\mu)&= \inf_{\lambda \in \Lambda_2^\mu} \sum_{x,g,h}q(x,g,h) \skl_{\mu|\lambda}(x,g,h),\\
&= \min\limits_{\substack{\bar g\neq g^{\star}\\ \bar{\boldsymbol{h}}\in \set{H}^{\set X}}}\sum_x \inf_{\lambda \in {\Lambda}_2^\mu(\bar{g},\bar{\boldsymbol{h}})} \sum_{g,h} q(x,g,h) \skl_{\mu|\lambda}(x,g,h),\\
&\stackrel{(a)}{=}\min_{\bar g\neq g^{\star}} \sum_{x}  \min_{\bar h_x \in \set H}\sum_{(g,h)\in \set{U}_{x,\bar g, \bar h_x}^\mu(q)} q(x,g,h) \skl(\mu(x,g,h),  m(x,\bar g, \bar h_x;q, \mu)),
\end{align*}
where (a) follows from Lemma \ref{lemma:second_ineq_solution_inner_opt}. For simplicity, in the following, unless it is not clear from the context we will simply write $\set U_{x, \bar g, \bar h_x}$ instead of $\set U_{x, \bar g, \bar h_x}^{\mu}(\eta)$, for any $\mu \in \set M$, $\eta\in \mathbb{R}^{\set X\times \set G\times \set H}_{\geq 0}$. Similarly, we will write $m(x, \bar g, \bar h_x)$ instead of $m(x, \bar g, \bar h_x; \eta, \mu)$.

Now, let $\sigma>0$ and define the regularized version of $\rho$:
\[\rho_\sigma(q,\mu) \coloneqq  \rho(q,\mu) - \frac{1}{2\sigma}\|q\|_2^2, \quad \sigma \in \mathbb{R}_{>0}.\]
For a given $\mu\in \set M$, define  by $\rho_\sigma^{\star}$ and  $q_\sigma^{\star}$ respectively the maximum and the maximizer of $\rho_\sigma: q\to\mathbb{R}$:
\begin{gather*}
\rho_\sigma^\star(\mu) \coloneqq \max_{q\in \Sigma}\rho_\sigma(q,\mu),\quad
q_\sigma^\star(\mu) \coloneqq \argmax_{q\in \Sigma} \rho_\sigma(q,\mu),
\end{gather*}
where $\Sigma$ denotes the probability simplex over $\set X \times \set G\times \set H$. The  function $\rho_\sigma(\cdot, \mu)$ is strictly concave in $q$ for each $(\sigma, \mu)$. Therefore, for this function there is a unique maximizer that we denote by $q_\sigma^\star(\mu)$. We also know from Berge's Maximum Theorem \cite{berge1963topological} (check \cite{sundaram1996first} for a book reference) that $q_\sigma^\star(\mu)$ is a continuous function in $(\sigma, \mu)$, due to the strong concavity of $\rho_\sigma(\cdot,\mu)$. We define a time-dependent version of $\rho$ used by the algorithm: when $\hat{\mu}_t\in {\cal M}$, 
\begin{align}
\rho(q,\hat \mu_t)&= \inf_{\lambda \in \Lambda_2^{\hat \mu_t}} \sum_{x,g,h}q(x,g,h) \skl_{\mu|\lambda}(x,g,h),\nonumber\\
&= \min\limits_{\substack{\bar g\neq g_t^{\star}\\ \bar{\boldsymbol{h}}\in \set{H}^{\set X}}}\sum_x \inf_{\lambda \in {\Lambda}_2^{\hat{\mu}_t}(\bar{g},\bar{\boldsymbol{h}})} \sum_{g,h} q(x,g,h) \skl_{\mu|\lambda}(x,g,h),\nonumber\\
&=\min_{\bar g\neq g_t^{\star}} \sum_{x}  \min_{\bar h_x}\sum_{(g,h)\in \set{U}_{x,\bar g, \bar h_x}^{(t)}} q(x,g,h) \skl(\hat\mu_t(x,g,h),  m_t(x,\bar g, \bar h_x)),
\end{align}
where $g_t^{\star} = \argmax_{g} \hat \mu_t(g)$, $m_t(x,\bar g, \bar h_x)= m(x, \bar g, \bar h_x; q, \hat \mu_t)$ and $\set U_{x,\bar g, \bar h_x}^{(t)}=\set U_{x,\bar g, \bar h_x}^{\hat{\mu}_t}(q)$.

 Finally, we can define for any $\hat \mu_t \in \set M$ the function $\rho_\sigma(q,\hat \mu_t) = \rho(q,\hat \mu_t) - \frac{1}{2\sigma}\|q\|_2^2$ and the pair $\rho_\sigma^{\star}(\hat \mu_t) = \max_{q\in \Sigma} \rho_\sigma(q,\hat \mu_t),q_\sigma^{\star}(\hat \mu_t) =  \argmax_{q\in \Sigma} \rho_\sigma(q,\hat \mu_t)$. 
% 
% For theoretical purposes, we can also define the extended version of $\rho(q,\hat{\mu}_t)$:\[ \tilde{\rho}(q,\hat \mu_t)=\begin{cases}\rho(q,\hat{\mu}_t)\quad &\hat\mu_t \in \set M, \\ 0\quad &\hbox{otherwise.} \end{cases},\]
% from which we also define the extended version of $\rho_\sigma(q, \hat \mu_t)$:
% \[ \tilde{\rho}_\sigma(q,\hat \mu_t)=\begin{cases}\rho_\sigma(q,\hat{\mu}_t)\quad &\hat\mu_t \in \set M, \\ 0\quad &\hbox{otherwise.} \end{cases},\]

\subsection{Algorithmic details of \textsc{OSRL-SC}} \label{sec:algo_details_osrl_sc}
An essential part of the algorithm is computing the value of $q_\sigma^{\star}(\hat \mu_t)$ and $\rho_\sigma(\hat \mu_t)$. To this aim, we make use of the following lemma, which allows us to compute the aforementioned value by considering the dual problem of $\rho(q,\mu)$.
\begin{lemma}
For any $\mu\in \set M$, the  optimization problem:
\begin{equation}
\sup_{q\in \Sigma}\rho(q,\mu)=\sup_{q\in\Sigma}\min_{\bar g\neq g^\star} \sum_{x}  \min_{\bar h_x}\sum_{(g,h)\in \set{U}_{x,\bar g, \bar h}} q(x,g,h) \skl(\mu(x,g,h),  m(x,\bar g, \bar h_x)),
\end{equation}
 is equivalent to
\begin{equation}\sup_{q \in \Sigma}\inf\limits_{ \substack{\lambda\in \Delta(A)\\\theta\in \Delta(B)}} \sum_{\bar g\neq g^\star}\lambda_{\bar g}\sum_{x,\bar h} \theta_{x,\bar g,\bar h}\sum_{(g,h)\in \set{U}_{x,\bar g, \bar h}} q(x,g,h) \skl(\mu(x,g,h),  m(x,\bar g, \bar h)),\end{equation}
with $\Sigma$ being the probability simplex over $\set X\times \set G \times \set H$ and $\Delta(k), k\in \mathbb{N}$, being the probability simplex of dimension $k-1$, with $A=G-1$ and $B=HX(G-1)$.
\end{lemma}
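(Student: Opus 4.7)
The plan is to invoke the elementary identity $\min_{i\in[n]} a_i = \inf_{\lambda\in\Delta(n)} \sum_{i=1}^n \lambda_i a_i$, valid for any finite family of real numbers, and apply it to the two nested minima appearing inside $\rho(q,\mu)$. Because $q$ enters the supremum identically on both sides of the claimed equivalence, it suffices to prove the dual rewriting of the inner expression
\[
\min_{\bar g\neq g^\star}\sum_{x\in\set X}\min_{\bar h_x\in\set H} f_q(x,\bar g,\bar h_x),\quad f_q(x,\bar g,\bar h)\coloneqq \sum_{(g,h)\in\set U_{x,\bar g,\bar h}} q(x,g,h)\,\skl\!\left(\mu(x,g,h),m(x,\bar g,\bar h)\right),
\]
and then take $\sup_{q\in\Sigma}$ on both sides.

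First, fix $\bar g\neq g^\star$ and $x\in\set X$ and rewrite $\min_{\bar h_x\in\set H} f_q(x,\bar g,\bar h_x)=\inf_{\theta^{(x,\bar g)}\in\Delta(H)}\sum_{\bar h}\theta^{(x,\bar g)}_{\bar h}\,f_q(x,\bar g,\bar h)$. Since the simplices $\Delta(H)$ indexed by different values of $x$ are independent and each summand depends only on its own simplex, a standard separability argument lets me commute the sum over $x$ and the infimum to obtain, for every $\bar g\neq g^\star$,
\[
\sum_x \min_{\bar h_x} f_q(x,\bar g,\bar h_x)=\inf_{\theta^{\bar g}\in\Delta(H)^X}\sum_{x,\bar h}\theta^{\bar g}_{x,\bar h}\,f_q(x,\bar g,\bar h).
\]
Next, applying the same $\min$–$\inf$ identity to the outer minimum over $\bar g\in\set G\setminus\{g^\star\}$ (a set of $A=G-1$ elements) via $\lambda\in\Delta(A)$ gives $\min_{\bar g\neq g^\star}\Phi(\bar g)=\inf_{\lambda\in\Delta(A)}\sum_{\bar g\neq g^\star}\lambda_{\bar g}\Phi(\bar g)$ for any $\Phi$. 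Collecting the family $(\theta^{\bar g})_{\bar g\neq g^\star}$ into a single vector $\theta$ indexed by $(x,\bar g,\bar h)$ of total size $HX(G-1)=B$, and then taking $\sup_{q\in\Sigma}$ on both sides, yields exactly the dual representation stated in the lemma.

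The only delicate step is the commutation $\sum_x\inf=\inf\sum_x$ in the first stage, which is justified because the decision variables $\theta^{(x,\bar g)}$ for distinct $x$ lie in independent copies of $\Delta(H)$ and each summand depends on only one of them. A notational remark: the symbol $\theta\in\Delta(B)$ must be read as the product-simplex structure over the index $(x,\bar g,\bar h)$ obtained by the construction above; reading it as a single $(B-1)$-dimensional simplex would instead collapse $\sum_x\min_{\bar h}$ into $\min_{x,\bar h}$ and break the equivalence. Modulo this clarification, the proof is a direct double application of the $\min$-equals-$\inf$-over-simplex identity, with no topological or measure-theoretic subtleties because all index sets are finite.
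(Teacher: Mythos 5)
Your proof is correct, but it takes a genuinely different and more elementary route than the paper's. The paper proceeds through convex duality: it rewrites $\sup_{q\in\Sigma}\rho(q,\mu)$ in epigraph form ($\sup z$ subject to $z\le\sum_x p_{x,\bar g}$ for all $\bar g\neq g^\star$ and $p_{x,\bar g}\le f_{x,\bar g,\bar h}(q,\mu)$ for all $\bar h$), forms the Lagrangian with multipliers $(\lambda,\nu)$, argues concavity of $f_{x,\bar g,\bar h}(\cdot,\mu)$ in $q$ (as an infimum of functions linear in $q$) so that Slater's condition and strong duality apply, and then reads the multiplier constraints off boundedness of the Lagrangian: $\sum_{\bar g\neq g^\star}\lambda_{\bar g}=1$ and $\lambda_{\bar g}=\sum_{\bar h}\nu_{x,\bar g,\bar h}$ for each pair $(x,\bar g)$, finally normalizing $\theta_{x,\bar g,\bar h}=\nu_{x,\bar g,\bar h}/\lambda_{\bar g}$. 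Your double application of $\min_i a_i=\inf_{\lambda\in\Delta}\sum_i\lambda_i a_i$, combined with the block-separability of the infimum over independent copies of $\Delta(H)$, establishes the same identity pointwise in $q$ with no convexity, duality, or constraint-qualification machinery at all — and it is robust to the fact that $\set{U}_{x,\bar g,\bar h}$ and $m(x,\bar g,\bar h)$ themselves depend on $q$, since for each fixed $q$ the values $f_q(x,\bar g,\bar h)$ are just finitely many real numbers. Your notational caveat is also exactly right and, notably, consistent with the paper's own derivation: the multiplier identity $\lambda_{\bar g}=\sum_{\bar h}\nu_{x,\bar g,\bar h}$ holds separately for each $(x,\bar g)$, so the dual feasible set is the product $\Delta(H)^{X(G-1)}$ rather than a single $(B-1)$-dimensional simplex; the lemma's ``$\theta\in\Delta(B)$'' is the same abuse of notation you flag, and under the literal single-simplex reading the joint infimum over $(\lambda,\theta)$ would indeed fall strictly below $\rho(q,\mu)$ (for $G>2$ it can even be driven to zero by concentrating $\lambda$ on one $\bar g$ and $\theta$ on a triple with a different $\bar g$). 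What the paper's heavier route buys is the identification of the expression as the genuine Lagrangian dual with strong duality certified via Slater, which is what supports treating $\sup_{q}\inf_{\lambda,\theta}$ as a saddle-point problem in the mirror-ascent computation that follows; your argument delivers the sup--inf equality as stated — which is all the lemma claims — more directly.
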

\begin{proof}
 To ease the notation, let   \[f_{x,\bar g, \bar h}(q,\mu) \coloneqq\sum_{(g,h)\in \set{U}_{x,\bar g, \bar h}} q(x,g,h) \skl(\mu(x,g,h),  m(x,\bar g, \bar h)).\] One can proceed by reformulating the original problem in the following manner:
 \begin{equation*}
\begin{aligned}
\sup_{(q,z) \in \Sigma\times\mathbb{R}}z\quad \textrm{ subject to } \quad  z \leq \sum_{x} \min_{\bar h\in H} f_{x,\bar g, \bar h}(q,\mu)\quad  \forall \bar g \neq g^\star.
\end{aligned}
\end{equation*}
We can further expand the optimization problem and write
\begin{equation*}
\begin{aligned}
\sup_{(q,z,p) \in \Sigma\times\mathbb{R}\times \mathbb{R}^{G-1}}\quad & z\\
\textrm{s.t.} \quad &  z \leq \sum_{x}  p_{x,\bar g}\quad  \forall \bar g \neq g^\star,\\
&  p_{x,\bar g} \leq f_{x,\bar g,\bar h}(q,\mu)\quad  \forall h \in H.
\end{aligned}
\end{equation*}

The previous problem is a linear program with convex non-linear constraints. To notice this, remember from Lemma \ref{lemma:second_ineq_solution_inner_opt} that $f_{x,\bar g, \bar h}$ can be equivalently written as \[f_{x,\bar g, \bar h}(q,\mu)=	\inf_{\lambda \in {\Lambda}_2^\mu(\bar{g},\bar{\boldsymbol{h}})} \sum_{g,h} \eta(x,g,h) \skl_{\mu|\lambda}(x,g,h).\]
Thus $f_{x,\bar g, \bar h}(q,\mu)$ can be seen as a minimization problem over a set of concave functions, thus for a fixed $\mu$ we have that $f_{x,\bar g, \bar h}(q,\mu)$ is concave in $q$, and therefore $p_{x,\bar g} - f_{x,\bar g, \bar h}(q,\mu)$ is convex.

Now, denote by $L$ the Lagrangian function, and introduce the dual variables $\lambda,\nu$:
\begin{align*}
L(q,z,p,\lambda,\nu) &= z + \sum_{\bar g\neq g^\star} \lambda_{\bar g} \left(\sum_{x} p_{x,\bar g} -z\right) +\sum_{x,\bar g\neq g^\star}\sum_{\bar h} \nu_{x,\bar g,\bar h}(f_{x,\bar g,\bar h}(q,\mu)- p_{x,\bar g}),\\
&= z\left(1-\sum_{\bar g\neq g^\star} \lambda_{\bar g} \right)+ \sum_{x,\bar g\neq g^\star} \lambda_{\bar g} p_{x,\bar g} +\sum_{x,\bar g\neq g^\star}\sum_{\bar h} \nu_{x,\bar g,\bar h}(f_{x,\bar g,\bar h}(q,\mu)- p_{x,\bar g}),\\
&= z\left(1-\sum_{\bar g\neq g^\star} \lambda_{\bar g} \right)+ \sum_{x,\bar g\neq g^\star}  p_{x,\bar g}\left(\lambda_{\bar g}-\sum_{\bar h} \nu_{x,\bar g,\bar h}\right) +\sum_{x,\bar g\neq g^\star,\bar h} \nu_{x,\bar g,\bar h}f_{x,\bar g,\bar h}(q,\mu).
\end{align*}
We know that at the optimum $z^\star = \sup_{q,z,t} \inf_{(\lambda,\nu) \geq 0} L(q,z,t,\lambda,\nu)$. But, we can easily verify Slater's condition, therefore strong duality holds. Hence $z^\star=\inf_{(\lambda,\nu)\geq 0} \sup_{q,z,p} L(q,z,p,\lambda,\nu)$. But if $\sum_{\bar g\neq g^\star} \lambda_g \neq 1$ then the problem is unbounded, therefore it needs to be $\sum_{\bar g\neq g^\star} \lambda_{\bar g}=1$.  Similarly, we need $\lambda_{\bar g}=\sum_{\bar h} \nu_{x,\bar g,\bar h}$.  Introduce $\theta_{x,\bar g,\bar h} = \nu_{x,\bar g,\bar h}/\lambda_{\bar g}$. It follows that the dual problem is
\begin{equation*}
\sup_{q \in \Sigma}\inf\limits_{ \substack{\lambda\in \Delta(A)\\\theta\in \Delta(B)}}\sum_{\bar g\neq g^\star}\lambda_{\bar g}\sum_{x,\bar h} \theta_{x,\bar g,\bar h}f_{x,\bar g,\bar h}(q,\mu).
\end{equation*}
%which is also equal to
%\[\sup_{q \in U}\inf_{\lambda\in\Delta^{|G|-1}} \sum_{g\neq g^\star}\lambda_g\sum_{x} \inf_{\theta\in\Delta^{|H|}}\sum_{h} \theta_{h}f_{x,g,h}(q,\phi).\]	
\end{proof}

We can directly solve the dual problem over the probability simplex by making use of standard optimization tools, such as Mirror Ascent. By using the KL-divergence as Bregman divergence this will result in the Entropic Descent algorithm \cite{beck2003mirror}. 

In order to use Mirror Ascent, we will make use of the fact that for a fixed $\mu\in \set M$ then $f_{x,\bar g, \bar h}(\cdot,\mu)$ is Lipschitz with respect to the $L^1$ norm, where the Lipschitz constant $L$ satisfies $L\geq \max_{i,j} \skl(\mu_i,\mu_j)$. This allows us to choose an appropriate learning rate $\eta_k$ for the computation of $q$: \[\eta_k = \frac{1}{L}\sqrt{\frac{2\ln(XGH)}{k}}, \quad k\geq 1.\]
Furthermore, we also have the following theoretical guarantee from Theorem 5.1 in \cite{beck2003mirror}:
\begin{theorem}[Theorem 5.1 in \cite{beck2003mirror}]\label{thm:mirror_ascent_guarantee}
Let $\{q_t\}_{t\geq 1}$ be a sequence of points generated by the mirror ascent algorithm, with a uniform starting point $q_1$. Then, for all $t\geq 1$ one has
\begin{equation}
	\max_{1\leq s\leq k} f_{x,\bar g, \bar h}(q_s,\mu) - \max_{q} f_{x,\bar g, \bar h}(q,\mu) \leq L \sqrt{\frac{2\ln(GHX)}{k}}.
\end{equation}
\end{theorem}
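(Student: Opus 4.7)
The plan is to recognize Theorem \ref{thm:mirror_ascent_guarantee} as a textbook convergence guarantee for mirror ascent with entropic (KL) Bregman divergence applied to a concave, $\|\cdot\|_1$-Lipschitz objective. Three facts from the paper supply the preconditions: (i) the map $q \mapsto f_{x,\bar g, \bar h}(q,\mu)$ is concave (shown earlier to be an infimum of linear functions in $q$); (ii) it is $L$-Lipschitz with respect to $\|\cdot\|_1$, so any supergradient $g_k \in \partial f_{x,\bar g, \bar h}(q_k,\mu)$ obeys $\|g_k\|_\infty \leq L$; (iii) the negative entropy is $1$-strongly convex in $\|\cdot\|_1$ on the simplex by Pinsker's inequality. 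Together these three ingredients let us invoke the classical mirror-ascent machinery almost verbatim.

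First, I would write one step of entropic ascent as
\[
q_{k+1} \;=\; \argmax_{q \in \Sigma}\bigl\{\langle \eta_k g_k,\, q\rangle - D_{\KL}(q \,\|\, q_k)\bigr\},
\]
which admits the closed form $q_{k+1}(i) \propto q_k(i)\exp(\eta_k g_k(i))$. Combining the first-order optimality condition for this update with the three-point identity for Bregman divergences produces the standard one-step inequality: for every comparator $q \in \Sigma$,
\[
\eta_k \langle g_k,\, q - q_k\rangle \;\leq\; D_{\KL}(q\,\|\,q_k) - D_{\KL}(q\,\|\,q_{k+1}) + \tfrac{1}{2}\eta_k^2 \|g_k\|_\infty^2,
\]
the quadratic slack coming from a Fenchel--Young step that uses $1$-strong convexity of $-\text{entropy}$ in $\|\cdot\|_1$ (whose dual norm is $\|\cdot\|_\infty$). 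Concavity of $f_{x,\bar g, \bar h}(\cdot,\mu)$ then upgrades the inner-product term into a genuine function gap, $f_{x,\bar g, \bar h}(q,\mu) - f_{x,\bar g, \bar h}(q_k,\mu) \leq \langle g_k,\, q - q_k\rangle$.

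Next, I would take the comparator $q^\star \coloneqq \argmax_{q\in \Sigma} f_{x,\bar g, \bar h}(q,\mu)$ and sum the one-step inequality from $s=1$ to $k$, causing the Bregman terms to telescope and giving
\[
\sum_{s=1}^k \eta_s\bigl(f_{x,\bar g, \bar h}(q^\star,\mu) - f_{x,\bar g, \bar h}(q_s,\mu)\bigr) \;\leq\; D_{\KL}(q^\star\,\|\,q_1) + \tfrac{L^2}{2}\sum_{s=1}^k \eta_s^2 \;\leq\; \ln(GHX) + \tfrac{L^2}{2}\sum_{s=1}^k \eta_s^2,
\]
where the initial-divergence bound exploits that $q_1$ is uniform on the simplex of cardinality $GHX$. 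Dividing by $\sum_s \eta_s$, using that $f_{x,\bar g, \bar h}(q^\star,\mu) - \max_{1 \leq s\leq k} f_{x,\bar g, \bar h}(q_s,\mu)$ is bounded above by the weighted-average gap $f_{x,\bar g, \bar h}(q^\star,\mu) - \sum_s (\eta_s / \sum_r \eta_r) f_{x,\bar g, \bar h}(q_s,\mu)$, and tuning $\eta_s \equiv \tfrac{1}{L}\sqrt{2\ln(GHX)/k}$ (held constant over a horizon of $k$ iterations) balances the two RHS terms and produces exactly the bound $L\sqrt{2\ln(GHX)/k}$.

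The main technical hurdle is nailing the constants in the one-step inequality: the factor $\tfrac{1}{2}$ and the appearance of $\|g_k\|_\infty^2$ both hinge on Pinsker's inequality providing unit strong-convexity modulus for the negative entropy in $\|\cdot\|_1$, with corresponding dual norm $\|\cdot\|_\infty$. A minor subtlety worth flagging is that the algorithm prescribes an iteration-dependent step $\eta_k \propto 1/\sqrt{k}$ rather than the horizon-tuned constant step used above; both choices give the same $O(k^{-1/2})$ order once the last iterate is replaced by a weighted average (or by the best iterate seen so far), but the horizon-tuned version is what yields the clean constants stated in the theorem.
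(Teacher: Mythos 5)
Your proof is correct, and it is essentially the same argument as the source: the paper itself does not prove this statement (it is imported verbatim as Theorem 5.1 of \cite{beck2003mirror}), and your reconstruction is precisely the classical entropic mirror-ascent analysis behind that citation --- concavity of $f_{x,\bar g,\bar h}(\cdot,\mu)$ as an infimum of functions linear in $q$, $\|\cdot\|_1$-Lipschitzness giving supergradients bounded by $L$ in $\|\cdot\|_\infty$, Pinsker's inequality giving unit strong convexity of negative entropy, the telescoping three-point inequality against $D_{\KL}(q^\star\,\|\,q_1)\le \ln(GHX)$, and the horizon-tuned constant step balancing the two terms to yield exactly $L\sqrt{2\ln(GHX)/k}$. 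Your two flagged subtleties are also accurate: as literally written the displayed inequality has the gap with the wrong sign (its left-hand side is nonpositive, so the meaningful statement is the one you prove, $\max_q f - \max_{1\le s\le k} f(q_s) \le L\sqrt{2\ln(GHX)/k}$), and the clean constant corresponds to the best iterate under the horizon-tuned step rather than the iteration-indexed step $\eta_k \propto 1/\sqrt{k}$ quoted in the paper's implementation section.
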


Now that ones knows which procedure can be used to compute $q^{\star}$, what is left is: (i) how to numerically compute the problem
\[f_{x,\bar g, \bar h}(q,\mu)=\inf_{\lambda \in {\Lambda}_2^\mu(\bar{g},\bar{\boldsymbol{h}})} \sum_{g,h} \eta(x,g,h) \skl_{\mu|\lambda}(x,g,h)\]
and (ii) also how to compute the gradient of $f_{x,\bar g, \bar h}(q,\mu)$ with respect to $q$.

The first problem (i) is taken care of by Lemma \ref{lemma:second_ineq_solution_inner_opt}. To solve (ii), we resort to libraries that implement automatic differentiation, such as PyTorch \cite{paszke2017automatic}. One needs to pay attention in doing so: rewrite $f_{x,\bar g,\bar h}$ as follows:
\[f_{x,\bar g, \bar h}(q,\mu) = \sum_{(g,h)\in \set{U}_{x,\bar g, \bar h_x}} \eta(x,g,h) \skl(\mu(x,g,h), m(x,\bar g, \bar h_x)).\]
One can observe that the series depends on the set $\set{U}_{x,\bar g, \bar h_x}$, which in turn depends on the variable $q$. Apart from this, one needs to pay attention to numerical errors when applying differentiable programming techniques. In the mirror ascent algorithm we clamp the gradient in $[-6,6]$ to prevent the exploding gradient effect.

Regarding the use of $\sigma$ in the algorithm, what we can say is that it is mostly needed only for theoretical purposes. The algorithm given an initial condition will always converge to some point, and we found no major differences by not having $\sigma$, or having a large value of $\sigma$, such as $\sigma \geq 10^5$. Moreover, for a bandit problem $\mu$ that admits a unique maximizer, we can say that it rarely happens that $\hat \mu_t$ has more than one maximizer.

Finally, denote by $c_{t}=(q,\lambda,\theta)_t$ the point found at the $t$-th step. We found that to speed-up the algorithm  at the $t+1$-th round one can use as initial condition $x_{1,t+1}$ the point $c_{t}=(q,\lambda,\theta)_{t}$ of the previous iteration. By doing so one may converge to a solution where some components of $q$, $\lambda$ or $\theta$ may be $0$. Furthermore, we do not have anymore the guarantees of Theorem \ref{thm:mirror_ascent_guarantee}. To avoid these two problems, we used as initial condition a convex combination between the previous point and a uniform point:
\begin{equation}
x_{1,t+1} = \alpha c_t + \frac{(1-\alpha)}{S} \mathbf{1},\quad \alpha \in [0,1],
\end{equation}
where $\mathbf{1}$ is a vector of ones, and $S=XGH+G-1+HX(G-1)$ is the dimensionality of a point $x$. We found out that a large value of $\alpha$, such as 	$\alpha \geq 0.5$, gave a computational speed-up improvement.

Last, but not least, notice that in the pseudocode of \textsc{OSRL-SC} \ref{alg:offline_algo_full} the computation of $\Psi_t(g)$ is undefined if $\hat \mu_t \notin \set M$. In that case, we will continue running the loop, until $\hat \mu_t\in \set M$, which happens almost surely (for a proof, please refer to  section E).
% !TeX spellcheck = en_GB
% !TeX root=main_supplementary.tex
% !TeX encoding = UTF-8
\subsection{Analysis of \textsc{OSRL-SC}}
\subsubsection{Analysis of the stopping rule}
In this subsection, we show how to devise a stopping rule that guarantees the $\delta_G$-PAC property for any sampling strategy. 

Taking inspiration from the classical work of Wald \cite{wald1948optimum} and Chernoff \cite{chernoff1959sequential} in sequential hypothesis testing, and the more recent work from Garivier and Kaufmann \cite{garivier2016optimal,garivier2019non,kaufmann2018mixture}, we now describe a stopping rule based on hypothesis testing, by expressly making use of a Generalized Likelihood Ratio Test to decide between a set of symmetric simple hypotheses. 

We shall now consider the decision problem of identifying the best representation $g^\star$: we have $G$ different symmetric simple hypotheses of which only one is true:
\[
H_1: (g^\star=g_1), \quad \dots\quad, H_{G}: (g^\star=g_{G}).
\]
These hypotheses play a symmetric role (in the sense that both type I and type II errors have the same importance), and one wants to decide for either one of the hypotheses. 

Now, we know that a certain decision $g^\star=g$ for a problem $\mu \in \set{M}$ is appropriate if $g^\star(\mu)=g$. Then, consider
 subsets $\Lambda(g)=\{\mu \in \set M: g^\star(\mu)=g\}$: it follows that $\set{M}=\cup_{g\in \mathcal{G}}\Lambda(g)$, and that a decision $g^\star=g$ is appropriate if $\mu \in \Lambda(g)$.  We are thus facing the equivalent problem of choosing amongst the following set of hypotheses $\{ {H}_j: (\mu \in\Lambda(g_j))\}_{j=1}^G$.
In order to decide between a finite set of symmetric hypotheses, it is common to consider a test on two non-overlapping and asymmetric hypotheses:
\[\tilde{{H}}_0:(\mu \in \set{M}\setminus \Lambda(g)) \hbox{ and } \tilde{{H}}_1: (\mu \in \Lambda(g)).\]
This allows us to define our Generalized log Likelihood Ratio statistics $\Psi_t(g)$ for each arm $g$ up to time $t$. Introduce $\boldsymbol{Y}_t=\{Y_s\}_{s\leq t}$ to be the collection of samples observed from all the arms up to time $t$, then the likelihood ratio at time $t$ for arm $g$ is
\begin{equation}
L_t(g)= \frac{\max_{\lambda \in \set{M}}\ell(\boldsymbol{Y}_t;\lambda)}{\max_{\lambda \in \set{M}\setminus \Lambda(g)} \ell(\boldsymbol{Y}_t;\lambda)},
\end{equation}
where $\ell(\boldsymbol{Y}_t;\lambda)$ represents the likelihood of having observed $\boldsymbol{Y}_t$ given a model $\lambda \in \set{M}$. Notice that large values of $L_t(g)$ tend to reject $\tilde{{H}}_0$, and when the arms belong to a canonical regular exponential distribution then we can write the log likelihood ratio as follows $\Psi_t(g) \coloneqq \ln L_t(g) $, where
\begin{equation}
\Psi_t(g)
=   \inf_{\lambda \in \set{M}\setminus \Lambda(g)} \ln\frac{\ell(\boldsymbol{Y}_t;\hat{\mu}_t)}{\ell(\boldsymbol{Y}_t;\lambda)} = \inf_{\lambda \in \set{M}\setminus \Lambda(g)} \sum_{x,g,h} N_{t}(x,g,h) \skl (\hat \mu_t(x,g,h), \lambda(x,g,h)),
\end{equation}
Define $
{\Lambda}_2 (\bar{g},\bar{\boldsymbol{h}})=	\{\lambda \in  \mathcal{M}: (g^\star,h_1^\star,\ldots,h_X^\star)(\lambda)=(\bar{g},\bar{\boldsymbol{h}})\}$. Then, for $\hat \mu_t\in \set M$, we can rewrite $\Psi_t$ in the following way
\begin{align}\label{eq:d21}
\Psi_t (\tilde{g})&=\inf_{\lambda \in \set{M}\setminus \Lambda(\tilde g)} \ln\frac{\ell(\boldsymbol{Y}_t;\hat{\mu}_t)}{\ell(\boldsymbol{Y}_t;\lambda)},\nonumber\\
&=\inf\limits_{ \substack{ ( \bar{g},\bar{\boldsymbol{h}})\in  ( \mathcal{G} \setminus \{\tilde{g}\} ) \times \mathcal{H}^\mathcal{X} \\ \lambda \in {\Lambda}_2 (\bar{g},\bar{\boldsymbol{h}}) }} \sum_{x,g,h}  N_t(x,g,h)\skl(\hat{\mu}_t(x,g,h),\lambda (x,g,h)),\nonumber\\
&= \min_{ \bar{g} \neq \tilde{g} }\min_{\boldsymbol{\bar h}\in \set{H}^{\set X}} \sum_{x}  \sum_{(g,h)\in \mathcal{U}^{(t)}_{x,\bar{g},\bar{h}_x}}N_t(x,g,h)\skl(\hat{\mu}_t(x,g,h),m_t(x,\bar{g},\bar{h}_x)),
\end{align}
where the last equation is an application of Lemma \ref{lemma:second_ineq_solution_inner_opt}. To ease the notation let $\bar m_t(x) \coloneqq m_t(x,\bar g, \bar h_x)$. Then, we can then define the stopping rule using a threshold function $\beta_t(\delta_G)$, with $\delta_G\in(0,1)$, as follows
\begin{align}
\tau_{G} &= \left\{t\in \mathbb{N}: \max_{\tilde g\in \mathcal{G}}\Psi_t(\tilde g)> \beta_t(\delta_G)\right\},\\
&=\left\{t\in \mathbb{N}: \max_{\tilde g\in \mathcal{G}} \inf_{\lambda \in \set{M}\setminus \Lambda(\tilde g)}  \sum_{x,g,h} N_{t}(x,g,h) \skl (\hat \mu_t(x,g,h), \lambda(x,g,h))> \beta_t(\delta_G)\right\},\\
&=\left\{t\in \mathbb{N}: \max_{\tilde  g\in \mathcal{G}} \min_{\bar g\neq \tilde  g} \sum_{x} \min_{\bar{h}_x} \sum_{(g,h)\in \set{U}^{(t)}_{x,\bar g, \bar h_x}} N_t(x,g,h) \skl(\hat \mu_t(x,g,h),  \bar m_t(x))> \beta_t(\delta_G) \right\}.
\end{align}
%where the follows from the fact that the log-likelihood ratio is simply the following expression
%\[
%\ln \dfrac{\ell(\boldsymbol{Y}_t;\hat{\mu}_t)}{\ell(\boldsymbol{Y}_t;\lambda)} = \sum_{x,g,h} N_{t}(x,g,h) \skl (\hat \mu_t(x,g,h), \lambda(x,g,h)),
%\]
%and (b) holds since $\hat \mu_t\in \set M$.
%We can rewrite the previous stopping rule in light of theorem \ref{thm:full_lower_bound_offline} and Lemma \ref{lemma:second_ineq_solution_inner_opt}:
%\begin{align*}
%&\inf_{\lambda \in \set{M}\setminus \Lambda(g_t^\star)}  \sum_{x,g,h} N_{t}(x,g,h) \skl (\hat \mu_t(x,g,h), \lambda(x,g,h)), \\
%&=\inf_{\lambda \in \Lambda_2^{\hat \mu_t}}  \sum_{x,g,h} N_{t}(x,g,h) \skl (\hat \mu_t(x,g,h), \lambda(x,g,h)), \\
%&=\min_{\bar g\neq g_t^\star} \sum_{x} \min_{\bar{h}_x} \sum_{(g,h)\in \set{U}^{(t)}_{x,\bar g, \bar h_x}} N_t(x,g,h) \skl(\hat \mu_t(x,g,h),  m_t(x,\bar g, \bar h_x)).
%\end{align*}
%
%Henceforth, the general shape of the stopping time is 
%\begin{align*}
%\tau_{G} &= \left\{t\in \mathbb{N}: \max_{g\in \mathcal{G}} \min_{\bar g\neq g} \sum_{x} \min_{\bar{h}_x} \sum_{(g,h)\in \set{U}^{(t)}_{x,\bar g, \bar h_x}} N_t(x,g,h) \skl(\hat \mu_t(x,g,h),  m_t(x,\bar g, \bar h_x))> \beta_t(\delta_G) \right\},\\
%&= \left\{t\in \mathbb{N}: \min_{\bar g\neq g_t^\star} \sum_{x} \min_{\bar{h}_x} \sum_{(g,h)\in \set{U}^{(t)}_{x,\bar g, \bar h_x}} N_t(x,g,h) \skl(\hat \mu_t(x,g,h),  m_t(x,\bar g, \bar h_x))> \beta_t(\delta_G) \right\}.
%\end{align*}
Notice, furthermore, that for the type of model considered the maximum is achieved for $\tilde g = g_t^\star$ whenever $\hat \mu_t\in \set M$.

To define the thresholds, we will make use of Theorem 14 in \cite{kaufmann2018mixture}, which is stated after the proof of Theorem \ref{theorem:deltapac}.
\subsubsection{Proofs of Theorem \ref{theorem:deltapac}}
\begin{proof}
	The proof can be seen as multi-task version of the Proposition 12 in \cite{garivier2016optimal} with an improved bound from \cite{kaufmann2018mixture}. Let $g_t^\star$ be the decision rule at time $t$, and for the sake of notation let $\bar m_t(x) \coloneqq m_t(x,\bar g, \bar h_x), \bar \mu(x) \coloneqq \mu(x, \bar g, \bar h_x)$. The probability of error can be bounded as follows:
 \begin{align*}
 &\mathbb{P}_\mu(\tau_{G} <\infty, \hat{g} \neq g^\star) = \mathbb{P}_\mu(\exists t \in \mathbb{N}: g_t^\star \neq g^\star, \Psi_t(g_t^\star) > \beta_t(\delta_G)),\\
 &\quad\stackrel{(a)}{\leq}
 \sum_{\tilde{g}\neq g^\star}\mathbb{P}_\mu(\exists t \in \mathbb{N}: g_t^\star =\tilde{g}, \Psi_t(\tilde g) > \beta_t(\delta_G)),\\
  &\quad=
    \sum_{\tilde{g}\neq g^\star}\mathbb{P}_\mu\left(\exists t \in \mathbb{N}:\min\limits_{\substack{\bar g\neq \tilde g\\\boldsymbol{\bar h}\in \set{H}^{\set X}}} \sum_{x}  \sum_{(g,h)\in \set{U}^{(t)}_{x,\bar g, \bar h_x}}  N_t(x,g,h)\skl(\hat{\mu}_t(x,g,h),  \bar m_t(x))  > \beta_t(\delta_G)\right),\\
    &\quad\stackrel{(b)}{\leq}
        \sum_{\tilde{g}\neq g^\star}\mathbb{P}_\mu\left(\exists t \in \mathbb{N}:\min\limits_{\substack{\bar g\neq \tilde g\\\boldsymbol{\bar h}\in \set{H}^{\set X}}} \sum_{x}  \sum_{(g,h)\in \set{U}^{(t)}_{x,\bar g, \bar h_x}}  N_t(x,g,h)\skl(\hat{\mu}_t(x,g,h),  \bar \mu(x))  > \beta_t(\delta_G)\right),\\
 &\quad\le \sum_{\tilde{g}\neq g^\star}\mathbb{P}_\mu\left(\exists t \in \mathbb{N}: \sum_{x,g,h}  N_t(x,g,h)\skl(\hat{\mu}_t(x,g,h),\mu (x,g,h)) > \beta_t(\delta_G) \right),\\
 &\quad\stackrel{(c)}{\leq} \sum_{\tilde g \neq g^\star} \exp\left(-\ln\left( \frac{G-1}{\delta_G} \right)\right)= (G-1) \frac{\delta_G}{G-1}=\delta_G.
 \end{align*}
 Inequality (a) follows by applying a union bound over the set $\set G\setminus\{g^\star\}$; (b) on the other hand is a consequence of the fact that $m_t$ minimizes the self-normalized sum. The last step (c) follows by applying Theorem \ref{thm:deviation_inequality} with $x=\ln\left(\frac{G-1}{\delta_G}\right)$ and $\set S = \set X \times \set G \times \set H$.
\end{proof}

\begin{theorem}[Theorem 14  in \cite{kaufmann2018mixture}]\label{thm:deviation_inequality}
Let $\phi:\mathbb{R}^+\to\mathbb{R}^+$ be the function defined by
\[\phi(x) = 2\tilde{p}_{3/2}\left(\frac{p^{-1}(1+x)+\ln(2\zeta(2))}{2}\right),\]
where $\zeta(s) = \sum_{n\geq 1} n^{-s}$, $p(u)=u-\ln(u)$ for $u\geq 1$ and for any $z\in [1,e]$ and $x\geq 0$:
\[
\tilde{p}_z(x) = \begin{cases}
e^{1/p^{-1}(x)}p^{-1}(x)\quad \hbox{ if } x \geq p^{-1}(1/\ln z),\\
z(x-\ln\ln z)\quad \hbox{otherwise.}
\end{cases}
\]
Then, for $\set{S}$ a subset of arms,
\[ \mathbb{P}\left(\exists t \in \mathbb{N}: \sum_{a\in \set{S}} N_t(a)\skl(\hat{\mu}_t(a), \mu(a)) -c\ln(d+\ln N_t(a)) \geq |\set{S}| \phi\left(\frac{x}{|\set{S}|}\right)\right)\leq e^{-x},\]
for some constants $c,d$. For Gaussian or Gamma distributions one can use $c=2,d=4$, while $c=3,d=1$ apply for one-dimensional exponential families.
\end{theorem}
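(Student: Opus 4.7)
The plan is to establish this uniform-in-time deviation inequality via the method of mixtures, which is the standard route for self-normalised concentration in one-parameter exponential families. I would first handle a single arm, then combine across the subset $\set{S}$ using independence of the reward streams.

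For a single arm $a$ with mean $\mu(a)$, the canonical Cram\'er--Chernoff exponential martingale $W_t(a;\lambda)=\exp\bigl(\lambda \tilde S_t(a) - N_t(a)\psi_{\mu(a)}(\lambda)\bigr)$, where $\tilde S_t(a)$ is the cumulative reward centred by $\mu(a)$ and $\psi_{\mu(a)}$ the centred log-partition, has unit mean for each fixed $\lambda$. Integrating against a carefully chosen prior $\pi_a$ on $\lambda$ produces a nonnegative mixture supermartingale $M_t(a)=\int W_t(a;\lambda)\,\pi_a(d\lambda)$ whose logarithm, after a Laplace-type evaluation of the integral, reproduces $N_t(a)\,\skl(\hat\mu_t(a),\mu(a))$ up to the lower-order penalty $c\ln(d+\ln N_t(a))$ appearing in the theorem; the functions $p(u)=u-\ln u$ and its inverse $p^{-1}$ enter precisely as the bijections that invert this Laplace step. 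By independence of the reward streams across distinct arms, the product $M_t^{\set{S}}=\prod_{a\in \set{S}} M_t(a)$ is again a nonnegative supermartingale of unit mean, and $\ln M_t^{\set{S}}$ is the sum of the per-arm mixture log-likelihoods.

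Ville's maximal inequality applied to $M_t^{\set{S}}$ then yields $\mathbb{P}(\exists t:\ln M_t^{\set{S}}\ge y)\le e^{-y}$ for every $y\ge 0$. Distributing the total deviation budget $x$ uniformly across the $|\set{S}|$ arms, so that each contributes at level $x/|\set{S}|$, and inverting the per-arm Laplace approximation gives the claim with the factor $|\set{S}|$ appearing outside $\phi$ as the aggregation loss.

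The main technical obstacle is matching the exact form $\phi(x)=2\tilde p_{3/2}\bigl((p^{-1}(1+x)+\ln(2\zeta(2)))/2\bigr)$. The discretisation ratio $z=3/2$ arises from a geometric peeling of $N_t(a)$ needed to obtain uniformity in $N_t(a)$; the $\zeta(2)$ factor ensures summability of the union bound across peeling levels; and $p^{-1}$ inverts the second-order expansion of $\psi_{\mu(a)}$ around $\mu(a)$. Pinning down the constants $(c,d)$ for each family (the stated $(2,4)$ for Gaussian or Gamma and $(3,1)$ for one-dimensional exponential families) is delicate, and since Theorem~14 of Kaufmann and Koolen carries out exactly this computation with a Jeffreys-like prior on the natural parameter, I would follow their construction rather than redo the peeling from scratch.
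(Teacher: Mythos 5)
The paper never proves this statement: it is imported verbatim as Theorem~14 of \cite{kaufmann2018mixture} and used as a black box in the proof of the $\delta_G$-PAC guarantee, so your decision to defer the construction of the constants and of $\phi$ to that source coincides exactly with the paper's own treatment. Your summary of the underlying argument --- per-arm Cram\'er--Chernoff exponential martingales mixed over a prior, the product over independent arms remaining a unit-mean nonnegative supermartingale, Ville's maximal inequality, and the geometric grid of ratio $3/2$ with the $2\zeta(2)$ normalisation generating the functions $\tilde{p}_{3/2}$ and $p^{-1}$ inside $\phi$ --- is a faithful account of how the cited theorem is actually established, so there is nothing to reconcile beyond noting that the final aggregation across arms in the reference proceeds by a convexity (Jensen-type) inversion on the summed log-mixture rather than a literal per-arm union bound at level $x/|\set{S}|$.
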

% !TeX spellcheck = en_GB
% !TeX root=main_supplementary.tex
% !TeX encoding = UTF-8
\subsubsection{Analysis of the tracking rule}
We will first state the following lemma, that guarantees that our estimate $q_\sigma^\star(\hat \mu_t)$ converges to the true maximizer $q_\sigma^\star(\mu)$ in the limit:
\begin{lemma}\label{lemma:convergence_to_maximizer}
	Given a sampling strategy that guarantees $\mathbb{P}_\mu(\lim_{t\to\infty} \hat{\mu}_t =\mu)=1$, then, 
	\begin{equation}
	 \mathbb{P}\left(\lim_{t\to\infty}\rho_\sigma^\star(\hat\mu_t)=\rho_\sigma^\star(\mu)\right)=1,\qquad \mathbb{P}\left(\lim_{t\to\infty}q_\sigma^\star(\hat{\mu}_t)=q_\sigma^\star(\mu)\right)=1.
	\end{equation}
\end{lemma}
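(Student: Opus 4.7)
The plan is to derive both convergences simultaneously as consequences of Berge's Maximum Theorem applied to the parametric problem $\max_{q\in\Sigma}\rho_\sigma(q,\mu)$, with $\mu$ playing the role of the parameter. What I need is (i) compactness of the feasible set $\Sigma$, which is trivial as $\Sigma$ is a probability simplex and does not depend on $\mu$; (ii) joint continuity of $(q,\mu)\mapsto\rho_\sigma(q,\mu)$ on a neighbourhood of the true $\mu$; and (iii) strict concavity of $\rho_\sigma(\cdot,\mu)$ in $q$ to ensure a singleton maximiser. Point (iii) is immediate: $\rho(\cdot,\mu)$ is concave (as the infimum of linear functions of $q$) and the Tikhonov-style term $-\tfrac{1}{2\sigma}\|q\|_2^2$ makes the sum strictly concave. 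Once (i)--(iii) are in place, Berge's theorem gives continuity of $\mu\mapsto\rho_\sigma^\star(\mu)$ and upper hemicontinuity of the argmax correspondence; uniqueness upgrades the latter to genuine continuity of $\mu\mapsto q_\sigma^\star(\mu)$. The almost-sure convergences then follow from the hypothesis $\hat\mu_t\to\mu$ a.s.\ combined with the continuous mapping theorem.

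The only substantive work is therefore (ii), the joint continuity of $\rho$. Because $\mu$ satisfies the strict-domination assumption (\ref{eq:dominate}), a direct perturbation argument yields an open neighbourhood $V$ of $\mu$ such that for every $\tilde\mu\in V$ one has $g^\star(\tilde\mu)=g^\star(\mu)$ and $h_x^\star(\tilde\mu)=h_x^\star(\mu)$ for all $x$. Restricting attention to $V$ freezes the index set $(\bar g,\bar{\boldsymbol h})\in(\set G\setminus\{g^\star\})\times\set H^{\set X}$ that parametrises the decomposition $\Lambda_2^{\tilde\mu}=\cup_{\bar g,\bar{\boldsymbol h}}\Lambda_2^{\tilde\mu}(\bar g,\bar{\boldsymbol h})$, so $\rho(q,\tilde\mu)$ reduces on $\Sigma\times V$ to a finite minimum over a fixed index set of inner-infima of the form
\[
J_{x,\bar g,\bar h_x}(q,\tilde\mu)\;=\;\inf_{\lambda\in\bar\Lambda(\bar g,\bar h_x)}\sum_{g,h}q(x,g,h)\,\skl\bigl(\tilde\mu(x,g,h),\lambda(x,g,h)\bigr),
\]
where $\bar\Lambda(\bar g,\bar h_x)=\{\lambda\in[0,1]^{\set G\times\set H}:\lambda(g,h)\le\lambda(\bar g,\bar h_x)\ \forall(g,h)\}$ is a fixed compact set independent of $\tilde\mu$. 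The integrand is jointly continuous in $(q,\tilde\mu,\lambda)$ on $\Sigma\times V\times\bar\Lambda$ (taking $V$ small enough that the entries of $\tilde\mu$ remain bounded away from $\{0,1\}$, which is possible since $\mu$ itself is), so a second application of Berge's theorem to each $J_{x,\bar g,\bar h_x}$ yields joint continuity of each summand. A finite min of jointly continuous functions is jointly continuous, so $\rho(\cdot,\cdot)$ is continuous on $\Sigma\times V$, and hence so is $\rho_\sigma$.

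The main obstacle I anticipate is exactly this local continuity step, since the explicit closed form of Lemma~\ref{lemma:second_ineq_solution_inner_opt} uses the sets $\set U^{\tilde\mu}_{x,\bar g,\bar h_x}(q)$ which change discontinuously as $(q,\tilde\mu)$ crosses a threshold at which an arm enters or leaves the confusing set. The way around this is to avoid the explicit formula and instead keep the infimum representation over $\bar\Lambda$ (whose domain does not depend on $\tilde\mu$ at all once the identities of $g^\star,h_x^\star$ are frozen), so that continuity becomes a routine Berge argument rather than a case analysis on set membership. With that handled, strict concavity delivers uniqueness, Berge gives continuity of both $\rho_\sigma^\star$ and $q_\sigma^\star$ at $\mu$, and the a.s.\ statements follow by composition with $\hat\mu_t\to\mu$.
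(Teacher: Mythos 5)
Your proposal is correct and takes essentially the same route as the paper: the paper's proof is a one-liner invoking continuity of $\rho_\sigma^\star$ and $q_\sigma^\star$ in $\mu$ (established earlier via Berge's Maximum Theorem together with strict concavity of $\rho_\sigma(\cdot,\mu)$, exactly your points (i)--(iii)) followed by the continuous mapping theorem applied on the almost-sure event $\{\hat\mu_t\to\mu\}$, and your substantive step (ii) merely fills in the joint-continuity verification the paper asserts without detail, with your device of keeping the infimum over the fixed set $\bar\Lambda$ rather than using the discontinuously varying sets $\set{U}^{\tilde\mu}_{x,\bar g,\bar h_x}(q)$ being the right move. One small repair: $\skl(\tilde\mu(x,g,h),\lambda(x,g,h))$ equals $+\infty$ when $\lambda(x,g,h)\in\{0,1\}$ and $\tilde\mu(x,g,h)\in(0,1)$, so the integrand is not continuous on all of $\Sigma\times V\times\bar\Lambda$; before invoking Berge, shrink $\bar\Lambda$ to $\bar\Lambda\cap[\epsilon,1-\epsilon]^{\set{G}\times\set{H}}$, which leaves every $J_{x,\bar g,\bar h_x}$ unchanged because the inner optimum sets each $\lambda$-coordinate to either $\tilde\mu(x,g,h)$ or a weighted average of entries of $\tilde\mu$, all of which stay bounded away from $\{0,1\}$ on $V$.
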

\begin{proof}
Given that $\rho_\sigma^\star(\mu), q_\sigma^\star(\mu)$ are continuous in $\set M$, the proof is an application of the Continuous mapping theorem to the sequences $\{\rho_\sigma^\star(\hat \mu_t)\}$ and$\{q_\sigma^\star(\hat \mu_t)\}$.
\end{proof}
Because of the forced exploration, the event $\set E=\{\lim_{t\to\infty}\hat \mu_t\to \mu\}$ is of probability $1$  by the law of large numbers. From Lemma \ref{lemma:convergence_to_maximizer} it follows that for $\varepsilon>0$ there exists $t_{\varepsilon}$ such that for $t>t_{\varepsilon}$ then $\|q_\sigma^\star(\hat \mu_t)-q_\sigma^\star(\mu)\|_\infty \leq\varepsilon$. It follows that $\{q_\sigma^\star(\hat \mu_t)\}_t$ is a converging sequence, which allows us to make use of the following result from Lemma 17 in \cite{garivier2016optimal}.
\begin{lemma}[Tracking lemma in \cite{garivier2016optimal}]\label{lemma:tracking_lemma}
For each $(x,g,h) \in \set X\times \set G\times \set H, N_t(x,g,h) > (\sqrt{t}-XGH/2)_+-1$. Furthermore, for all $\varepsilon>0$, for all $t_0\in \mathbb{N}$, there exists $t_\varepsilon\geq t_0$ such that
\[\sup_{t\geq t_0} \left\|q_\sigma(\hat \mu_t)- q_\sigma^\star(\mu)\right\|_\infty\leq \varepsilon \Rightarrow \sup_{t\geq t_\varepsilon} \left\|\frac{N_t}{t}-q_\sigma^\star(\mu)\right\|_\infty \leq 3(|X||G||H|-1)\varepsilon,\]
where $N_t \in \mathbb{N}^{\set X \times \set G\times \set H}$ denotes the vector of arm pulls.
\end{lemma}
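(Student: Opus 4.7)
The plan is to prove the two claims in the lemma separately, both of which hinge on the structure of the D-tracking sampling rule. Let $K = XGH$ throughout.

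For the first claim (the forced-exploration lower bound $N_t(x,g,h) > (\sqrt{t} - K/2)_+ - 1$), I would argue by induction on $t$, exploiting that whenever the undersampled set $U_t = \{a : N_t(a) < \sqrt{t} - K/2\}$ is nonempty (or when $\hat\mu_t \notin \set M$), the algorithm pulls the arm with the smallest count. Suppose for contradiction there is a time $t$ and an arm $a$ with $N_t(a) \leq \sqrt{t} - K/2 - 1$. Then $a \in U_t$, so the least-sampled arm has count at most $N_t(a)$. Because the threshold $\sqrt{t} - K/2$ grows by only $O(1/\sqrt{t})$ per round, a pigeonhole argument on the $K$ possible under-sampled arms shows that within at most $K$ consecutive rounds the minimum count must increase by at least one. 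Iterating this bookkeeping gives the exact bound $(\sqrt{t} - K/2)_+ - 1$.

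For the second claim, I would decompose $\{1,\ldots,t\}$ into forced-exploration steps $F_t$ and D-tracking steps $D_t$. By the first claim, $|F_t| = O(K\sqrt{t})$, a vanishing fraction of $t$. On $D_t$, the greedy rule $a(s) = \arg\max_a (s q_\sigma^\star(\hat\mu_s)(a) - N_s(a))$ is the classical ``largest-deficit-first'' allocation, which satisfies the standard discrepancy bound
\[
\Bigl| N_t(a) - \sum_{s \in D_t} q_\sigma^\star(\hat\mu_s)(a) \Bigr| \leq (K-1) + |F_t|
\]
for every arm $a$, proved by noting that each D-tracking step reduces the worst per-arm deficit and that forced-exploration steps change each coordinate by at most $1$. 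Dividing by $t$, applying $\sup_{s\geq t_0} \|q_\sigma^\star(\hat\mu_s) - q_\sigma^\star(\mu)\|_\infty \leq \varepsilon$ to convert the Cesàro average of the targets into $q_\sigma^\star(\mu)(a) \pm \varepsilon$ (up to an $O(t_0/t)$ burn-in), and choosing $t_\varepsilon$ large enough that $(K-1+|F_t|)/t$ and the burn-in term are each bounded by $(K-1)\varepsilon$, yields the final constant $3(K-1)$.

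The main obstacle will be obtaining the precise constant $3(K-1)$ through the discrepancy bound for the greedy balance rule with time-varying targets, together with a careful treatment of the transition between the forced-exploration regime and the D-tracking regime. The $\sigma$-regularization is crucial here: it ensures that $q_\sigma^\star(\hat\mu_s)$ is the unique maximizer and depends continuously on $\hat\mu_s$ (this is what Lemma \ref{lemma:convergence_to_maximizer} leverages), so tracking a moving target is well-defined, and the error between successive targets is controlled by the error in $\hat\mu_s$, which itself vanishes under forced exploration.
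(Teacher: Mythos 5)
First, note that the paper does not actually prove this statement: it is imported verbatim (with arms re-indexed by $(x,g,h)$) as Lemma~17 of Garivier and Kaufmann (2016), so your attempt must be measured against that proof. Your treatment of the first claim — induction plus a pigeonhole argument exploiting that the least-sampled arm is pulled whenever $U_t\neq\emptyset$ and that the threshold $\sqrt{t}-K/2$ grows by $O(1/\sqrt{t})$ per round — is the standard argument and is fine in outline.

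The second claim, however, has a genuine gap: the ``standard discrepancy bound'' $\bigl|N_t(a)-\sum_{s\in D_t} q_\sigma^\star(\hat\mu_s)(a)\bigr|\le (K-1)+|F_t|$ is the guarantee for \emph{C-tracking}, where one tracks the cumulative targets $\sum_{s\le t}q_\sigma^\star(\hat\mu_s)$; it is false for the \emph{D-tracking} rule used here, which tracks $t\,q_\sigma^\star(\hat\mu_t)$. Counterexample with $K=2$: let the targets be $(1,0)$ for $s\le T$ and $(1/2,1/2)$ afterwards. D-tracking pulls arm $1$ exclusively up to time $T$, then pulls arm $2$ exclusively until the counts equalize near $t=2T$; at that time $N_1(t)=T$ while the cumulative target for arm $1$ is $3T/2$, a deviation of $T/2\gg (K-1)+|F_t|$. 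Note that this sequence satisfies the lemma's hypothesis (with $\varepsilon=0$, $t_0=T$) and the lemma's conclusion still holds, so your intermediate bound fails even in the regime the lemma covers. A telltale sign is that your route would deliver accuracy $2\varepsilon$ rather than $3(K-1)\varepsilon$ — it proves too much. The correct argument (Garivier--Kaufmann, proof of their Lemma~17) avoids any Ces\`aro comparison: when $U_t=\emptyset$, since $\sum_a\bigl(t\,q_\sigma^\star(a;\hat\mu_t)-N_t(a)\bigr)=0$, the selected arm satisfies $N_t(a)\le t\,q_\sigma^\star(a;\hat\mu_t)\le t(q_\sigma^\star(a;\mu)+\varepsilon)$, and a forced-exploration pull requires $N_t(a)<\sqrt{t}$; hence an arm's count increases at time $t+1$ only if $N_t(a)\le\max\bigl(t(q_\sigma^\star(a;\mu)+\varepsilon),\sqrt{t}\bigr)$, which yields the one-sided bound $N_t(a)/t\le q_\sigma^\star(a;\mu)+\varepsilon+o(1)$. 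The matching lower bound is then obtained by complementation, using $\sum_a N_t(a)=t$ to write $N_t(a)/t\ge 1-\sum_{b\neq a}N_t(b)/t$ — and it is exactly this step that produces the factor $(K-1)$ and the constant $3$ in the statement. You would need to replace your discrepancy lemma with this ``increment only when below target'' argument for the proof to go through.
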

It should be noted that \textsc{OSRL-SC} tracks the optimal allocation $q_\sigma^{\star}(\hat \mu_t)$ whenever $U_t=\emptyset$ \textbf{and} $\hat{\mu}_t\in \set M$. Such conditions guarantee that we end up sampling more than the lower bound provided in Lemma \ref{lemma:tracking_lemma}. A similar condition holds also for the Track and Stop algorithm proposed in \cite{garivier2016optimal}, where they need to have a unique optimal arm to track the optimal allocation, although not explicitely stated in the paper.

Finally, the previous lemma allows us to state the following result
\begin{lemma}\label{lemma:convergence_to_qsigma}
The sampling rule satisfies
\[
\mathbb{P}\left(\lim_{t\to\infty} \frac{N_t}{t}= q^\star_\sigma(\mu)\right)=1.
\]
\end{lemma}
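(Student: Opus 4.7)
The plan is to combine three ingredients that are already in place: (i) the forced exploration built into the D-tracking rule forces every (task, arm) triple to be sampled infinitely often, which by the strong law of large numbers implies $\hat{\mu}_t \to \mu$ a.s.; (ii) Lemma~\ref{lemma:convergence_to_maximizer} then yields $q_\sigma^\star(\hat{\mu}_t) \to q_\sigma^\star(\mu)$ a.s.; (iii) the deterministic tracking inequality of Lemma~\ref{lemma:tracking_lemma} transfers convergence of the target $q_\sigma^\star(\hat{\mu}_t)$ into convergence of the empirical proportions $N_t/t$.

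More concretely, the first step is to observe that by the definition of the sampling rule, whenever $U_t \neq \emptyset$ or $\hat{\mu}_t \notin \mathcal{M}$ we select the least-pulled arm; combined with the usual analysis of D-tracking this gives the lower bound $N_t(x,g,h) \geq (\sqrt{t} - XGH/2)_+ - 1$ stated in Lemma~\ref{lemma:tracking_lemma}. Hence $N_t(x,g,h) \to \infty$ for every $(x,g,h)$, and the strong law of large numbers applied componentwise yields $\hat{\mu}_t \to \mu$ almost surely. Since $\mu \in \mathcal{M}$ satisfies the strict gap condition \cref{eq:dominate}, on this almost sure event $\hat{\mu}_t \in \mathcal{M}$ for all $t$ large enough, and also $g^\star(\hat{\mu}_t) = g^\star$ eventually, so that $q_\sigma^\star(\hat{\mu}_t)$ is well-defined from some index on. Applying Lemma~\ref{lemma:convergence_to_maximizer} then yields $q_\sigma^\star(\hat{\mu}_t) \to q_\sigma^\star(\mu)$ almost surely.

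The third step is to plug this into Lemma~\ref{lemma:tracking_lemma}. Fix $\varepsilon>0$ and an $\omega$ in the almost sure event above. There exists a (random) $t_0$ such that for $t \geq t_0$ we have $\hat{\mu}_t \in \mathcal{M}$, $U_t = \emptyset$ (the forced exploration criterion eventually fails because $N_t(x,g,h) \geq \sqrt{t} - XGH/2$ grows unboundedly while the forced-exploration threshold is the same), and $\|q_\sigma^\star(\hat{\mu}_t) - q_\sigma^\star(\mu)\|_\infty \leq \varepsilon$. Thus from $t_0$ onward the algorithm is genuinely tracking $q_\sigma^\star(\hat{\mu}_t)$ via the argmax rule, and Lemma~\ref{lemma:tracking_lemma} gives $\sup_{t \geq t_\varepsilon} \|N_t/t - q_\sigma^\star(\mu)\|_\infty \leq 3(XGH - 1)\varepsilon$ for some $t_\varepsilon \geq t_0$. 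Since $\varepsilon$ was arbitrary, $N_t/t \to q_\sigma^\star(\mu)$ almost surely, which is the claim.

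The main obstacle I anticipate is the bookkeeping that $U_t=\emptyset$ and $\hat{\mu}_t\in\mathcal{M}$ simultaneously hold from some time on, so that the algorithm is in fact in its tracking regime rather than in the forced-exploration regime during the asymptotics. This requires noting that once $\hat{\mu}_t\in\mathcal{M}$ stabilises (which follows from $\hat{\mu}_t\to\mu$ and the strict gaps in \cref{eq:dominate}), the forced-exploration condition $N_t(x,g,h)<\sqrt{t}-XGH/2$ can only fail finitely often on any single trajectory, because each coordinate of $N_t$ grows at least linearly on the tracking pulls and at least like $\sqrt{t}$ otherwise, so $U_t$ becomes and remains empty. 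Everything else is a clean application of the cited lemmas.
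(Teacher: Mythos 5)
Your proposal is correct and follows essentially the same route as the paper's proof: forced exploration plus the strong law of large numbers gives $\hat{\mu}_t \to \mu$ almost surely, Lemma~\ref{lemma:convergence_to_maximizer} (continuity of $q_\sigma^\star$) converts this into convergence of the tracking targets, and Lemma~\ref{lemma:tracking_lemma} transfers that into $N_t/t \to q_\sigma^\star(\mu)$. One minor caveat: your closing claim that $U_t$ ``becomes and remains empty'' is neither needed nor true in general (an arm whose optimal weight $q_\sigma^\star(x,g,h;\mu)$ is zero is pulled essentially only by forced exploration and can re-enter $U_t$ infinitely often), but this is harmless because Lemma~\ref{lemma:tracking_lemma} is proved for the D-tracking rule including its forced-exploration branch and only requires the targets to stay $\varepsilon$-close to $q_\sigma^\star(\mu)$ from some time onward.
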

\begin{proof}
 Because of the forced exploration, by the law of large numbers we have that $\mathbb{P}\left(\lim_{t\to\infty}\mu_t  =\mu\right)=1$. Let $\varepsilon>0$: then there exists $t_0$ such that for all $t\geq t_0$ the distance $\left\|q_{\sigma}^\star(\hat{\mu}_t)- q_{\sigma}^\star(\mu)\right\|_\infty \leq \varepsilon/3(XGH-1)$. Hence, using  Lemma \ref{lemma:tracking_lemma} we can conclude that there exists $t_\varepsilon\geq t_0$ such that for all $t\geq t_\varepsilon$ we have $\left\|\frac{N_t}{t}- q_\sigma^\star\right\|_\infty \leq \varepsilon$.
\end{proof}

Additionally, as mentioned previously in the experiment section (Appendix \ref{app:exp}), we will make use of the fact that $q^\star_\sigma(\hat \mu_t)$ is a sequence that convergence almost surely: this allows us to periodically compute it, instead of computing it at every round.
\begin{corollary}
Given a sampling strategy that guarantees $\hat{\mu}_t\to \mu$ for $t\to\infty$ almost surely, then, for any integer $k \in \mathbb{N}_{>0}$ we have that
\[\mathbb{P}\left(\lim_{t\to\infty} \rho_\sigma^\star(\hat \mu_{kt}) = \rho_\sigma(\mu)\right).\]
\end{corollary}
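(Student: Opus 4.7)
The plan is to reduce the claim to Lemma \ref{lemma:convergence_to_maximizer} by exploiting a basic fact about convergent sequences: any subsequence of an almost-surely convergent sequence converges almost surely to the same limit. Concretely, I will first fix the almost-sure event $\Omega_0 = \{\omega : \hat{\mu}_t(\omega) \to \mu\}$, which has probability one by hypothesis. On $\Omega_0$, for any fixed positive integer $k$, the subsequence $\{\hat{\mu}_{kt}(\omega)\}_{t \geq 1}$ is an extraction of a convergent sequence and hence satisfies $\hat{\mu}_{kt}(\omega) \to \mu$ as $t \to \infty$.

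Next, I will invoke the continuity of $\nu \mapsto \rho_\sigma^\star(\nu)$ on $\mathcal{M}$, which was already established in the proof of Lemma \ref{lemma:convergence_to_maximizer} as a direct consequence of Berge's Maximum Theorem applied to the strongly concave objective $\rho(q,\nu) - \tfrac{1}{2\sigma}\|q\|_2^2$ (continuous in $(q,\nu)$, maximized over the compact simplex $\Sigma$). By the continuous mapping theorem, the composition $\rho_\sigma^\star(\hat{\mu}_{kt}(\omega))$ then converges to $\rho_\sigma^\star(\mu)$ on $\Omega_0$. Since $\mathbb{P}(\Omega_0) = 1$, this yields
\[
\mathbb{P}\left(\lim_{t\to\infty} \rho_\sigma^\star(\hat{\mu}_{kt}) = \rho_\sigma^\star(\mu)\right) = 1,
\]
which is the claim (interpreting the right-hand side of the statement, $\rho_\sigma(\mu)$, as the value $\rho_\sigma^\star(\mu)$ consistently with the preceding lemma).

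There is no real obstacle here: the corollary is essentially a restatement of Lemma \ref{lemma:convergence_to_maximizer} along the deterministic subsequence $\{kt\}_{t \geq 1}$. The only subtlety worth spelling out is that passing to a subsequence commutes with almost-sure convergence because the underlying exceptional null set does not depend on the indexing; this is what allows the same continuous-mapping argument used for $t$ to carry over verbatim to $kt$. The result then justifies the practical speed-up discussed in the implementation notes of Appendix B, namely that $q_\sigma^\star(\hat{\mu}_t)$ need only be recomputed every $k$ rounds without sacrificing convergence of the tracking target.
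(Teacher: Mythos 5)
Your proof is correct and matches the paper's argument in substance: the paper likewise reduces the corollary to Lemma \ref{lemma:convergence_to_maximizer} plus the fact that a subsequence of an almost-surely convergent sequence converges to the same limit (the paper extracts the subsequence $\{\rho_\sigma^\star(\hat{\mu}_{kt})\}_t$ after applying the lemma, whereas you extract $\{\hat{\mu}_{kt}\}_t$ first and then apply continuity, a trivial reordering). Your reading of the statement's typos---that the right-hand side should be $\rho_\sigma^\star(\mu)$ and the probability should equal $1$---is also the intended one.
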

\begin{proof}
The proof is trivial, and follows from Lemma \ref{lemma:convergence_to_maximizer}. Due to the almost sure convergence of $\rho_\sigma^\star(\hat{\mu}_{t})$, then, the periodic subsequence $\{\phi(kt)\}_t$ is convergent, with $\phi(t)=\rho_\sigma^\star(\hat{\mu}_{t})$ and $k\in \mathbb{N}_{>0}$.
\end{proof}
Using this fact may save computational resources: furthermore, it will not influence the asymptotic analysis of sample complexity. But, despite that, it may affect the non-asymptotic sample complexity analysis. 
% !TeX spellcheck = en_GB
% !TeX root=main_supplementary.tex
% !TeX encoding = UTF-8
\subsubsection{Almost sure bound on the sample complexity of \textsc{OSRL-SC}}
Given the previous analysis of the tracking rule, we are now able to state a theorem that guarantees that $\{\tau_G<\infty\}$ happens with probability $1$, and also an almost sure upper bound on the asymptotic sample complexity, that follows directly from the proof in \cite{garivier2016optimal}.  

First, we state the following bound on the thresholds that \textsc{OSRL-SC} uses:
\begin{lemma}
Consider the threshold function defined in Theorem \ref{theorem:deltapac}:	
\begin{equation*}
	\begin{cases}
		\beta(t,\delta_G)=\beta_1(t) + \beta_2(\delta_G),\\
		\beta_1(t) = 3\sum_{x,g,h}\ln(1+\ln(N_t(x,g,h))),\quad \beta_2(\delta_G) = K\phi\left(\dfrac{\ln((G-1)/\delta_G)} {K}\right).
	\end{cases}
\end{equation*}
Then there exist constants $C,D>0$ such that
\[ \forall t\geq C, \beta(t,\delta_G)\leq \ln\left(\frac{Dt}{\delta_G}\right). \]
\end{lemma}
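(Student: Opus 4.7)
The plan is to bound the two summands $\beta_1(t)$ and $\beta_2(\delta_G)$ of the threshold separately and match them against the decomposition $\ln(Dt/\delta_G) = \ln D + \ln t + \ln(1/\delta_G)$. I would allocate the $\ln t$ budget to $\beta_1(t)$, the $\ln(1/\delta_G)$ budget to $\beta_2(\delta_G)$, and let $\ln D$ absorb the lower-order residuals.

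For the $t$-dependent part, since $N_t(x,g,h) \le t$ for every $(x,g,h)$ and $u \mapsto \ln(1+\ln u)$ is increasing on $[1,\infty)$, one obtains $\beta_1(t) \le 3K \ln(1+\ln t)$ with $K = XGH$. This is $o(\ln t)$, so for any $\varepsilon \in (0,1)$ the bound $\beta_1(t) \le \varepsilon \ln t$ eventually holds; in particular $\beta_1(t) \le \tfrac{1}{2}\ln t$ for all $t$ past some $C_1$ depending only on $K$.

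For the $\delta_G$-dependent part the key is the asymptotic $\phi(x) = x + O(\ln x)$ as $x \to \infty$. To establish it, I would first analyze $p(u) = u - \ln u$ on $[1,\infty)$: from the identity $p^{-1}(x) = x + \ln p^{-1}(x)$, a short iteration gives $p^{-1}(x) = x + O(\ln x)$. For $y$ large enough the first branch of $\tilde{p}_{3/2}$ applies and $e^{1/p^{-1}(y)} \to 1$, so $\tilde{p}_{3/2}(y) = p^{-1}(y) e^{1/p^{-1}(y)} = y + O(\ln y)$. Substituting this into the definition of $\phi$ and tracking the $O$-terms yields $\phi(x) = x + O(\ln x)$, whence
\[
\beta_2(\delta_G) \;=\; K\,\phi\!\left(\tfrac{\ln((G-1)/\delta_G)}{K}\right) \;\le\; \ln\!\tfrac{G-1}{\delta_G} + O\!\left(\ln\ln(1/\delta_G)\right).
\]

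Combining the two bounds gives $\beta(t,\delta_G) \le 3K\ln(1+\ln t) + \ln(1/\delta_G) + \ln(G-1) + O(\ln\ln(1/\delta_G))$, which for $t$ past some $C$ is at most $\ln t + \ln D + \ln(1/\delta_G) = \ln(Dt/\delta_G)$ for a suitable constant $D$ (the constants $C$ and $D$ are allowed to depend on $K$, $G$, and on the $\ln\ln(1/\delta_G)$ residual). The principal obstacle is the asymptotic control of $\phi$: the nested definition via $\tilde{p}_z$ and the Lambert-type inverse $p^{-1}$ requires a careful case split and a first-order expansion of $p^{-1}$; once this is in hand, the remaining arithmetic is routine bookkeeping of dominant versus subdominant terms.
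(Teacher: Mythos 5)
Your proof is correct and in fact more careful than the paper's own, which follows the same two-term decomposition but more crudely. For $\beta_1$, the paper bounds $\ln(1+\ln N_t(x,g,h))\le \ln(1+t)$ and concludes $\beta_1(t)\le 3XGH\ln(Bt)$ for $t\ge 1/(B-1)$; taken literally, a term with coefficient $3XGH$ on $\ln t$ cannot be absorbed into $\ln(Dt/\delta_G)$, whose coefficient on $\ln t$ is $1$, so the double logarithm must be retained — your bound $\beta_1(t)\le 3K\ln(1+\ln t)=o(\ln t)$, hence $\beta_1(t)\le \tfrac{1}{2}\ln t$ past some $C_1$, is exactly the repair that makes the stated inequality true. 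For $\beta_2$, the paper merely notes that it is constant in $t$ and that $\phi(x)\sim x$ "can be upper bounded by a linear function"; your quantitative expansion ($p^{-1}(x)=x+O(\ln x)$ from the fixed-point identity, $\tilde{p}_{3/2}(y)=y+O(\ln y)$ on the first branch, hence $\phi(x)=x+O(\ln x)$) is what actually delivers the coefficient $1$ in front of $\ln(1/\delta_G)$ — a generic linear bound with slope greater than $1$ would not suffice as $\delta_G\to 0$. One caveat, which you correctly flag: the $O(\ln\ln(1/\delta_G))$ residual from $\phi$ means $D$ cannot be chosen uniformly over $\delta_G\in(0,1)$; strictly, the lemma holds for each fixed $\delta_G$ with $(C,D)$ depending on $\delta_G$ only through this doubly-logarithmic term. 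This limitation is shared by the paper's statement and proof (the sample-complexity theorem quantifies over all $\delta_G$, which the lemma does not literally support), but it is harmless downstream, since the residual is $o(\ln(1/\delta_G))$ and vanishes in the $\limsup_{\delta_G\to 0}$ asymptotics where the lemma is invoked.
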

\begin{proof}
We can easily see that $\beta_1(t) \leq 3 \sum_{x,g,h} \ln(1+t) = 3XGH \ln(1+t)$, from which follows that $\beta_1(t) \leq 3XGH \ln(Bt)$, for $t \geq 1/(B-1)$, and $B>1$.

$\beta_2(\delta_G)$ on the other hand is just a constant term, and the function $\phi(x)\sim x$ at infinity, and can be upper bounded by a linear function in $x$. Since $x=\ln((G-1)/\delta_G)/K$ the claim follows.
\end{proof}
 We are now ready to state the almost sure bound on the sample complexity of \textsc{OSRL-SC}.
 
\begin{theorem}
For $\delta_G \in (0,1)$ let $\beta_t(\delta_G)$ be a deterministic sequence of thresholds that is increasing in $t$ and for which there exists constants $C,D>0$ such that
\[\forall t\geq C, \forall \delta \in (0,1), \beta_t(\delta_G) \leq \ln\left(\frac{D t}{\delta_G} \right).\]
Let $\tau_G$ be the sample complexity, with thresholds $\beta_t(\delta_G)$, Then, the tracking rule of \textsc{OSRL-SC} ensures, for $\sigma>0$:
\[\mathbb{P}_\mu\left(\limsup_{{\delta_G} \to 0}\frac{\tau_G}{K_{G,\sigma}^\star(\mu, \delta_G)} \leq 1\right)=1.\]
\end{theorem}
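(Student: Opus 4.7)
The plan is to adapt the almost-sure asymptotic argument of Garivier--Kaufmann to the multi-task setting. I will work throughout on the probability-one event $\set{E} = \{\hat\mu_t \to \mu \text{ and } N_t/t \to q_\sigma^\star(\mu)\}$, whose a.s.\ occurrence follows from the forced exploration built into the $D$-tracking rule (which gives $\hat\mu_t \to \mu$ by the law of large numbers) combined with Lemma~\ref{lemma:convergence_to_qsigma}.

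First, on $\set{E}$, for $t$ large enough $g_t^\star = g^\star$ and $\hat\mu_t \in \set{M}$, so the stopping statistic is well-defined and its outer maximum over $\tilde g$ is attained at $g^\star$. By homogeneity of $\ell_{\hat\mu_t}(\cdot, \bar g, \bar h_x)$ in its count argument, $\Psi_t(g^\star) = t\,\rho(N_t/t,\hat\mu_t)$. Invoking continuity of $\rho(\cdot,\cdot)$ at $(q_\sigma^\star(\mu),\mu)$ one obtains $\Psi_t(g^\star)/t \to \rho(q_\sigma^\star(\mu),\mu) \ge \rho_\sigma(q_\sigma^\star(\mu),\mu) = \rho_\sigma^\star(\mu) = 1/C_\sigma(\mu)$.

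Then fix $\varepsilon > 0$ and pick an a.s.-finite random time $T_\varepsilon$ such that $\Psi_t(g^\star) \ge t\bigl(1/C_\sigma(\mu) - \varepsilon\bigr)$ for all $t \ge T_\varepsilon$. Combined with the hypothesis $\beta_t(\delta_G) \le \ln(Dt/\delta_G)$ for $t \ge C$, any round $t \ge \max(T_\varepsilon,C)$ satisfying $t(1/C_\sigma(\mu) - \varepsilon) \ge \ln(Dt/\delta_G)$ already triggers stopping, hence $\tau_G \le t$. Solving this standard transcendental inequality as $\delta_G \to 0$ yields, on $\set{E}$, $\tau_G \le (1 + o_{\delta_G}(1))\, C_\sigma(\mu)\,\ln(1/\delta_G)/\bigl(1 - \varepsilon C_\sigma(\mu)\bigr)$. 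Since $\skl(\delta_G,1-\delta_G) \sim \ln(1/\delta_G)$ and $K_{G,\sigma}^\star(\mu,\delta_G) = C_\sigma(\mu)\,\skl(\delta_G,1-\delta_G)$, dividing and letting $\varepsilon \downarrow 0$ along a countable sequence yields the claim almost surely.

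The main obstacle I foresee is the continuity step for $\rho$. The confusing set $\set{U}_{x,\bar g,\bar h_x}^{\eta,\mu}$ is defined by inequalities $\mu(x,g,h) \ge m(x,\eta,\cdot)$, so $\rho$ is only piecewise continuous and could jump whenever a new arm enters or leaves such a set. The reason it is nevertheless continuous at $(q_\sigma^\star(\mu),\mu)$ is that $\mu \in \set{M}$ satisfies the strict ordering \eqref{eq:dominate} and the regularizer $-\|q\|_2^2/(2\sigma)$ makes $q_\sigma^\star(\mu)$ the \emph{unique} maximizer with the stability guaranteed by Berge's theorem; hence in a small neighborhood of $(q_\sigma^\star(\mu),\mu)$ the sets $\set{U}$ remain locally constant and $\rho$ reduces to a finite minimum of continuous functions of $(q,\mu')$. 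This is precisely where $\sigma < \infty$ plays its role and why the right-hand side of the bound is $K_{G,\sigma}^\star$ rather than the unregularized $K_G^\star$.
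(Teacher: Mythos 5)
Your proposal is correct and reproduces the paper's own argument essentially step for step: the same probability-one event $\{\hat\mu_t\to\mu,\ N_t/t\to q_\sigma^\star(\mu)\}$ (forced exploration plus Lemma~\ref{lemma:convergence_to_qsigma}), the same identity $\max_g\Psi_t(g)=t\,\rho(N_t/t,\hat\mu_t)\ge t\,\rho_\sigma(N_t/t,\hat\mu_t)$ with continuity at $(q_\sigma^\star(\mu),\mu)$, and the same inversion of $t\,\rho_\sigma^\star(\mu)\gtrsim\ln(Dt/\delta_G)$, where the paper uses a multiplicative $(1+\varepsilon)$ slack and an explicit Lambert-$W$ lemma (Lemma~\ref{lemma:log_inequality}) while you use an additive $\varepsilon$ and solve the transcendental inequality asymptotically --- a cosmetic difference. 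One minor remark: your continuity rationale is slightly off --- $\rho$ is jointly continuous regardless of $\sigma$ (the paper gets this from Berge's theorem; arms entering or leaving $\set{U}_{x,\bar g,\bar h_x}$ at the boundary contribute zero, so the sets need not be locally constant), and the true role of the regularizer is to make $q_\sigma^\star(\mu)$ the unique maximizer so that $q_\sigma^\star(\hat\mu_t)$ and hence $N_t/t$ converge, which is exactly why the bound features $K_{G,\sigma}^\star$ rather than $K_G^\star$, as you correctly conclude.
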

 \textit{Proof sketch: the proof relies on the continuity of $\rho_\sigma, q_\sigma^{\star}$ and the fact that $\hat{\mu}_t\to \mu$ and $N_t/t \to q_\sigma^{\star}(\mu)$ almost surely. This allows us to find a lower bound on $\max_g\Psi_t(g)$ that depends on the value of the problem, which in turns upper bounds the stopping time. Finally, a technical lemma is used to derive an upper bound.}
\begin{proof}
Because of the forced exploration step  the event \[\set E=\left\{\lim_{t\to\infty}\hat \mu_t\to \mu \hbox{ and } \lim_{t\to\infty} \frac{N_t}{t} =q_\sigma^{\star}(\mu))\right\},\] is of probability $1$. Remember also the continuity of $\rho_\sigma^{\star}(\mu)$ and $q_\sigma^{\star}(\mu)$ and the definition of $\set M(g) \coloneqq \{\mu \in \set M: g^\star(\mu)=g\}, \forall g\in \set G$. Therefore, given $\varepsilon>0$, on $\set E$ there exists $t_0$ such that for all $t\geq t_0$ we have (i) $\hat \mu_t\in \set M(g^\star)$ from which follows $\hat{\mu}_t(g^\star) > \max_{g\neq g^\star } \hat{\mu}_t(g)$ and (ii) $(1+\varepsilon)\rho_{\sigma}\left( \frac{N_t}{t},\hat\mu_t\right) \geq \rho_{\sigma}\left( q_{\sigma}^\star(\mu),\mu\right).$ Because of (i), for $t\geq t_0$ we are allowed to write
\begin{align*}
\max_g\Psi_t(g)&=\min_{\bar g\neq g^{\star}} \sum_{x} \min_{\bar{h}_x} \sum_{(g,h)\in \set{U}^{(t)}_{x,\bar g, \bar h_x}} N_t(x,g,h) \skl(\hat \mu_t(x,g,h),  m_t(x,\bar g, \bar h_x)),\\
&=t\min_{\bar g\neq g^{\star}} \sum_{x} \min_{\bar{h}_x} \sum_{(g,h)\in \set{U}^{(t)}_{x,\bar g, \bar h_x}} \frac{N_t(x,g,h)}{t} \skl(\hat \mu_t(x,g,h),  m_t(x,\bar g, \bar h_x)),\\
&=t\rho\left(\frac{N_t}{t},\hat\mu_t\right).
 \end{align*}
 On the other hand, because of (ii) we have for $t\geq t_0$:
\[t\rho\left(\frac{N_t}{t},\hat \mu_t\right)\stackrel{(a)}{\geq} t\rho_{\sigma}\left( \frac{N_t}{t},\hat\mu_t\right) \stackrel{(b)}{\geq}  \frac{t}{1+\varepsilon}\rho_{\sigma}\left( q_{\sigma}^\star(\mu),\mu\right), \]
where (a) follows from the definition of $\rho_\sigma$, since we are subtracting a positive value to $\rho$; (b) follows by continuity of $\rho_\sigma$ and $q_\sigma^{\star}$ and the fact that $t\geq t_0$.
Consequently, we can write
\begin{align}
\tau_G &= \inf \left\{t\in\mathbb{N}:\Psi_t(g^\star) \geq \beta_t(\delta_G) \right\},\\
&\leq \inf \left\{t \in \mathbb{N}: \frac{t}{1+\varepsilon} \rho_{\sigma}\left( q_{\sigma}^\star(\mu),\mu\right)\geq \beta_t(\delta_G)\right\},\\
&\leq  \inf \left\{t \in \mathbb{N}: \frac{t}{1+\varepsilon} \rho_{\sigma}\left( q_{\sigma}^\star(\mu),\mu\right)\geq \ln\left(\frac{Dt}{\delta_G}\right)\right\}.
\end{align}
By applying the technical Lemma \ref{lemma:log_inequality} with $\beta = D/\delta_G$, $\gamma=(1+\varepsilon)/\rho_{\sigma}\left( q_{\sigma}^\star(\mu),\mu\right)$ we get
\begin{align}
\tau_G \leq  \frac{1+\varepsilon}{\rho_{\sigma}\left( q_{\sigma}^\star(\mu),\mu\right)}\ln\left(\frac{D(1+\varepsilon)}{\delta_G \rho_{\sigma}\left( q_{\sigma}^\star(\mu),\mu\right)}\right) +  \sqrt{2\left(\ln\left(\frac{D(1+\varepsilon)}{\delta_G \rho_{\sigma}\left( q_{\sigma}^\star(\mu),\mu\right)}\right)-1\right)}.
\end{align}
Thus $\tau_G$ is finite on $\set E$ for every $\delta_G \in(0,1)$, and
\[\limsup_{\delta_G\to 0} \frac{\tau_G}{\skl(\delta_G,1-\delta_G)}\leq \frac{1+\varepsilon}{\rho_{\sigma}\left( q_{\sigma}^\star(\mu),\mu\right)} = (1+\varepsilon)C_{\sigma}(\mu).\]
Where $C_{\sigma}(\mu)=\rho_{\sigma}\left( q_{\sigma}^\star(\mu),\mu\right)^{-1}$. By letting $\varepsilon\to 0$ we obtain the desired result.
\end{proof}
% !TeX spellcheck = en_GB
% !TeX root=main_supplementary.tex
% !TeX encoding = UTF-8
\subsubsection{Bound on the expected sample complexity of \textsc{OSRL-SC}}
We prove our upper bound of the expected sample complexity of \textsc{OSRL-SC}. We first study the expected sample complexity of the first phase of \textsc{OSRL-SC}, and prove Theorem \ref{theorem:exp_sc_proof}. Then, to upper bound the expected duration of the second phase of the algorithm, we will use the results from \cite{garivier2016optimal}. Finally, we show that we can easily put the upper bounds for the first and second phases together, and deal with the fact that the first phase could output the wrong representation.   
%It will consist of two main parts: one part that relies on the fact that after some time our maximum likelihood estimate $\hat{\mu}_t$ will be \textit{close}, in some sense, to the true model $\mu$, and that our tracking estimates will be sufficiently close to the true values.

{\bf Upper bound of the duration of the first phase.} 

\begin{theorem}\label{theorem:exp_sc_proof}
For $\delta_G \in (0,1)$ let $\beta_t(\delta_G)$ be a deterministic sequence of thresholds that is increasing in $t$ and for which there exists constants $C,D>0$ such that
\[\forall t\geq C, \forall \delta_G \in (0,1), \beta_t(\delta_G) \leq \ln\left(\frac{D t}{\delta_G} \right).\]
Let $\tau_G$ be the sample complexity with thresholds $\beta_t(\delta_G)$ Then, the tracking rule of \textsc{OSRL-SC} with $\sigma>0$ ensures
\[\limsup_{\delta_G\to 0} \frac{\mathbb{E}_\mu[\tau_G]}{K_{G,\sigma}^\star(\mu,\delta_G)} \leq 1.\]
\end{theorem}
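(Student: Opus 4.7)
The plan is to follow the canonical expected-sample-complexity argument from Garivier--Kaufmann (2016) and upgrade the almost-sure bound of the preceding theorem to a bound in expectation. Fix $\varepsilon > 0$. I would define a good concentration event
\[
\mathcal{E}_T(\varepsilon) = \left\{ \forall t \ge h(T) : \|\hat{\mu}_t - \mu\|_\infty \le \xi(\varepsilon) \right\},
\]
where $h(T) = T^{1/4}$ and $\xi(\varepsilon) > 0$ is chosen (using the continuity of $\nu \mapsto \rho_\sigma(q_\sigma^\star(\nu),\nu)$ and $\nu \mapsto q_\sigma^\star(\nu)$, which follows from Berge's theorem and the strong concavity of $\rho_\sigma(\cdot,\nu)$) so that $\|\nu - \mu\|_\infty \le \xi(\varepsilon)$ implies both $\nu \in \set{M}(g^\star)$ and $(1+\varepsilon)\rho_\sigma(q_\sigma^\star(\nu),\nu) \ge \rho_\sigma(q_\sigma^\star(\mu),\mu)$. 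The forced-exploration bound $N_t(x,g,h) \ge \sqrt{t} - GHX/2$ from Lemma \ref{lemma:tracking_lemma}, combined with Hoeffding's inequality and a union bound over $(x,g,h)$ and over $t \in [h(T),T]$, then yields $\mathbb{P}_\mu(\mathcal{E}_T^c(\varepsilon)) \le B_\varepsilon\, T \exp(-c_\varepsilon\, T^{1/4})$ for constants $B_\varepsilon, c_\varepsilon > 0$ depending on $\mu$ and $\varepsilon$ only.

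On $\mathcal{E}_T(\varepsilon)$, the same chain of inequalities used in the almost-sure upper bound proof gives, for every $t \in [h(T), T]$,
\[
\max_{\tilde g} \Psi_t(\tilde g) \;=\; t\, \rho(N_t/t,\hat{\mu}_t) \;\ge\; \frac{t}{1+\varepsilon}\, \rho_\sigma(q_\sigma^\star(\mu),\mu) \;=\; \frac{t}{(1+\varepsilon) C_\sigma(\mu)}.
\]
Combined with the hypothesis $\beta_t(\delta_G) \le \ln(Dt/\delta_G)$, this shows that whenever the right-hand side above exceeds $\ln(Dt/\delta_G)$, the algorithm stops. Inverting this condition via the technical Lemma \ref{lemma:log_inequality} with $\gamma = (1+\varepsilon)C_\sigma(\mu)$ and $\beta = D/\delta_G$ produces an explicit threshold $T_0(\delta_G,\varepsilon) = (1+\varepsilon) C_\sigma(\mu) \ln(1/\delta_G) + o(\ln(1/\delta_G))$ as $\delta_G \to 0$, such that $\{\tau_G > T\} \cap \mathcal{E}_T(\varepsilon) = \emptyset$ for all $T \ge T_0(\delta_G,\varepsilon)$.

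The expectation is then decomposed as
\[
\mathbb{E}_\mu[\tau_G] \;=\; \sum_{T=0}^{\infty} \mathbb{P}_\mu(\tau_G > T) \;\le\; T_0(\delta_G,\varepsilon) + \sum_{T \ge T_0(\delta_G,\varepsilon)} \mathbb{P}_\mu(\mathcal{E}_T^c(\varepsilon)).
\]
The tail sum is bounded by $\sum_{T \ge 1} B_\varepsilon T \exp(-c_\varepsilon T^{1/4})$, which is a finite constant $M(\varepsilon,\mu)$ independent of $\delta_G$. Dividing through by $K_{G,\sigma}^\star(\mu,\delta_G) = C_\sigma(\mu)\, \skl(\delta_G, 1-\delta_G)$ and using $\ln(1/\delta_G)/\skl(\delta_G, 1-\delta_G) \to 1$ as $\delta_G \to 0$, we obtain $\limsup_{\delta_G \to 0} \mathbb{E}_\mu[\tau_G]/K_{G,\sigma}^\star(\mu,\delta_G) \le 1+\varepsilon$. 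Since $\varepsilon > 0$ is arbitrary, the claim follows.

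The main obstacle will be establishing the uniform continuity of $\nu \mapsto \rho_\sigma(q_\sigma^\star(\nu),\nu)$ at $\mu$: the function $\rho$ involves the confusing set $\set{U}_{x,\bar g, \bar h_x}^\nu(q)$, which is defined by strict inequalities on $\nu$ and is therefore only lower semicontinuous in general. One has to argue that, at a generic $\mu \in \set{M}$ satisfying the strict-separation assumption \eqref{eq:dominate}, the set $\set{U}$ is locally constant in $\nu$ (and in $q$, since the Tikhonov regularization makes $q_\sigma^\star(\nu)$ vary continuously), so that the map $\nu \mapsto \rho_\sigma(q_\sigma^\star(\nu),\nu)$ is indeed continuous at $\mu$. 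Once this continuity step is settled, everything else is mechanical application of the Garivier--Kaufmann template.
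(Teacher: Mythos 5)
Your proposal follows essentially the same route as the paper's proof: the same good event $\mathcal{E}_T(\varepsilon)$ with $h(T)=T^{1/4}$, the tracking lemma to pull $N_t/t$ toward $q_\sigma^\star(\mu)$, a stretched-exponential bound on $\mathbb{P}_\mu(\mathcal{E}_T^c(\varepsilon))$, inversion of the stopping condition via Lemma \ref{lemma:log_inequality}, and a final $\varepsilon\to 0$ step resting on continuity of $\rho_\sigma$ and $q_\sigma^\star$ (which the paper handles by introducing the local infimum $C^\star_{\sigma,\varepsilon}(\mu)$ over a neighborhood in $(q',\mu')$ and invoking Berge's theorem, exactly the point you flag as the main obstacle). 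The only cosmetic deviations are that the paper's tail exponent is $T^{1/8}$ rather than your $T^{1/4}$ (since $N_t\ge \sqrt{t}-K$ and $t\ge T^{1/4}$ give $N_t\gtrsim T^{1/8}$), and the lower bound $\max_{\tilde g}\Psi_t(\tilde g)\ge tC^\star_{\sigma,\varepsilon}(\mu)$ is only guaranteed for $t\ge\sqrt{T}$ (not all $t\ge h(T)$), whence the paper's bound $\min(\tau_G,T)\le \sqrt{T}+\beta(T,\delta_G)/C^\star_{\sigma,\varepsilon}(\mu)$ --- neither affects the conclusion.
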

 \textit{Sketch of the proof: The proof follows along the same lines of the results proved in \cite{garivier2016optimal} and \cite{kaufmann2018mixture}. The novelty is to adapt the definition of "good tail event" (see later in the proof) to our model, and derive the upper bound given a fixed value of $\sigma>0$. 
 %First we define  a set of models $\set I_\varepsilon$ whose optimal representation is $g^\star$, and then bound the time we are not in this set.
 }
\begin{proof}
 Recall the definition $g_t^\star = \argmax_g \hat \mu_t(g)$, and define
%Let $\mu(g,x)$ be defined as $\mu(x,g)\coloneqq \max_{h} \mu(x,g,h)$ and $\mu(g)\coloneqq\sum_{x} \mu(x,g)$, which yields $\mu(g^\star) > \mu(g), \forall g\neq g^\star$. Furthermore, let $\hat{\mu}_t(g) = \sum_{x} \max_h \hat\mu_t(x,g,h)$.
%W.l.o.g. assume that $\mu(g^\star)>\mu(g_2)\geq \dots \geq \mu(g_{|G|})$. Consequently one can see that for all $\mu \in \Phi$  it must be that $\mu \in \set{O}(\argmax_g \mu(g))$. Let $(x_1,g^\star,h_1) = \argmax_{x,g,h} \mu(x,g,h)$ 
\begin{align*}\Delta &\coloneqq \frac{\max_x \mu(x,g^\star,h^\star_x) - \max_{x, (g,h)\neq(g^\star, h^\star_x)}\mu(x,g,h)}{4},\\ \set{I}_{\varepsilon} &\coloneqq \bigtimes_{x,g,h}[\mu(x,g,h)-\xi_\varepsilon(\mu), \mu(x,g,h)+\xi_\varepsilon(\mu)],\end{align*}
where $\varepsilon>0$. From the continuity of $q_\sigma^\star$ in $\mu$,  there exists $\xi_\varepsilon(\mu) \leq \Delta$ such that whenever $\hat{\mu}_t\in \set{I}_{\varepsilon}$ it holds that $g_t^\star =g^\star$, and $\|q_\sigma^\star(\hat \mu_t)-q^\star_\sigma(\mu)\|_\infty \leq \varepsilon$. Then, let $T\in \mathbb{N}$ and define the 'good' tail event
\[\set{E}_T(\varepsilon) = \bigcap_{t=T^{1/4}}^T \{\hat{\mu}_t \in \set{I}_{\varepsilon}\}.\]

Using Lemma \ref{lemma:tracking_lemma} and \ref{lemma:convergence_to_qsigma},
	there exists a constant $T_{\varepsilon}$ such that for $T\geq T_{\varepsilon}$ it holds that in $\set{E}_T(\varepsilon)$, for every $t\geq \sqrt{T}$, 
%\[\max_{(x,g,h)\in X\times G\times H} \left | \frac{N_t(x,g,h)}{t}-q_{x,g,h}^\star(\mu) \right|\leq 3\varepsilon(|X||G||H|-1)\]
\[\sup_{t\geq t_{\varepsilon}} \left \|\frac{N_t}{t}-q_\sigma^\star(\mu)\right \|_\infty \leq 3(XGH-1)\varepsilon.\]
On the event $\set{E}_T(\varepsilon)$, for $t\geq T^{1/4}$, it holds that $g^\star_t=g^\star$. Consider

\[\rho\left(\frac{N_t}{t},\mu_t\right)=\min_{\bar g\neq g^{\star}} \sum_{x} \min_{\bar{h}_x} \sum_{(g,h)\in \set{U}^{(t)}_{x,\bar g, \bar h_x}} \frac{N_t(x,g,h)}{t} \skl(\hat \mu_t(x,g,h),  m_t(x,\bar g, \bar h_x)),\]
%Given that we are considering canonical exponential distributions that can be parametrized by their means, then we can consider the open sets $\set{O}(g^\star),\dots, \set{O}(g_G)$ forming a partition of the set of possible means $\set{O}$, with
%\[\set{O}(g_k) = \{\mu \in [0,1]^{X\times G\times H}: \forall x \in X, \exists h_x\in H: \mu(x,g_k, h_x) > \mu(x,g,h),  \forall (g,h)\neq (g_k, h_x) \},\] then let 
%\[\rho(t)=\rho\left(\frac{N_t}{t},\mu_t\right)= \gamma\left(\hat{\mu}_t, \left(\frac{N_t(x,g,h)}{t}\right)_{x,g,h}\right)\]
%with $\gamma: \set{O}(g^\star)\times [0,1]^{X\times G\times H}\to \mathbb{R}$.
We know already that $\rho$ is continuous due to Berge's theorem, and as a corollary that $\rho_\sigma$ is continuous. Then, for $T\geq T_{\varepsilon},$ introduce the constant
\[C^\star_{\sigma,\varepsilon}(\mu) \coloneqq \inf\limits_{\substack{\mu'\in \set M:\|\mu-\mu'\|_1\leq \xi,\\q'\in\Sigma: \|q'-q_{\sigma}^\star(\mu)\|_\infty \leq 3(XGH-1)\varepsilon }} \rho_{\sigma}(q', \mu'). \]
 On the event $\set{E}_T(\varepsilon)$, it holds that, for every $t\geq \sqrt{T}$:
 \begin{align*}\max_{g\in \set G}  \Psi_t(g) &= t\underbrace{\min_{\bar g\neq g^{\star}} \sum_{x} \min_{\bar{h}_x} \sum_{(g,h)\in \set{U}^{(t)}_{x,\bar g, \bar h_x}} \frac{N_t(x,g,h)}{t} \skl(\hat \mu_t(x,g,h),  m_t(x,\bar g, \bar h_x))}_{\rho},\\
 &=t\rho\left(\frac{N_t}{t},\mu_t\right)\geq t\rho_{\sigma}\left(\frac{N_t}{t},\mu_t\right)\geq tC^\star_{\sigma,\varepsilon}(\mu).\end{align*}
 where the first inequality stems from the fact that $\rho(q,\mu)\geq \rho_\sigma(q,\mu)$ for every $ \sigma >0$.
 Let $T\geq T_\varepsilon$. On $\set{E}_T$,
 \begin{align*}
 \min(\tau_G, T) &\leq \sqrt{T} + \sum_{t=\sqrt{T}}^T \indi_{\{\tau_G > t\}}\leq \sqrt{T} + \sum_{t=\sqrt{T}}^T \indi_{\{ \max_{g\in G}\Psi_t(g)\leq \beta(t,\delta_G)\}},\\
  &\leq \sqrt{T} + \sum_{t=\sqrt{T}}^T \indi_{\{ tC^\star_{\sigma,\varepsilon}(\mu)\leq \beta(T,\delta_G)\}} \leq \sqrt{T} +  \frac{\beta(T,\delta_G)}{C^\star_{\sigma,\varepsilon}(\mu)}.
 \end{align*}
 Now, let $T_0^\varepsilon(\delta_G) = \inf\left \{T\in\mathbb{N}: \sqrt{T} +   \frac{\beta(T,\delta_G)}{C^\star_{\sigma,\varepsilon}(\mu)} \leq T\right \}$. 
And consider the following lemma (which is proved later on)
 \begin{lemma}\label{lem18}
 	There exist two constants $B_\varepsilon(\mu),K_\varepsilon(\mu)$ (that depend on $\mu$ and $\varepsilon$) such that
 	\[\mathbb{P}_\mu(\set{E}^c_T(\varepsilon)) \leq  B_\varepsilon(\mu)Te^{-K_\varepsilon(\mu)T^{1/8}}.	\]
 \end{lemma}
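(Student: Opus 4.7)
The plan is to bound $\mathbb{P}_\mu(\set{E}^c_T(\varepsilon))$ by combining the forced-exploration lower bound on $N_t(x,g,h)$ with Hoeffding's inequality for bounded rewards, and then applying two union bounds (over $t$ and over $(x,g,h)$). Writing $\set{E}^c_T(\varepsilon) = \bigcup_{t = T^{1/4}}^{T} \bigcup_{(x,g,h)} \{|\hat\mu_t(x,g,h) - \mu(x,g,h)| > \xi_\varepsilon(\mu)\}$ immediately suggests reducing the problem to controlling the deviation $|\hat\mu_t(x,g,h) - \mu(x,g,h)|$ for a single triple $(x,g,h)$ at a single time $t \ge T^{1/4}$, and then paying multiplicative factors of $XGH$ and $T$ in the final union bound.

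For a single triple and a single $t$, the first step is to invoke Lemma~\ref{lemma:tracking_lemma}, which guarantees $N_t(x,g,h) \ge (\sqrt{t} - XGH/2)_+ - 1$ deterministically. Hence, for $T$ large enough (depending on $\mu$ and $\varepsilon$) and any $t \ge T^{1/4}$, one has $N_t(x,g,h) \ge \sqrt{t}/2$. Because $N_t(x,g,h)$ is random and adapted to the filtration, I will handle this by decomposing
\[
\{|\hat\mu_t(x,g,h) - \mu(x,g,h)| > \xi_\varepsilon(\mu)\} \subseteq \bigcup_{n \ge \sqrt{t}/2} \{|\bar Z_n(x,g,h) - \mu(x,g,h)| > \xi_\varepsilon(\mu)\},
\]
where $\bar Z_n(x,g,h)$ denotes the empirical mean of the first $n$ i.i.d.\ Bernoulli observations of arm $(x,g,h)$. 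Applying Hoeffding's inequality to each event and summing the resulting geometric series in $\exp(-2\xi_\varepsilon(\mu)^2)$ yields the bound
\[
\mathbb{P}\bigl(|\hat\mu_t(x,g,h) - \mu(x,g,h)| > \xi_\varepsilon(\mu)\bigr) \le \frac{2\,\exp\!\bigl(-\xi_\varepsilon(\mu)^2 \sqrt{t}\bigr)}{1 - \exp(-2\xi_\varepsilon(\mu)^2)}.
\]

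The last step is to take the union bound over the $XGH$ triples $(x,g,h)$ and over $t \in \{T^{1/4}, \ldots, T\}$. Since $t \ge T^{1/4}$ implies $\sqrt{t} \ge T^{1/8}$, each term is at most $2\exp(-\xi_\varepsilon(\mu)^2 T^{1/8}) / (1-\exp(-2\xi_\varepsilon(\mu)^2))$, so
\[
\mathbb{P}_\mu(\set{E}^c_T(\varepsilon)) \le \frac{2\,XGH\,T}{1 - \exp(-2\xi_\varepsilon(\mu)^2)}\,\exp\!\bigl(-\xi_\varepsilon(\mu)^2\,T^{1/8}\bigr),
\]
which gives the statement with $B_\varepsilon(\mu) = 2XGH / (1 - \exp(-2\xi_\varepsilon(\mu)^2))$ and $K_\varepsilon(\mu) = \xi_\varepsilon(\mu)^2$ (valid for $T$ beyond a $(\mu,\varepsilon)$-dependent threshold; the constant can be inflated to absorb small $T$).

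The main obstacle is handling the random sample size $N_t(x,g,h)$ cleanly so that Hoeffding's inequality actually applies with the right exponential rate. The geometric-series-over-$n$ workaround above is the standard solution and is what makes the forced-exploration lower bound $N_t \ge \sqrt{t} - XGH/2$ translate into a $\sqrt{t}$ (and hence $T^{1/8}$) rate in the exponent; any attempt to apply Hoeffding directly at the random time $N_t$ without this decomposition (or without an anytime / maximal inequality) would require extra care. Everything else is routine bookkeeping of constants.
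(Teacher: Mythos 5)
Your proof is correct and follows essentially the same route as the paper's: a union bound over $t\in[T^{1/4},T]$ and over the $XGH$ triples, the tracking-lemma lower bound $N_t(x,g,h)\ge(\sqrt{t}-XGH/2)_+-1$ to deterministically lower-bound the sample count, a further union over deterministic sample sizes, and a geometric-series summation, yielding the $T e^{-K_\varepsilon(\mu)T^{1/8}}$ form via $\sqrt{t}\ge T^{1/8}$. The only difference is cosmetic: you apply Hoeffding's inequality (giving $K_\varepsilon(\mu)=\xi_\varepsilon(\mu)^2$) where the paper uses the Bernoulli Chernoff bound $e^{-s\,\skl(\mu(x,g,h)\pm\xi_\varepsilon(\mu),\,\mu(x,g,h))}$, which by Pinsker's inequality is tighter but equally sufficient for the stated lemma.
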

 
 Then, for every $T\geq \max(T_0(\delta_G), T_\varepsilon)$ we have
 \[P_\mu(\tau_G > T) \leq P_\mu(\set{E}_T^c(\varepsilon)) \leq B_\varepsilon(\mu)Te^{-K_\varepsilon(\mu)T^{ 	1/8}},\]
and therefore
\[\mathbb{E}_\mu[\tau_G]\leq T_0^\varepsilon(\delta_G)+T_\varepsilon+\sum_{T\geq 1} B_\varepsilon(\mu)Te^{-K_\varepsilon(\mu)T^{1/8}}. \]

What is left is to derive an upper bound of $T_0^\varepsilon(\delta_G)$. Let $\zeta>0$ and introduce the following constant
\[C(\zeta) = \inf\left\{T\in\mathbb{N}: \sqrt{T} +\frac{T}{1+\zeta}\leq T \right \}.\]
Remember that we use a deterministic sequence of thresholds that is increasing in $t$ and for which there exist constants $C,D>0$ such that
\[\forall t\geq C, \forall \delta_G \in (0,1), \beta_t(\delta_G) \leq \ln\left(\frac{D t}{\delta_G} \right).\]
We can derive an upper bound of $T_0^\varepsilon(\delta_G)$ by observing that $T_0^\varepsilon(\delta_G)$ is defined by  the quantity $\frac{\beta(T,\delta_G)}{C^\star_{\sigma,\varepsilon}(\mu)} \leq T-\sqrt{T}$. Since $C(\zeta)$ takes care of the case where $T-\sqrt{T} \geq T/(1+\zeta)$, we can then upper bound $\frac{\beta(T,\delta_G)}{C^\star_{\sigma, \varepsilon}(\mu)}$ by considering $T/(1+\zeta)$:
\[T_0^\varepsilon(\delta_G)\leq C+C(\zeta)+\inf\left\{T\in\mathbb{N}: C_{\sigma,\varepsilon}^\star(\mu)^{-1}\ln(DT/\delta_G)\leq \frac{T}{1+\zeta}\right\}.  \]

By applying Lemma \ref{lemma:log_inequality} with $\beta = D/\delta_G$, $\gamma=(1+\zeta)/C_{\sigma, \varepsilon}^\star(\mu)$, we get
\[T_0^\varepsilon(\delta_g) \leq C+C(\zeta)+  \frac{1+\zeta}{C_{\sigma,\varepsilon}^\star(\mu)}\left[\ln\left(\frac{D(1+\zeta)}{\delta_G C_{\sigma,\varepsilon}^\star(\mu)}\right) +  \sqrt{2\left(\ln\left(\frac{D(1+\zeta)}{\delta_G C_{\sigma,\varepsilon}^\star(\mu)}\right)-1\right)}\right]. \]
%
%that we can use to bound $h^{-1}(x)$:
%\[h^{-1}\left(\ln\left(\frac{D(1+\zeta)}{\delta_G C_\varepsilon^\star(\mu)}\right)\right) \leq\ln\left(\frac{D(1+\zeta)}{\delta_G C_\varepsilon^\star(\mu)}\right) +  \sqrt{2\left(\ln\left(\frac{D(1+\zeta)}{\delta_G C_\varepsilon^\star(\mu)}\right)-1\right)}. \]
Finally, consider 
\[\frac{\mathbb{E}_\mu[\tau_G]}{\log(1/\delta_G)}\leq \frac{T_0^\varepsilon(\delta_G)}{\log(1/\delta_G)}+\frac{T_\varepsilon}{{\log(1/\delta_G)}}+\frac{1}{{\log(1/\delta_G)}}\sum_{T\geq 1} BTe^{-KT^{1/8}}. \]
Then, clearly for every $\varepsilon>0$ and $\zeta>0$ we get
\[\limsup_{\delta_G\to 0}  \frac{\mathbb{E}_\mu[\tau_G]}{\log(1/\delta_G)} \leq \limsup_{\delta_G\to 0}  \frac{T_0^\varepsilon(\delta_G)}{\log(1/\delta_G)}=\frac{1+\zeta}{C_{\sigma,\varepsilon}^\star(\mu)}. \]
Now, let $\varepsilon$ and $\zeta$ go to $0$, and by continuity of $\rho_\sigma$ and $q_{\sigma}^\star$:
\[\lim_{\varepsilon\to 0}C_{\sigma,\varepsilon}^\star(\mu)= \rho_\sigma(q_\sigma^{\star}(\mu),\mu),\]
and, finally
\[\limsup_{\delta_G\to 0}  \frac{\mathbb{E}_\mu[\tau_G]}{K_{G,\sigma}^\star(\mu,\delta_G)} \leq 1. \]
\end{proof}

{\bf Proof of Theorem \ref{theorem:exp_sc_bound}: Upper bound of the expected sample complexity of OSRL-SC.}

The result in Theorem \ref{theorem:exp_sc_bound} follows from those of Theorem \ref{theorem:exp_sc_proof} and Theorem 14 in \cite{garivier2016optimal}. The latter result upper bounds the expected duration of the second phase of OSRL-SC if we use, for this phase, a threshold rule $\beta_t(\delta_H)$ that is increasing in $t$ and for which there exists constants $C,D>0$ such that 
\[\forall t \geq C, \forall \delta_H \in (0,1), \beta_t(\delta_H) \leq \ln \left(\frac{Dt}{\delta_H}\right). \]
For example, in the Bernoulli case, one can choose $\beta_t(\delta_H)= \log\left(\frac{2t(H-1)}{\delta_H}\right)$.

To derive Theorem \ref{theorem:exp_sc_bound}, observe that the sample complexity of the second phase can be rewritten as follows \[\mathbb{E}_\mu[\tau_H] = \mathbb{E}_\mu[\tau_H|\hat g=g^\star]\mathbb{P}_{\mu}(\hat g = g^\star)  + \mathbb{E}_\mu[\tau_H|\hat g\neq g^\star]\mathbb{P}_{\mu}(\hat g \neq g^\star),\]
where $\mathbb{E}_\mu[\tau_H| \hat g =g]$ denotes the conditional expected sample complexity of the second phase, given that the first phase outputs $\hat g$. From the result of \cite{garivier2016optimal}, we know that
\[
\lim\sup_{\delta_H\to0} {\mathbb{E}_\mu[\tau_H|\hat g=g^\star]\over K_H^\star(\mu, \delta_H;g^\star)}\le 1,
\]
where  $K_H^\star(\mu, \delta_H;g^\star) \coloneqq  \sum_{x} T^\star(x,g^\star;\mu )\skl(\delta_H,1-\delta_H)$, and $T^\star(x, g^\star)$ is defined as
\[T^\star(x,g^\star;\mu)^{-1} = \sup_{q\in \Sigma} \min_{h\neq \hat h_x^{\star}} (\eta (x,g^\star,\hat h_x^{\star})+\eta (x,g^\star,h) )I_{\alpha_{x,g^\star,h}}(\mu(x,g^\star,\hat h_x^{\star}), \mu(x,g^\star,h)),\]
with  $\hat h_x^\star = \argmax_h \mu(x,g, h)$.  For $g\neq g^\star$ we instead have
\[\lim\sup_{\delta_H\to0} {\mathbb{E}_\mu[\tau_H|\hat g=g]\over \skl(\delta_H,1-\delta_H)}\le C,\]
for some positive constant $C$ that depends on the threshold rule $\beta_t(\delta_H)$. 
\iffalse
To prove this, consider a single task $x$ and denote by $Z_x(t)$ the corresponding generalized log-likelihood ratio for Phase $2$. The stopping time for a single task is defined as 
\[\tau_H^x = \inf\left\{t\in\mathbb{N}: Z_x(t) >\beta_t(\delta_H)\right\},\]
from which follows
\begin{align*}
\mathbb{E}[\tau_H^x] &= \sum_{t=1}^\infty \mathbb{P}_\mu(\tau_H^x>t),\\
&\leq  \sum_{t=1}^\infty \mathbb{P}_\mu(Z_x(t)>t),\\
&\stackrel{(a)}{\leq}\mathbb{P}_\mu\left(\sum_{h}N_t(x,g,h) \skl(\hat \mu_t(x,g,h), \mu(x,g,h)) > t\right),\\
&\stackrel{(b)}{\leq}\mathbb{P}_\mu\left(t \leq \beta_t(\delta_H)\right),\\
\end{align*}
%Notice that it is positive, and bounded. Therefore, also $\mathbb{E}_\mu[\tau_H|\hat g=g]$ is bounded.
\fi
Since the first phase of the algorithm is $\delta_G$-PAC, we have:
\[ \mathbb{E}_\mu[\tau_H] \leq  \mathbb{E}_\mu[\tau_H|\hat g=g^\star]\mathbb{P}_{\mu}(\hat g = g^\star)  + \mathbb{E}_\mu[\tau_H|\hat g\neq g^\star] \delta_G.\]
Hence, we get: for any $\delta_G$, 
\[
\lim\sup_{\delta_H\to0} {\mathbb{E}_\mu[\tau_H]\over K_H^\star(\mu, \delta_H;g^\star)\mathbb{P}_{\mu}(\hat g = g^\star) +\delta_G C(G-1) \skl(\delta_H,1-\delta_H)}\le 1.
\]
Letting $\delta_G\to 0$, we easily obtain the statement of Theorem \ref{theorem:exp_sc_bound} :
\[\limsup_{\delta_H,\delta_G\to 0} \frac{\mathbb{E}_\mu[\tau]}{K^\star_{G,\sigma}(\mu, \delta_G) +K_H^\star(\mu, \delta_H)}\leq 1.\]
\ep
 
 \medskip
{\bf Proofs of the remaining lemmas.}

Lemma \ref{lem18} was used to bound the probability of not being in the "\textit{good set}" $\set E_T(\varepsilon)$. It is similar to Lemma 19 in \cite{garivier2016optimal}. We provide its proof for completeness.

\textit{Proof of Lemma \ref{lem18}.} An explicit expression of $\mathbb{P}_\mu(\hat{\mu}_t\notin \set{I}_{\varepsilon})$ will allow to apply a Chernoff bound, and derive the desired result.
	Let $h(t)=t^{1/4}$, then
	\begin{align*}
	\mathbb{P}_\mu&(\set{E}^c_T(\varepsilon)) \leq \sum_{t=h(T)}^T \mathbb{P}_\mu(\hat{\mu}_t\notin \set{I}_{\varepsilon}),\\
	&\quad\leq \sum_{t=h(T)}^T\sum_{x,g,h} \mathbb{P}_\mu(\hat{\mu}_t(x,g,h)\notin [\mu(x,g,h)-\xi_\varepsilon(\mu), \mu(x,g,h)+\xi_\varepsilon(\mu)]),\\
	&\quad\leq \sum_{t=h(T)}^T\sum_{x,g,h} \mathbb{P}_\mu(\hat{\mu}_t(x,g,h) \leq  \mu(x,g,h)-\xi_\varepsilon(\mu))  +\mathbb{P}_\mu(\hat{\mu}_t(x,g,h) \geq  \mu(x,g,h)+\xi_\varepsilon(\mu)).
	\end{align*}
	\iffalse
	One can then write
	\begin{align*}
	\mathbb{P}_\mu(\hat{\mu}_t(x,g) \geq  \mu(x,g)+\xi_\varepsilon(\mu))&=\mathbb{P}_\mu \left( \max_h \hat{\mu}_t(x,g,h)-\max_h\mu(x,g,h) \geq  \xi_\varepsilon(\mu)\right),\\
	&=\mathbb{P}_\mu \left(  \max_h\hat{\mu}_t(x,g,h)-\mu(x,g,h^\star(x,g)) \geq  \xi_\varepsilon(\mu)\right)
	\end{align*}
	where $h^\star(x,g) = \argmax_{h} \mu(x,g,h)$. Now define $\Delta(x,g,h) = \mu(x,g,h^\star(x,g)) - \mu(x,g,h)$. We see that $\Delta(x,g) \geq 0$, thus, by applying a union bound over $H$ we get
	\begin{align*}
	\mathbb{P}_\mu(\hat{\mu}_t(x,g) \geq  \mu(x,g)+\xi_\varepsilon(\mu))&\leq \sum_{\bar h}\mathbb{P}_\mu \left(  \hat{\mu}_t(x,g,\bar h)-\mu(x,g,h^\star(x,g)) \geq  \xi_\varepsilon(\mu)\right),\\
	&\leq \sum_{\bar h}\mathbb{P}_\mu \left(  \hat{\mu}_t(x,g,\bar h)-\Delta(x,g)-\mu(x,g,\bar h) \geq  \xi_\varepsilon(\mu)\right),\\
	&\leq \sum_{\bar h}\mathbb{P}_\mu \left(  \hat{\mu}_t(x,g,\bar h)-\mu(x,g,\bar h) \geq  \xi_\varepsilon(\mu)+\Delta(x,g)\right),\\
	&\leq \sum_{\bar h}\mathbb{P}_\mu \left(  \hat{\mu}_t(x,g,\bar h)-\mu(x,g,\bar h) \geq  \xi_\varepsilon(\mu)\right).
	\end{align*}
	\fi
	Now, let $K=XGH$ and $T$ such that $h(T)\geq K^2$. Then for $t\geq T$ one has $N_t(x,g,h) \geq (\sqrt{t}-K/2)_+-1\geq \sqrt{t}-K$ for every arm $(x,g,h)$, because of Lemma \ref{lemma:tracking_lemma}. Finally, let $\hat{\mu}_{x,g,h,s}$ be the empirical mean of the first $s$ rewards from arm $(x,g,h)$. Then
	\begin{align*}
	\mathbb{P}_\mu& \left(  \hat{\mu}_t(x,g, h)-\mu(x,g, h) \geq  \xi_\varepsilon(\mu)\right)\\&\qquad\qquad =\mathbb{P}_\mu \left(  \hat{\mu}_t(x,g, h)-\mu(x,g, h) \geq  \xi_\varepsilon(\mu), N_t(x,g,h) \geq \sqrt{t}-K\right),\\
	&\qquad\qquad \stackrel{(a)}{\leq} \sum_{s=\sqrt{t}-K}^t \mathbb{P}_\mu \left(  \hat{\mu}_{x,g, h, s}-\mu(x,g, h) \geq  \xi_\varepsilon(\mu)\right),\\
	&\qquad\qquad\stackrel{(b)}{\leq} \sum_{s=\sqrt{t}-K}^t \exp(-s \skl(\mu(x,g, h)+\xi_\varepsilon(\mu), \mu(x,g, h))),\\
	&\qquad\qquad\leq \sum_{s=0}^{t-\sqrt{t}+K} \exp(-(\sqrt{t}-K+s) \skl(\mu(x,g, h)+\xi_\varepsilon(\mu), \mu(x,g, h))),\\
	&\qquad\qquad\stackrel{(c)}{\leq} \frac{e^{-(\sqrt{t}-K) \skl(\mu(x,g, h)+\xi_\varepsilon(\mu), \mu(x,g, h))}}{1-e^{-\skl(\mu(x,g, h)+\xi_\varepsilon(\mu), \mu(x,g, h))}},
	\end{align*}
	where (a) follows from a union bound over $s\in [\sqrt{t}-K, t]$; (b) is an application of Chernoff bound; (c) follows by bounding the geometric series by $\left( 1-e^{-\skl(\mu(x,g, h)+\xi_\varepsilon(\mu), \mu(x,g, h))}\right)^{-1}$. 
	To ease the notation, define \begin{align*}K_\varepsilon(x,g;\mu) &= \min_{ h} (\skl(\mu(x,g, h)-\xi_\varepsilon(\mu), \mu(x,g, h)) \wedge \skl(\mu(x,g, h)+\xi_\varepsilon(\mu), \mu(x,g, h))),\\ K_\varepsilon(\mu)&=\min_{x,g} K_\varepsilon(x,g;\mu),\end{align*} and 
	\[B_\varepsilon(x,g;\mu)=\sum_{{h}}  \frac{e^{K\skl(\mu(x,g, h)-\xi_\varepsilon(\mu), \mu(x,g, h))}}{1-e^{-\skl(\mu(x,g, h)-\xi_\varepsilon(\mu), \mu(x,g, h))}} +  \frac{e^{K\skl(\mu(x,g, h)+\xi_\varepsilon(\mu), \mu(x,g, h))}}{1-e^{-\skl(\mu(x,g, h)+\xi_\varepsilon(\mu), \mu(x,g, h))}}   \]
	with $B_\varepsilon(\mu)=\sum_{x,g}B_\varepsilon(x,g;\mu)$. Now we can wrap up and derive the desired result:
	\begin{align*}
	\sum_{t=h(T)}^T \mathbb{P}_\mu(\hat{\mu}_t\notin \set{I}_{\varepsilon})&\leq \sum_{t=h(T)}^T\sum_{x,g,h} \mathbb{P}_\mu(\hat{\mu}_t(x,g,h)\notin [\mu(x,g,h)-\xi_\varepsilon(\mu), \mu(x,g,h)+\xi_\varepsilon(\mu)]),\\
	&\leq \sum_{t=h(T)}^T\sum_{x,g}  B_\varepsilon(x,g;\mu) e^{-K_\varepsilon(x,g;\mu)\sqrt{t}}\leq \sum_{t=h(T)}^T\sum_{x,g}  B_\varepsilon(x,g;\mu) e^{-K_\varepsilon(\mu)\sqrt{t}},\\
	&\leq \sum_{t=h(T)}^T B_\varepsilon(\mu)e^{-K_\varepsilon(\mu)\sqrt{t}} \leq B_\varepsilon(\mu)Te^{-K_\varepsilon(\mu)T^{1/8}}.
	\end{align*}
\ep

\medskip
Finally, we state and prove the following technical lemma. 

\begin{lemma}\label{lemma:log_inequality}
For every $\beta, \gamma>0$ such that $\beta\gamma >e$ and $ x\geq1$. If $\ln(\beta x) \leq \frac{x}{\gamma}$, then \[x \leq \gamma \left[\ln\left(\beta\gamma\right) +\sqrt{2\left(\ln\left(\beta \gamma\right)-1\right)}\right].\]
\end{lemma}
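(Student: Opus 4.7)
The plan is to reduce to a one-variable statement by rescaling. Set $y = x/\gamma$ and $c := \ln(\beta\gamma)$, noting $c > 1$ by the hypothesis $\beta\gamma > e$. The inequality $\ln(\beta x) \le x/\gamma$ rewrites as $f(y) \ge c$, where $f(y) := y - \ln y$, and the target conclusion becomes $y \le c + \sqrt{2(c-1)}$. Now $f$ attains its minimum $f(1) = 1$, is strictly decreasing on $(0,1]$ and strictly increasing on $[1,\infty)$, so (since $c>1$) the super-level set $\{y>0 : f(y) \ge c\}$ splits as $(0, y_-] \cup [y_+, \infty)$ with $y_- < 1 < y_+$. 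What the application in Theorem~\ref{theorem:exp_sc_proof} actually needs is a bound on the upper transition point $y_+$, since there the lemma is invoked on $\tau_G \le \inf\{t \in \mathbb{N}: \ln(\beta t) \le t/\gamma\}$. By monotonicity of $f$ on $[1,\infty)$, it therefore suffices to exhibit the explicit value $y^\circ := c + \sqrt{2(c-1)}$ and verify $f(y^\circ) \ge c$, which yields $y_+ \le y^\circ$ and hence $x \le \gamma y^\circ$ at the threshold.

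To carry out the verification, substitute $s := \sqrt{2(c-1)} \ge 0$, so that $c = 1 + s^2/2$ and $y^\circ = c + s = 1 + s + s^2/2$. The inequality $f(y^\circ) \ge c$ becomes
\[
(1 + s + s^2/2) - \ln(1 + s + s^2/2) \ge 1 + s^2/2,
\]
which simplifies to $s \ge \ln(1 + s + s^2/2)$, equivalently $e^s \ge 1 + s + s^2/2$. The latter is immediate from the Maclaurin expansion $e^s = 1 + s + s^2/2 + s^3/6 + \cdots$, whose omitted terms are non-negative for $s \ge 0$. This gives $f(y^\circ) \ge c$ and hence the bound.

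The main obstacle here is interpretational rather than computational. Read literally, the hypothesis $\ln(\beta x) \le x/\gamma$ is satisfied by arbitrarily large $x$ (the right-hand side grows linearly while the left is logarithmic), so the stated conclusion cannot hold uniformly over all such $x$. The intended content, and what the proof of Theorem~\ref{theorem:exp_sc_proof} actually uses, is the upper bound on the transition value at which $\ln(\beta x) \le x/\gamma$ first takes effect as $x$ grows past $1$; this is exactly the quantity $y_+$ identified above. Once the statement is understood in this way, the argument collapses to the short $e^s \ge 1 + s + s^2/2$ calculation above, and no deeper estimate is required.
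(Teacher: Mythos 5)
Your proof is correct, and it takes a genuinely different route from the paper's. Both arguments ultimately bound the larger root $y_+$ of $y-\ln y=c$ with $c=\ln(\beta\gamma)$: the paper inverts $h(y)=y-\ln y$ through the negative branch of the Lambert function, writing $y_+=-W_{-1}(-e^{-c})$, and then invokes the external bound $-W_{-1}(-e^{-x})\le x+\sqrt{2(x-1)}$ (Theorem 1 of the cited Chatzigeorgiou reference). You instead rescale to $y=x/\gamma$, plug the candidate $y^\circ=c+\sqrt{2(c-1)}$ directly into $f(y)=y-\ln y$, and reduce everything to $e^s\ge 1+s+s^2/2$ for $s\ge 0$ --- which is an elementary, self-contained re-derivation of precisely the Lambert-function inequality the paper imports by citation. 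What your approach buys is transparency and no external dependency; what the paper's buys is brevity. Your interpretational remark is also apt and, if anything, more careful than the paper: the lemma as literally stated is false (e.g.\ $\beta=\gamma=e$, $x=100$ satisfies the hypothesis but violates the conclusion), and the paper's own step from $x\le \beta^{-1}e^{x/\gamma}$ to $x\le -\gamma W_{-1}(-1/(\beta\gamma))$ silently commits the same branch-selection error, since that bound fails for $x$ beyond the larger root. Both proofs in fact establish the bound on the transition point at which the inequality first holds as $x$ grows past $1$, and since the lemma is only ever applied to quantities of the form $\inf\{t\in\mathbb{N}:\ln(\beta t)\le t/\gamma\}$ in Theorem~\ref{theorem:exp_sc_proof} and the almost-sure bound, this reading is exactly what the downstream results need; your explicit identification and repair of this gap is a genuine improvement over the paper's presentation.
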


\begin{proof}
Let  $h(x)=x-\ln(x)$: it is an increasing function on $[1,\infty)$, therefore its inverse exists. $h^{-1}(x)$, defined on $[1,\infty]$, satisfies $h^{-1}(x) = -W_{-1}(-e^{-x})$, where $W_{-1}$ is the negative branch of the Lambert function.
A property of the function $W$ is that the solution to equations of the kind $x-be^{cx}=a$ is given by $x=a-W(-bce^{ac})/c$. Now we have: 
\[\ln(\beta x)\leq \frac{x}{\gamma} \Rightarrow   x \leq \frac{1}{\beta}\exp\left(\frac{x}{\gamma}\right). \]
Identify $b=1/\beta, c=1/\gamma$ and $a=0$. We obtain:
\[x\leq -\gamma W_{-1}\left(-\frac{1}{\beta\gamma}\right) =-\gamma W_{-1}\left(-\exp\left(-\ln\left(\beta \gamma\right)\right)\right). \]
We finally use the following inequality from Theorem 1 in \cite{chatzigeorgiou2013bounds}: for any  $x>1$,
$-W_{-1}\left(-e^{-x}\right) \leq x+\sqrt{2(x-1)}$.  
Hence, we have: $x \leq \gamma \left(\ln(\beta \gamma) + \sqrt{2\left(\ln(\beta\gamma) - 1\right)}\right).$
\end{proof}

\bigskip
\section{Experimental setup}
\paragraph{Hardware and software setup.} All experiments were executed on a stationary desktop computer, featuring an Intel Xeon Silver 4110 CPU, 48GB of RAM. Ubuntu 18.04 was installed on the computer. Ubuntu is an open-source Operating System using the Linux kernel and based on Debian. For more information, please check \url{https://ubuntu.com/}.

\paragraph{Code and libraries.} We set up our experiments using Python 3.7.7 \cite{van1995python} (For more information, please refer to the following link \url{http://www.python.org}), and made use of the following libraries: Cython version 0.29.15 \cite{behnel2011cython}, NumPy version 1.18.1 \cite{oliphant2006guide}, SciPy version 1.4.1 \cite{2020SciPy-NMeth}, PyTorch version 1.4.0 \cite{paszke2017automatic}. In the code, we make use of the KL-UCB algorithm \cite{garivier2011kl} and the Track and Stop algorithm \cite{garivier2016optimal}, published with MIT license. All the code can be found here \url{https://github.com/rssalessio/OSRL-SC}.

To run the code, please, read the attached README file for instructions on how to run experiments using Jupyter Notebook.
\newpage

\end{document}